\documentclass{article}


\usepackage[nonatbib, final]{neurips_2023}
\usepackage{macros}
\usepackage{bbm}




\usepackage[utf8]{inputenc} 
\usepackage[T1]{fontenc}    
\usepackage{hyperref}       
\usepackage{url}            
\usepackage{booktabs}       
\usepackage{amsfonts}       
\usepackage{nicefrac}       
\usepackage{microtype}      
\usepackage{xcolor}         
\usepackage{enumitem}       
\usepackage{todonotes}
\usepackage{thm-restate}
\usepackage{algorithm2e}
\usepackage{adjustbox}
\usepackage{listings}
\usepackage{listing}
\usepackage{tikz}
\usetikzlibrary{positioning, arrows.meta}
\usepackage[frozencache,cachedir=.]{minted}
\usepackage{wrapfig}
\usepackage{bm}

\title{On Learning Latent Models with Multi-Instance Weak Supervision}

\author{%
  Kaifu Wang\\
  University of Pennsylvania \\
  \texttt{kaifu@sas.upenn.edu} \\
  \And
  Efthymia Tsamoura\\
  Samsung AI \\ 
  \texttt{efi.tsamoura@samsung.com} \\
  \And
  Dan Roth \\
  University of Pennsylvania \\
  \texttt{danroth@seas.upenn.edu} \\
}

\begin{document}

\maketitle

\begin{abstract}
We consider a weakly supervised learning scenario where the supervision signal is generated by a transition function $\sigma$ of labels associated with multiple input instances.
We formulate this problem as \emph{multi-instance Partial Label Learning (multi-instance PLL)}. Our problem is an extension to the standard PLL problem and is met in different fields, including latent structural learning and neuro-symbolic integration.
Despite the existence of many learning techniques, limited theoretical analysis has been dedicated to this problem.
In this paper, we provide the first theoretical study of multi-instance PLL with possibly an unknown transition $\sigma$. 
Our main contributions are as follows.
First, we propose a necessary and sufficient condition for the learnability of the problem. 
This condition nontrivially generalizes and relaxes the existing \emph{small ambiguity degree} in PLL literature since we allow the transition to be deterministic.
Second, we derive Rademacher-style error bounds based on a top-$k$ surrogate loss that is widely used in the neuro-symbolic literature.
Furthermore, we conclude with empirical experiments for learning under unknown transitions. 
The empirical results align with our theoretical findings, exposing also the issue of scalability in the weak supervision literature.
\end{abstract}
\section{Introduction} \label{section:introduction}

Consider the scenario in Figure \ref{fig:learning-scenario}, where a learner aims to learn one or more classifiers ${f_1, \dots, f_n}$,
each of which maps an instance $x$ to its corresponding label $y$. The learner is given training examples, each of which consists of a \emph{vector} of $M>1$ instances ${\_x = (x_1, \dots, x_M)}$, and each $x_i$ is processed by one of the $f_j$'s. Differently from supervised learning, the gold labels $\_y=(y_1,\dots,y_M)$ of $\_x$ are \emph{hidden};
instead, the learner is provided with a \textit{weak} label $s$ which is produced by applying
a \emph{transition function}\footnote{We use the term transition function (or transition in short) instead of simply function, due to the relationship between our setting and learning with partial labels literature \cite{proper-losses-pll,proper-losses-pll-pkdd}.} $\sigma$ to the gold labels $\_y$.
The transition $\sigma$ itself may be unknown to the learner.

The above weak supervision setting has been a topic of active research in NLP \cite{relaxed-supervision,pmlr-v48-raghunathan16,spigot1,spigot2,word-problem1,word-problem2,paired-examples}. Recently, it has received renewed attention as it is met in \emph{neuro-symbolic} frameworks \cite{DBLP:books/daglib/0007534}, i.e., frameworks that integrate inference and learning over neural models with inference and learning over symbolic models, such as logical theories \cite{DBLP:journals/corr/abs-1907-08194,wang2019satNet,ABL,neurasp,neurolog,deepproblog-approx,scallop,iclr2023}. The benefits of this setting over architectures that approximate the $f_i$'s and $\sigma$ via an end-to-end neural model \cite{PCO} are (i) the ability to reuse the latent models, something particularly useful in NLP \cite{spigot1,spigot2}, (ii) higher accuracy in multiple tasks, e.g., NLP \cite{latent-sp-survey} and visual question answering \cite{scallop}, (iii) greater interpretability and (iv) the ability to encode prior knowledge via $\sigma$.
Below, we present an example of our learning setting in neuro-symbolic learning.    
\begin{example}[SUM2] \label{example:motivating}
    We aim to learn an MNIST classifier $f$. Differently from supervised learning, we use training examples of the form ${(x_1,x_2,s)}$, where $x_1$ and $x_2$ are MNIST digits and $s$ is their sum. The above implies 
    a transition function ${\sigma: \{0,\dots,9\} \times \{0,\dots,9\} \rightarrow \{0,\dots,18\}}$, s.t. ${\sigma(y,y') = y+y'}$. The target sum $s$ restricts the space of labels that can be assigned to the input pair of digits ${(x_1,x_2)}$, e.g., if $s = 2$, then the only combinations of labels that abide by the constraint that the sum is 2 are (2,0), (1,1) and (0,2).
    This problem is referred to as \textit{SUM2} in literature \cite{scallop, DBLP:journals/corr/abs-1907-08194}.
\end{example}

\begin{wrapfigure}[14]{r}{0.45\textwidth}
\centering
\vspace{-\intextsep}
    \hspace*{-.75\columnsep}
\begin{tikzpicture}[node distance=0.2cm, ]
    \tikzstyle{input} = [circle, minimum size=0.5cm, draw=black!0]
    \tikzstyle{function} = [rectangle, minimum size=0.5cm, draw=black!100, fill=black!15]
    \tikzstyle{dots} = [circle, minimum size=0.5cm, draw=black!0]

    \node[input] (x1) {$x_1$};
    \node[input, below=0.2cm of x1] (x2) {$x_2$};
    \node[dots, below=0.1cm of x2] (dots) {\small $\vdots$};
    \node[input, below=0.05cm of dots] (xm) {$x_M$};
    
    \node[function, right=0.5cm of x1] (f1) {$f_1$};
    \node[function, right=0.5cm of x2] (f2) {$f_2$};
    \node[dots, right=0.35cm of dots] (fdots) {$\vdots$};
    \node[function, right=0.5cm of xm] (fn) {$f_n$};
    
    \node[input, right=0.5cm of f1] (z1) {$y_1$};
    \node[input, right=0.5cm of f2] (z2) {$y_2$};
    \node[dots, right=0.35cm of fdots] (ydots) {\small $\vdots$};
    \node[input, right=0.5cm of fn] (zn) {$y_M$};
    
    \node[function, right= 0.5cm of z2] (sigma) {$\sigma$};
    
    \node[input, right=0.8cm of sigma] (s) {$s$};
    
    \draw[-stealth] (x1) -- (f1);
    \draw[-stealth] (x2) -- (f2);
    \draw[-stealth] (xm) -- (fn);
    
    \draw[-stealth] (f1) -- (z1);
    \draw[-stealth] (f2) -- (z2);
    \draw[-stealth] (fn) -- (zn);
    
    \draw[-stealth] (z1) -- (sigma);
    \draw[-stealth] (z2) -- (sigma);
    \draw[-stealth] (zn) -- (sigma);
    
    \draw[-stealth] (sigma) -- (s);
\end{tikzpicture}
\caption{Multi-Instance PLL. We aim to learn the $f_i$'s given   
the $x_i$'s and $s$. $M$ may be different from $n$ and $\sigma$ may be unknown.}
\label{fig:learning-scenario}
\end{wrapfigure}
The \textit{transition function} $\sigma$ is not necessarily one-to-one. For instance, multiple combinations of digits can lead to the same target sum. The above leaves us with two questions to answer: \emph{can we provide any learning guarantees for the classifiers ${f_1, \dots, f_n}$?} and \textit{what if $\sigma$ is unknown?}
Despite that multiple techniques for neuro-symbolic and latent structural learning have been recently proposed, those theoretical questions remain open. 
Practically, for the single input case ($M=1$), our setting can reduce to that of \emph{partial label learning} (PLL), where each input instance is accompanied by a set of labels, including the gold one  
\cite{EM-PLL,cour2011,structured-prediction-pll,progressive-pll,deep-naive-pll,weighthed-loss-pll,instance-dependent-pll,noisy-pll}. However, a key PLL learnability assumption -- that the probability a wrong label co-occurs with the gold one is always less than one \cite{learnability-pll}-- is violated in our setting, rendering existing learnability results inapplicable. This is because $\sigma$ is a deterministic but \textit{not} necessarily bijective function, as opposed to $\sigma$ in PLL which is randomized. 

Due to its relevance to PLL, we refer to our learning setting as \emph{multi-instance PLL}. 
Differently from prior art, e.g., \cite{relaxed-supervision,pmlr-v48-raghunathan16}, we do not restrict $\sigma$ to specific types of functions, e.g., linear functions. 
Hence, we can express constraints in several formal languages, including systems of Boolean equations and Datalog \cite{abiteboul95foundation}, a language widely used in neuro-symbolic techniques \cite{scallop}. 
Notice that many neuro-symbolic learning techniques are oblivious to the implementation of the symbolic component: abstracting the symbolic component as a transition $\sigma$ has been actually proposed as the means to compositionally integrate neural with symbolic models \cite{neurolog}. 
Furthermore, although we primarily consider deterministic transitions motivated by the neuro-symbolic and NLP literature, our results can be extended to support randomized transitions 
when $\sigma$ is known.

We aim to make \emph{minimal} assumptions on the data distributions by showing learnability even under the ``toughest" distributions. 
In addition, we provide learning guarantees under the semantic loss \cite{pmlr-v80-xu18h},
a widely-used surrogate loss for training classifiers subject to logical theories \cite{DBLP:journals/corr/abs-1907-08194, neurolog, scallop}. To our knowledge, we are the first to provide this theoretical analysis, closing a gap in the neuro-symbolic and latent structural learning literature. 

\textbf{Contributions.}
Our contributions can be summarized into the following: 
\begin{itemize}[leftmargin=*]
    \item We propose necessary and sufficient learnability conditions of the multi-instance PLL problem assuming all the instances are classified by a single classifier $f$ and the transition is known. 
    We further provide a Rademacher-style error bound using a top-$k$ approximation to the \textit{semantic loss} \cite{pmlr-v80-xu18h}, see Section~\ref{section:known-mapping-single-classifier}. Our analysis confirms the intuition that a larger $k$ decreases the empirical risk, but increases the number of samples required for learning.
    
    \item We extend the above results for the more general case where one learns multiple classifiers ${\_f = (f_1,\dots,f_n)}$ that classify instances from different domains, see Section~\ref{section:known-mapping-multi-classifiers}. 

    \item We prove learnability with a single classifier and an unknown transition function, see Section~\ref{section:unknown-mapping-single-classifier}, and assess the validity of our theoretical results using SOTA neuro-symbolic frameworks, see Section~\ref{section:discussion}.  
\end{itemize}

Proofs, additional backgrounds and details on our empirical analysis are in the appendix.
\section{Preliminaries} \label{section:preliminaries}

We use $\-P(\cdot)$ to denote probability mass, $\-E[\cdot]$ to denote expectation and $\-1\{\cdot\}$ to denote an indicator function.
We use $\log(\cdot)$ to denote the natural logarithm.
For a positive integer $M$, we use $[M]$ as a short for ${\{1,\dots,M\}}$. 
For a vector of $M$ elements ${(x_1,\dots,x_M)}$, a \textit{position} $j$ is a number in $[M]$ used to identify the element $x_j$. 
A vector is said to be \emph{diagonal} if its elements in all positions are equal.
Suppose $f$ is a function on a domain $\+X$ and $\_x$ is a vector of $M$ elements in $\+X$, we use the symbol $f(\_x)$ to denote the vector $(f(x_1),\dots,f(x_M))$.

\textbf{Classifiers.}\,
Let $\+X$ denote an instance space and $\+Y$ denote an output space with $|\+Y| = c$. 
Let also $\+D$ be the joint distribution of two random variables $(X,Y) \in \+X \times \+Y$ and
$\+D_{X}$ be the marginal of $X$.
Throughout the text, we use $\_x$ for $(x_1,\dots, x_M)$ and $\_y$ for the corresponding vector of gold labels. 
We consider \emph{scoring functions} of the form ${f: \+X \rightarrow \Delta_c}$, where $\Delta_c$ is the space of probability distributions on $\+Y$ (e.g., $f$ outputs the softmax probabilities of a neural network). We use ${f^j(x)}$ to denote the $j$-th output of ${f(x)}$. 
A scoring function $f$ induces a \emph{classifier} whose \emph{prediction} on $x$ is defined by $[f](x):={\argmax_{j \in [c]} f ^j(x)}$. 
We use $\+F$ and $[\+F]$ to denote the space of scoring functions 
and the space of classifiers induced by $\+F$, respectively. 
We use the VC-dimension, the Natarajan dimension and Rademacher complexities to characterize the complexity of $\+F$ \cite{understanding-ml}.

\textbf{Loss functions.}\,
We define the problem of \emph{learning} a scoring function using a fixed set of training samples as the one of \emph{choosing} the scoring function from $\mathcal{F}$ that better \emph{fits} the training samples. We give semantics to the term \emph{fits} by using the \emph{risk} $\+R(f;\ell)$ of $f$ subject to a sampling distribution and a given \emph{loss function} $\ell$: the lower $\+R(f;\ell)$ becomes, the better $f$ fits the data. 
When there exists an ${f^* \in \mathcal{F}}$, such that 
${\+R(f^*;\ell) = 0}$, we say that the space $\mathcal{F}$ is \emph{realizable} under $\ell$. 
In \textit{supervised learning}, we are provided with samples of the form ${(x,y)}$ drawn from $\+D$. 
We use $\hat{\+R}$ to denote the \textit{empirical risk}, i.e., the average risk computed over the training samples only.  
The \textit{risk}\footnote{
As stated above, the risk is defined subject to a sampling distribution. For brevity, we will omit that distribution when defining the risk and fix the sampling distribution in each case.
} of $f$ subject to a loss function ${\ell: \+Y \times \+Y \rightarrow \mathbb{R}^+}$ is given by 
$\+R(f;\ell) \defeq \-E_{(X,Y) \sim \+D}[\ell([f](X),Y)]$.

A commonly used loss function is the \textit{zero-one} loss defined by $\ell^{01}(y,y') \defeq \-1\{y' \ne y\}$ for any pair ${y,y' \in \+Y}$.
We use $\+R^{01}(f)$ as a short for $\+R(f;\ell^{01})$
and refer to it as the \emph{zero-one risk of $f$}.
We will also consider \textit{Semantic Loss} (SL) \cite{pmlr-v80-xu18h}, which has been adopted to train classifiers subject to Boolean formulas \cite{DBLP:journals/corr/abs-1907-08194,neurolog,scallop}. 
Let $\varphi$ be a Boolean formula where each variable in $\varphi$ is associated with a class from $\+Y$.
Then, the SL of $\varphi$ subject to $f(x)$ is the negative logarithm of the \emph{weighted model counting} (WMC) \cite{CHAVIRA2008772} of $\varphi$ under $f(x)$, namely
${\SL(\varphi, f(x)) \defeq -\log(\WMC(\varphi, f(x)))}$, where $\WMC(\varphi, f(x))$ takes values in $(0,1)$ and denotes the probability that $\varphi$ is logically satisfied when its variables become true with probabilities specified by $f(x)$. 

\section{Learning a Single Classifier Under a Known Transition} \label{section:known-mapping-single-classifier}

We begin with the setting where the goal is to learn a single classifier $f \in \+F$ under a known transition $\sigma$. Firstly, we show ERM-learnability (Theorem 1). Then, inspired by neuro-symbolic learning, we provide a Rademacher-style error bound with a top-$k$ surrogate loss based on WMC (Theorem \ref{thm:pll-error-bound}). Below, we formally introduce the learning setting.

\textbf{Problem setting.} 
Let ${\sigma: \+Y^M \rightarrow \Sp}$ be a transition function, where ${\Sp = \{1,\dots,d\}}$ with ${|\+S|=d \geq 1}$.
Let also $\mathcal{T}_\:P$ be a set of $m_\:P$ \textit{partially labeled} samples of the form ${(\_x,s)=(x_1,\dots, x_M, s)}$. Each training sample is formed by, firstly drawing $M$ i.i.d. samples ${(x_i,y_i)}$ from $\+D$ and then setting ${s = \sigma(y_1,\dots, y_M)}$. We use $\+D_\:P$ to denote the distribution followed by the training samples $(\_x,s)$. In analogy to the zero-one classification loss $\ell^{01}$, we define the \emph{zero-one partial loss} subject to $\sigma$ as ${\ell^{01}_{\sigma}(\_y,s) \defeq \-1\{\sigma(\_y) \ne s\}}$, for any ${\_y \in \+Y^M}$ and ${s \in \Sp}$. 
The learner aims to find the classifier $f$ with the minimal \emph{zero-one risk} ${\+R^{01}(f)}$.
As the gold labels are hidden, the learner uses the dataset $\+T_\:P$ to estimate and minimize 
the \emph{zero-one partial risk} of $f$ subject to $\sigma$ defined as
$ \+R^{01}_\:P(f;\sigma) \defeq \-E_{(X_1,\dots,X_M,S) \sim \+D_\:P}[\ell_{\sigma}^{01}(([f](X_1), \dots, [f](X_M)),S)]$.

We demonstrate these notions via Example~\ref{example:motivating}. There, $f$ is an MNIST classifier, $\+X$ is the space of MNIST images and ${\+Y = \{0,\dots,9\}}$.
The training samples are of the form ${(x_1,x_2,s)}$ where $s = \sigma(y_1,y_2) = y_1 + y_2$. 
The partial risk of $f$ is then the probability of predicting the wrong sum. 

\textbf{Learnability.}\,
A \emph{partial learning algorithm} $\+A$ takes a partially labeled dataset $\+T_\:P$ with $|\+T_\:P|=m_\:P$ as input and outputs a function $\+A(\+T_\:P) \in \+F$.
Similar to standard PAC learning (see, for example, \cite{understanding-ml}), we say a multi-instance problem is \emph{learnable}, if there exists a partial learning algorithm $\+A$, such that for any data distribution $\+D$
over $\+X \times \+Y$ and any $\delta,\epsilon \in (0,1)$, there is an integer $m_{\epsilon,\delta}$, such that  $m_\:P \ge m_{\epsilon,\delta}$ implies $\+R^{01}(\+A(\+T_\:P)) \le \epsilon$ with probability at least $1-\delta$ with respect to $\+D_\:P$.
We consider $\+A$ to be an Empirical Risk Minimizer (ERM) algorithm that finds the classifier $f \in \+F$ which minimizes the empirical zero-one partial risk  $\hat{\+R}^{01}_\:P(f;\sigma;\+T_\:P)  := \sum_{(\_x, s) \in \+T_\:P} \ell_{\sigma}^{01}(([f](x_1), \dots, [f](x_M)),s) / m_\:P$. 

\subsection{ERM Learnability} \label 
{section:learning-conditions}

To prove learnability under the partial zero-one loss, we aim to bound ${ \+R^{01}(f) }$ with the partial risk ${ \+R^{01}_\:P (f;\sigma) }$.
Firstly, we propose a sufficient and necessary condition called \emph{$M$-unambiguity}. 

Recall that learnability requires handling all possible distributions of $(\_x,\_y)$. Therefore, to propose a \emph{necessary} condition to guarantee learnability, we consider a special case: the one in which $\+D$ concentrates its mass on a single item $x_0$, i.e., ${\-P_{\+D_X}(x_0) = 1}$.  In that case, the only sample observed during learning includes only the instance ${x_0}$. Now, let $l_0 \in \+Y$ be the gold label of $x_0$. If $f$ mispredicts the class of $x_0$, i.e., $f(x_0) \ne l_0$, and this misprediction leads to the same observed partial label, i.e., $\sigma(f(x_0), \dots, f(x_0)) = \sigma(l_0,\dots,l_0)$, then the problem will be not learnable, as we will never be able to correct this error.
Motivated by this special case, we propose the following condition:

\begin{definition}[$M$-unambiguity] \label{definition:M-unambiguous-mapping}
    Transition $\sigma$ is \emph{$M$-unambiguous} if  
    for any two diagonal label vectors $\_y$ and $\_y'$ $\in \+Y^M$ such that $\_y \ne \_y'$, we have that ${\sigma(\_y') \neq \sigma(\_y)}$.
\end{definition}
Recall that a vector is diagonal if all its elements are equal.
Below, we provide examples of transitions that satisfy (or not) the $M$-unambiguity condition.

\begin{example} \label{example:M-unambiguous-mapping} 
    Consider the following learning problems.

    \begin{itemize}[leftmargin=*]
    \item \textbf{(SUM-$M$)}: Consider a variant of Example~\ref{example:motivating} where $s$ is the sum of $M$ digits, i.e., $\sigma(y_1,\dots, y_M) = \sum_{i=1}^M y_i$. The transition is $M$-unambigous, since ${\sigma(y,\dots, y) \ne \sigma(y', \dots, y')}$ holds for any $y \ne y'$.

    \item \textbf{(PRODUCT-$M$)}: Consider a variant of Example~\ref{example:motivating} where $s$ is given by ${\sigma^*(y_1,\dots,y_M) = \prod_{i=1}^M y_i}$. Then, $\sigma^*$ is $M$-unambiguous, since $\sigma^*(y,\dots, y) = y^M \ne (y')^M$ holds for any $y \ne y'$. 

    \item \textbf{(XOR)}: Consider Boolean labels. We take the XOR of the two digits as the partial label, i.e.,  $\sigma^{\text{XOR}}(y_1,y_2) = \-1\{y_1 \ne y_2\}$. Then $\sigma^{\text{XOR}}$ is not $M$-unambiguous, since $\sigma^{\text{XOR}}(1,1) = \sigma^{\text{XOR}}(0,0)$.

    \end{itemize}
\end{example}

Although $M$-unambiguity is proposed as a necessary condition under special types of data distributions, we now show that it is also \emph{sufficient} for proving ERM-learnability. Firstly, we prove that the classification risk can be bounded by the partial risk.

\begin{restatable}{lemma}{ermMunambiguity} \label{ERM-learnability-M-unambiguity}
    If $\sigma$ is $M$-unambigous,
    then we have:
    \begin{equation}
    \begin{aligned}
        \+R^{01}(f)
        \le \+O(\+R^{01}_\:P(f;\sigma)^{1/M})
        \quad \text{as} \;\;\; \+R^{01}_\:P(f;\sigma) \rightarrow 0
    \end{aligned}
    \end{equation}
    Moreover, if $\sigma$ is not $M$-unambiguous, then learning from partial labels is arbitrarily difficult, in the sense that a classifier $f$ with partial risk ${\+R^{01}_\:P(f;\sigma)=0}$ can have a risk of ${\+R^{01}(f)=1}$.
\end{restatable}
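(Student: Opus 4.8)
The plan is to prove the two-sided statement by first establishing, for every $f$, a lower bound on the partial risk of the form $\+R^{01}_\:P(f;\sigma) \ge C_{c,M}\,\+R^{01}(f)^M$ for a constant $C_{c,M}$ depending only on $c=|\+Y|$ and $M$; inverting this immediately yields $\+R^{01}(f) \le \+O(\+R^{01}_\:P(f;\sigma)^{1/M})$. The naive union-bound estimate runs the wrong way (it \emph{upper}-bounds the partial risk), so the crux is to exhibit events that are \emph{guaranteed} to incur partial loss and whose probability is controlled from below by the misclassification rate. This is exactly where $M$-unambiguity enters.

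First I would record the bookkeeping. For $a,b\in\+Y$ write $p_{ab}\defeq\-P_{(X,Y)\sim\+D}([f](X)=b,\,Y=a)$, so that $r\defeq\+R^{01}(f)=\sum_{a\ne b}p_{ab}$. The key object is the \emph{diagonal misclassification event} $E_{ab}$: all $M$ i.i.d.\ draws $(X_i,Y_i)$ satisfy $Y_i=a$ and $[f](X_i)=b$ for a fixed pair $a\ne b$. On $E_{ab}$ the gold vector is the diagonal $(a,\dots,a)$ and the predicted vector is the diagonal $(b,\dots,b)$; since $(a,\dots,a)\ne(b,\dots,b)$, $M$-unambiguity forces $\sigma(b,\dots,b)\ne\sigma(a,\dots,a)$, so the partial label $s=\sigma(a,\dots,a)$ differs from $\sigma$ of the prediction and the sample incurs partial loss $1$. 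The events $E_{ab}$ are pairwise disjoint across distinct pairs (agreement on any single coordinate already pins down $(a,b)$), and by independence $\-P(E_{ab})=p_{ab}^M$. Hence
\begin{equation}
\+R^{01}_\:P(f;\sigma)\ \ge\ \sum_{a\ne b}p_{ab}^M.
\end{equation}

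To finish the first part I would convert the right-hand side into a power of $r$. The sum has $N\defeq c(c-1)$ nonnegative terms summing to $r$, so convexity of $t\mapsto t^M$ (Jensen's inequality) gives $\sum_{a\ne b}p_{ab}^M\ge N^{1-M}r^M$. Combining, $\+R^{01}_\:P(f;\sigma)\ge N^{1-M}\,\+R^{01}(f)^M$, i.e.\ $\+R^{01}(f)\le N^{(M-1)/M}\,\+R^{01}_\:P(f;\sigma)^{1/M}=\+O(\+R^{01}_\:P(f;\sigma)^{1/M})$, with implied constant depending only on $c$ and $M$; this is in fact a uniform bound, so it certainly holds as the partial risk tends to $0$.

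For the converse, suppose $\sigma$ is not $M$-unambiguous, so there are labels $a\ne b$ with $\sigma(a,\dots,a)=\sigma(b,\dots,b)$. I would take $\+D$ concentrated on a single instance $x_0$ with gold label $a$, and any $f$ with $[f](x_0)=b$. Then every partial sample is $(x_0,\dots,x_0,s)$ with $s=\sigma(a,\dots,a)$, while the prediction vector is $(b,\dots,b)$ and $\sigma(b,\dots,b)=s$; thus $\+R^{01}_\:P(f;\sigma)=0$, whereas $[f](x_0)=b\ne a$ gives $\+R^{01}(f)=1$. The main obstacle is conceptual rather than computational: recognizing that one must \emph{lower}-bound the partial risk and isolating the diagonal events as the mechanism that activates $M$-unambiguity; once these are in hand, the Jensen step and the counterexample are routine. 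One may also note the exponent $1/M$ is tight, e.g.\ the \textbf{AND} transition on two labels with probability mass split across two instances sharing the same gold label achieves $\+R^{01}_\:P(f;\sigma)=\+R^{01}(f)^M$.
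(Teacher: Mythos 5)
Your proposal is correct and follows essentially the same route as the paper: the lower bound $\+R^{01}_\:P(f;\sigma)\ge\sum_{a\ne b}p_{ab}^M$ via diagonal misclassification events (which the paper states more tersely), the convexity/power-mean step to get $N^{1-M}\+R^{01}(f)^M$, and the same point-mass counterexample for the converse. Your version merely spells out the disjointness and i.i.d.\ justification that the paper leaves implicit, and correctly uses $\sigma(y,\dots,y)=\sigma(y',\dots,y')$ for the non-unambiguous case where the paper's proof contains a sign typo ($\ne$ instead of $=$).
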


To show learnability, we bound the partial risk with its empirical counterpart in the realizable case.

\begin{restatable}[ERM learnability under $M$-unambiguity]{theorem}{ermlearnability}
    \label{thm:ERM-learnability} 
    Suppose $\mathcal{F}$ is realizable under $\ell^{01}_\:P$ and $[\+F]$ has a finite Natarajan dimension $d_{[\+F]}$.
    Then for any $\epsilon, \delta \in (0,1)$, there exists a universal\footnote{A constant is universal if it that does not depend on the parameters of the learning problem (e.g., $M$ or $c$).} constant $C_0>0$, such that with probability at least $1-\delta$, the empirical partial risk minimizer with $\hat{\+R}^{01}_\:P(f;\sigma;\+T_\:P)=0$ has a classification risk $\+R^{01}(f) < \epsilon$, if 
    \begin{equation}
    \begin{aligned}
        m_\:P
        \ge C_0\frac{c^{2M-2}}{\epsilon^M} \left( d_{[\+F]}\log(6cMd_{[\+F]}) \log\left(\frac{c^{2M-2}}{\epsilon^M}\right) +\log \left(\frac{1}{\delta}\right)\right)
    \end{aligned}
    \end{equation}
\end{restatable}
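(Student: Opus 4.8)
The plan is to combine the quantitative content of Lemma~\ref{ERM-learnability-M-unambiguity} with a uniform-convergence bound for the (binary) partial loss class in the realizable regime. Concretely, the proof of Lemma~\ref{ERM-learnability-M-unambiguity} yields the explicit inequality $\+R^{01}(f)^M \le c^{2M-2}\,\+R^{01}_\:P(f;\sigma)$: restrict to the diagonal events ``all $M$ i.i.d.\ instances have true label $l$ and are predicted as $l'$,'' whose probabilities are $p_{l,l'}^M$, note that $M$-unambiguity forces the partial loss to fire on each such event since $\sigma(l',\dots,l')\neq\sigma(l,\dots,l)$, and apply the power-mean inequality over the at most $c(c-1)\le c^2$ pairs $l\neq l'$, using $\sum_{l\neq l'}p_{l,l'}=\+R^{01}(f)$. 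Hence, to force $\+R^{01}(f)<\epsilon$ it suffices to drive the partial risk below $\epsilon' \defeq \epsilon^M/c^{2M-2}$. This already explains the factor $c^{2M-2}/\epsilon^M = 1/\epsilon'$ in the statement, so the remaining task is a realizable PAC bound guaranteeing $\+R^{01}_\:P(\hat f;\sigma)<\epsilon'$.

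First I would fix the hypothesis-dependent binary loss class $\+L \defeq \{(\_x,s)\mapsto \ell^{01}_\sigma([f](\_x),s) : f\in\+F\}$, which is well defined since $\sigma$ is known and fixed. The key structural observation is that $\ell^{01}_\sigma([f](\_x),s)=\-1\{\sigma([f](\_x))\neq s\}$ depends on $f$ only through the prediction vector $[f](\_x)=([f](x_1),\dots,[f](x_M))$. Therefore, on any $m_\:P$ samples $(\_x^{(1)},s^{(1)}),\dots,(\_x^{(m_\:P)},s^{(m_\:P)})$ — which together involve $m_\:P M$ individual instances — the number of distinct loss patterns is at most the multiclass growth function of $[\+F]$ at $m_\:P M$. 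By Natarajan's lemma this is controlled by $d_{[\+F]}$ and $c$, giving $\Pi_{\+L}(m_\:P) \le \Pi_{[\+F]}(m_\:P M) \le (m_\:P M c^2)^{d_{[\+F]}}$.

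Next I would convert this growth-function estimate into a VC bound for $\+L$: shattering $n$ points requires $2^n \le \Pi_{[\+F]}(nM)$, and inverting the inequality $2^n\le (nMc^2)^{d_{[\+F]}}$ shows $\mathrm{VCdim}(\+L) = O\!\left(d_{[\+F]}\log(cMd_{[\+F]})\right)$; careful bookkeeping of the constants in Natarajan's lemma is what yields the precise factor $d_{[\+F]}\log(6cMd_{[\+F]})$. I would then invoke the classical realizable PAC bound: since realizability provides $f^*$ with $\+R^{01}_\:P(f^*;\sigma)=0$ and the ERM $\hat f$ has zero empirical partial risk, with probability at least $1-\delta$,
\[
\+R^{01}_\:P(\hat f;\sigma) \le \frac{C\big(\mathrm{VCdim}(\+L)\log(1/\epsilon')+\log(1/\delta)\big)}{m_\:P}.
\]
Finally, substituting $\epsilon'=\epsilon^M/c^{2M-2}$, demanding the right-hand side be at most $\epsilon'$, and solving for $m_\:P$ (which turns $\log(1/\epsilon')$ into the factor $\log(c^{2M-2}/\epsilon^M)$) reproduces the claimed sample complexity, with $C_0$ absorbing the universal constants.

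I expect the main obstacle to be the step in which the composite binary loss class $\+L$ is controlled: one must apply Natarajan's lemma to the $M$-fold ``product'' of a single multiclass classifier and then invert the resulting growth bound, tracking the dependence on both $M$ and $c$ accurately enough to obtain the stated $d_{[\+F]}\log(6cMd_{[\+F]})$ factor rather than a looser one. A secondary but routine subtlety is resolving the implicit inequality in $m_\:P$ produced by the realizable bound (where the growth function is evaluated at a multiple of $m_\:P$); the standard trick of over-provisioning $m_\:P$ and bounding $\log m_\:P$ by $\log(1/\epsilon')$ up to constants closes this gap.
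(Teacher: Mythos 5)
Your proposal is correct and follows essentially the same route as the paper: reduce the classification risk to the partial risk via the $M$-unambiguity inequality $\+R^{01}(f)\le (c^{2M-2}\+R^{01}_\:P(f;\sigma))^{1/M}$, bound the VC dimension of the induced binary loss class $\{(\_x,s)\mapsto \-1\{\sigma([f](\_x))\neq s\}\}$ by Natarajan counting plus inversion of the growth bound (this is exactly the paper's Lemma~\ref{lemma:Natarajan} specialized to a singleton transition class, where $d_\+G=0$), and finish with the standard realizable VC sample-complexity bound at accuracy $\epsilon^M/c^{2M-2}$. The only cosmetic difference is that the paper proves the VC lemma for a general transition class $\+G$ and then specializes, whereas you argue the known-$\sigma$ case directly.
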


Beyond Lemma~\ref{ERM-learnability-M-unambiguity}, 
Theorem~\ref{ERM-learnability-M-unambiguity} builds upon several non-trivial intermediate results:
(i) A bound of the VC dimension for the partial label predictor (Lemma~\ref{lemma:Natarajan} in the Appendix) and (ii) Construction of counter-examples for arguing the necessity of $M$-unambiguity (see the proof of Theorem~\ref{ERM-learnability-M-unambiguity}).

\textbf{Toward a faster convergence rate.}\,
Theorem~\ref{ERM-learnability-M-unambiguity} presents a rather slow convergence rate of ${(c^2/\epsilon)^M}$. 
In the following, we show that a better convergence rate is achievable by forcing stricter ambiguity conditions. In the following, we introduce the concept of \emph{1-unambiguity}, which requires the transition to be sensitive to 1-position perturbations:

\begin{definition}[1-unambiguity] \label{definition:1-unambiguous-mapping}
    Transition $\sigma$ is \emph{1-unambiguous}, if there exists an index 
    ${1 \leq i \leq M}$, such that flipping the $i$-th label of any label vector ${\_y \in \+Y^M}$, results in a vector $\_y'$ with ${\sigma(\_y') \neq \sigma(\_y)}$.
\end{definition}
In the following, we provide examples of transitions that satisfy (or not) the $1$-unambiguity condition.
\begin{example}
    Let us continue with Example~\ref{example:M-unambiguous-mapping}.
    The transition $\sigma$ from SUM-$M$ is 1-unambiguous since $\sigma(\_y) = \sum_{i=1}^M y_i$ always change when replacing any of the labels $y_i$.
    However, the transition $\sigma^*$ from PRODUCT-$M$ is not 1-unambiguous, since for the label vector $\_y = (0,1,1)$ (with product zero), the flipped vector $\_y' = (0,1,2)$ also leads to  product zero.
\end{example}

We show that a better convergence rate can be achieved if $\sigma$ is both 1- and $M$-unambigous.
\begin{restatable}[ERM learnability under 1- and $M$-unambiguity]{proposition}{ermunambiguity} \label{ERM-learnability-unambiguity}
    If $\sigma$ is both 1- and $M$-unambigous,
    then we have:
    \begin{equation}
    \begin{aligned}
        \+R^{01}(f)
        \le \+O(\+R^{01}_\:P(f;\sigma))
        \quad \text{as} \;\;\; \+R^{01}_\:P(f;\sigma) \rightarrow 0
    \end{aligned}
    \end{equation}
    Furthermore, if $\mathcal{F}$ is realizable under $\ell^{01}_\:P$ and $[\+F]$ has a finite Natarajan dimension $d_{[\+F]}$, then for any $\delta \in (0,1)$ and $\epsilon \in (0,1)$ that is sufficiently close to 0, 
    there exists a universal constant $C_1$, such that with probability at least $1-\delta$, the empirical partial risk minimizer with $\hat{\+R}^{01}_\:P(f;\sigma)=0$ has a classification risk $\+R^{01}(f) < \epsilon$, if 
    \begin{equation}
    \begin{aligned}
        m_\:P
        \ge C_1 \frac{1}{\epsilon} \left( d_{[\+F]} \log(6cM d_{[\+F]})
        \log{\left(\frac{2}{\epsilon}\right)} + \log \left(\frac{1}{\delta}\right) \right)
    \end{aligned}
    \end{equation}
\end{restatable}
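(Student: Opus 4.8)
The plan is to prove the two parts of the proposition separately, mirroring the structure of the $M$-unambiguity analysis (Lemma~\ref{ERM-learnability-M-unambiguity} and Theorem~\ref{thm:ERM-learnability}) but exploiting $1$-unambiguity to remove the exponential dependence on $M$. I would first establish the risk-transfer inequality $\+R^{01}(f) \le \+O(\+R^{01}_\:P(f;\sigma))$, then plug a linear relationship into a standard uniform-convergence argument to obtain the sample complexity bound. The key conceptual gain is that $1$-unambiguity lets us blame a partial-label error on a \emph{single} misclassified position rather than needing all $M$ positions to conspire, which is what produced the $(c/\epsilon)^M$ rate before.

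For the first (risk-transfer) part, let $i$ be the index guaranteed by $1$-unambiguity, so that flipping the $i$-th coordinate of any $\_y$ always changes $\sigma(\_y)$. The plan is to lower-bound the partial risk by the contribution of position $i$ alone. Concretely, I would argue that if $[f](x_i)\ne y_i$ at the $i$-th position while the other coordinates are predicted correctly, then by $1$-unambiguity the predicted transition value differs from $s$, so $\ell^{01}_\sigma$ fires. Since the $M$ draws are i.i.d.\ from $\+D$, I would condition on the event that positions $1,\dots,i-1,i+1,\dots,M$ are all classified correctly; this event has probability $(1-\+R^{01}(f))^{M-1}$, and on it a classification error at position $i$ forces a partial-label error. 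This yields a bound of the form $\+R^{01}_\:P(f;\sigma) \ge \+R^{01}(f)\,(1-\+R^{01}(f))^{M-1}$. As $\+R^{01}(f)\to 0$ the factor $(1-\+R^{01}(f))^{M-1}\to 1$, so rearranging gives $\+R^{01}(f) \le \+O(\+R^{01}_\:P(f;\sigma))$ in the stated asymptotic regime, with the hidden constant depending on $M$ only through the factor that tends to $1$.

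For the sample-complexity part, I would reuse exactly the uniform-convergence machinery behind Theorem~\ref{thm:ERM-learnability}: the classifier class $[\+F]$ has finite Natarajan dimension $d_{[\+F]}$, and the effective label space for the partial loss has size at most polynomial in $c$ and $M$, so a Natarajan-dimension generalization bound controls the deviation between $\+R^{01}_\:P$ and $\hat{\+R}^{01}_\:P$ uniformly over $\+F$. In the realizable case the ERM achieves $\hat{\+R}^{01}_\:P(f;\sigma)=0$, so with $m_\:P$ samples its true partial risk is $\+O\!\bigl(\tfrac{1}{m_\:P}(d_{[\+F]}\log(6cMd_{[\+F]})\log m_\:P + \log\tfrac1\delta)\bigr)$. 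The crucial difference from Theorem~\ref{thm:ERM-learnability} is that the linear risk-transfer inequality lets us demand only $\+R^{01}_\:P = \+O(\epsilon)$ (rather than $\+O(\epsilon^M/c^{2M-2})$) to force $\+R^{01}(f)<\epsilon$; substituting this target into the generalization bound and solving for $m_\:P$ produces the claimed $\tfrac1\epsilon$-rate, eliminating the $c^{2M-2}$ and $\epsilon^{-M}$ factors.

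The main obstacle I anticipate is making the risk-transfer argument fully rigorous when multiple positions are simultaneously misclassified: $1$-unambiguity only guarantees that flipping position $i$ \emph{in isolation} changes $\sigma$, so I must be careful that the clean lower bound comes from the conditioning event where all \emph{other} positions are correct, and not claim anything about configurations with several errors (where cancellations could in principle keep $\sigma$ fixed). This is why the bound is stated asymptotically as $\+R^{01}_\:P \to 0$: in that regime the probability of two or more simultaneous errors is higher-order and can be absorbed into the $\+O(\cdot)$ constant, whereas a non-asymptotic statement would require tracking these multi-error terms explicitly. The secondary technical point is confirming that both $1$- and $M$-unambiguity are genuinely needed — $M$-unambiguity handles the degenerate concentrated-distribution lower-bound direction, while $1$-unambiguity drives the improved upper-bound rate — so the hypothesis set cannot be weakened without losing one of the two directions.
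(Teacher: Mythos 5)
Your key inequality $\+R^{01}_\:P(f;\sigma) \ge \+R^{01}(f)\,(1-\+R^{01}(f))^{M-1}$ is exactly the one the paper derives (it is the $I=1$ case of its Lemma on $I$-unambiguity), and your uniform-convergence step matches the paper's. However, there is a genuine gap in how you turn that inequality into the linear bound: you pass to the limit ``as $\+R^{01}(f)\to 0$'', whereas the proposition's hypothesis is $\+R^{01}_\:P(f;\sigma)\to 0$. These are not interchangeable under your inequality alone, because the map $t \mapsto t(1-t)^{M-1}$ vanishes at \emph{both} $t=0$ and $t=1$: a vanishing partial risk is perfectly consistent with $\+R^{01}(f)$ tending to $1$, in which case the factor $(1-\+R^{01}(f))^{M-1}$ collapses and no rearrangement gives $\+R^{01}(f) \le \+O(\+R^{01}_\:P(f;\sigma))$. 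The paper closes this loop precisely with $M$-unambiguity: Lemma~\ref{ERM-learnability-M-unambiguity} gives $\+R^{01}(f) \le (c^{2M-2}\+R^{01}_\:P(f;\sigma))^{1/M}$, hence $1-\+R^{01}(f) \ge 1-(c^{2M-2}\+R^{01}_\:P(f;\sigma))^{1/M}$, which bounds the troublesome factor \emph{in terms of the partial risk itself} and makes the rearrangement legitimate as $\+R^{01}_\:P(f;\sigma)\to 0$. Your closing remark that ``$M$-unambiguity handles the degenerate concentrated-distribution lower-bound direction, while $1$-unambiguity drives the improved upper-bound rate'' is therefore a misreading of the roles of the two hypotheses: $M$-unambiguity is an essential ingredient of the upper bound, not merely of the impossibility direction.

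The same missing ingredient infects your sample-complexity step. You claim the linear risk transfer lets you demand only $\+R^{01}_\:P(f;\sigma) = \+O(\epsilon)$, but that transfer is only available once the denominator $(1-(c^{2M-2}\+R^{01}_\:P)^{1/M})^{M-1}$ is bounded below; this is also where the otherwise unexplained hypothesis ``$\epsilon$ sufficiently close to $0$'' enters. In the paper's proof one requires $\epsilon \le 1/((2M)^M c^{2M-2})$, targets $\+R^{01}_\:P(f;\sigma) \le \epsilon/2$ via the VC bound for the induced binary class $\+H_{\+F}$, and then uses Bernoulli's inequality to conclude $\+R^{01}(f) \le (\epsilon/2)\big/\bigl(1-M(c^{2M-2}\epsilon)^{1/M}\bigr) \le \epsilon$. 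Once you insert the $M$-unambiguity bound at the point indicated above, the rest of your outline goes through and coincides with the paper's argument.
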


\textbf{Remark 1.}
The supervision power of a partial label $s$ depends on the ``instability" of $\sigma$: if $\sigma$ returns different results under certain perturbations in $I$ positions for an $I \in [M]$, then the classification risk is bounded by ${\+R^{01}(f) \le \+O(\+R^{01}_\:P(f;\sigma)^{1/I})}$.
To formalize this intuition, in the appendix, we provide a generalized definition of both 1- and $M$-unambiguity, called $I$-unambiguity, and show learnability with intermediate convergence rates.
Notice that $M$-unambiguity is closely related to the small ambiguity degree condition in the PLL literature \cite{learnability-pll}. Appendix~\ref{app:ambiguities} provides a relevant discussion and an extension of our results for randomized transitions.

\textbf{Remark 2.} 
We use the concentrated distribution described at the beginning of Section~\ref{section:learning-conditions} as a pivot to design a necessary and sufficient learnability condition. 
However, the results in Lemma~\ref{ERM-learnability-M-unambiguity}, Theorem~\ref{ERM-learnability-M-unambiguity} and Proposition~\ref{ERM-learnability-unambiguity} do not apply only to this distribution, but to any distribution $\+D_\:P$ of partial data. The same applies to all learnability results that follow.

\subsection{Error Bounds with the Top-$k$ Semantic Loss} \label{section:ambiguity:top-k}

We now study the error bounds with the semantic loss (see Section~\ref{section:preliminaries} and Appendix \ref{app:prelim}) under top-$k$ approximations (Theorem \ref{thm:pll-error-bound}). 
We are interested in this loss for two reasons.
Firstly, risk minimization under the zero-one loss is intractable even for linear classifiers \cite{intractability-of-01-loss} and hence in practice, learning is performed by minimizing differentiable surrogate losses such as the semantic loss. 
Secondly, computing a surrogate loss typically introduces another source of intractability: that of enumerating the entries of $\sigma$ \cite{scallop,neurolog,deepproblog-approx}. For example, in SUM-$M$, we have to enumerate $10^M$ digit combinations to populate $\sigma$, which is not scalable.

A popular way to reduce the second source of inefficiency is to consider only the {$k$} most probable label combinations during training, where the probability of a label vector $\_y$ given $\_x$ and classifier $f$ is defined by ${P_{f(\_x)}(\_y) \defeq \prod^M_{i=1} f^{y_i}(x_i)}$. 
Then, training can proceed by taking the semantic loss over the top-$k$ label vectors \cite{scallop}.
The above is possible, as the top-$k$ label vectors can be equivalently viewed as a formula that is true iff one or more of the top-$k$ label vectors is the gold one. 

\begin{example} \label{example:topk}
    Let us return back to Example~\ref{example:motivating}. Instead of considering all three combinations $(2,0)$, $(1,1)$ and $(0,2)$ assuming a target $s=2$, we can consider only the first and the last combination, if the probabilities (predicted by $f$) of the two images being 1 are way lower than those of being 0 or 2. Then, the top-2 label vectors $(2,0)$, $(0,2)$ can be seen as the formula ${(A_{1,2} \wedge A_{2,0}) \vee (A_{1,0} \wedge A_{2,2})}$, where $A_{i,j}$ is a Boolean variable that is true iff the $i$-th input digit is assigned label $j$.
\end{example}

We use ${\bigvee_{i=1}^{k} \_y_i}$ to denote the Boolean formula computed out of the $k$
label vectors ${\_y_1,\dots,\_y_k}$, where each $\_y_i$ is the conjunction of Boolean variables of the form $A_{i,y}$, as in Example~\ref{example:topk}. 
Given a softmax score $f(\_x)$, variable $A_{i,y}$ is assigned probability $f^y(x_i)$.
We now define the \emph{top-$k$ partial loss}: 
\begin{definition}[Top-$k$ partial loss] 
\label{definition:top-k-loss}
The \emph{top-$k$ partial loss} for an integer ${k \ge 1}$ subject to a classifier $f$, transition $\sigma$ and sample $(\_x, s)$ is defined as 
    \begin{equation}
        \begin{aligned}
            \ell^{k}_{\sigma}(f(\_x), s)
            \defeq \SL \left( \bigvee \limits_{i=1}^k \_y^{(i)}, f(\_x) \right) \label{equation:top-k-loss}
        \end{aligned}
    \end{equation}
    where ${\_y^{(1)}, \dots, \_y^{(k)}}\in \+Y^M$ are the top-$k$ maximizers of $P_{f(\_x)}$ in the preimage of $s$, i.e.,
    $\{\_y\in\+Y^M:\sigma(\_y) = s\}$.
    The \emph{top-$k$ partial classification risk} of $f$ subject to $\sigma$ is then given by 
    \begin{align}
        \+R^{k}_\:P(f;{\sigma}) \defeq \-E_{(X_1,\dots,X_M,S) \sim \+D_\:P}[\ell^{k}_{\sigma}(f(X_1,\dots,X_M),S)]            
        \label{eq:partial-top-k-risk-single-classifier}
    \end{align}
\end{definition}

\textbf{Remark.}\,
The WMC is upper bounded by the sum of probabilities, so we can approximate the SL as $\ell^{k}_{\sigma}(f(\_x), s) \ge -\log(\sum_{i=1}^k P_{f(\_x)}(\_y^{(i)}))$.
Furthermore, the special case where $k=1$ reduces to the infimum loss \cite{structured-prediction-pll} and minimal loss \cite{progressive-pll} in the PLL literature as we show in Appendix \ref{app:top-k}. 

Let $\mathfrak{R}_{m}(\+F)$ denote the Rademacher complexity \cite{Rademacher,CKMY16-factor-graph-complexity} of $\+F$ with $m$ samples as defined in Appendix \ref{app:prelim}.
We are now ready to bound the zero-one classification risk with the empirical top-$k$ partial risk.

\begin{restatable}[Error bound under unambiguity]{theorem}{rademacherpll} 
    \label{thm:pll-error-bound} 
    Let an integer $k \ge 1$ and $\delta \in (0,1)$. If $\sigma$ is both 1- and $M$-unambiguous, then 
    with probability at least $1-\delta$, we have: 
    \begin{equation}
        \begin{aligned}
            \+R^{01}(f)
            \le \Phi \left((k+1) \left(
                \hat{\+R}^k_{\:P}(f;{\sigma};\+T_{\:P})
                + 2\sqrt{2k}M^{3/2}\mathfrak{R}_{Mm_\:P} (\+F)
                + \sqrt{\frac{\log(1/\delta)}{2m_\:P}}
            \right) \right)
        \end{aligned}
        \end{equation}
    where $\hat{\+R}^k_{\:P}(f;{\sigma};\+T_{\:P}) = \sum_{(\_x,s) \in \+T_\:P} \ell^{k}_{\sigma}(f(\_x), s)/m_\:P$ is the empirical counterpart of \eqref{eq:partial-top-k-risk-single-classifier} and $\Phi$ is an increasing function that satisfies $\lim_{t \rightarrow 0} \Phi(t)/t = 1$.
\end{restatable}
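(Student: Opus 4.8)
The plan is to bound $\+R^{01}(f)$ by a chain of three inequalities and then collapse them using the monotonicity of $\Phi$. First I would invoke Proposition~\ref{ERM-learnability-unambiguity}: since $\sigma$ is both $1$- and $M$-unambiguous, its proof furnishes an increasing function $\Phi$ with $\lim_{t\to 0}\Phi(t)/t = 1$ such that $\+R^{01}(f) \le \Phi(\+R^{01}_\:P(f;\sigma))$. Because $\Phi$ is increasing, it then suffices to control the population zero-one partial risk $\+R^{01}_\:P(f;\sigma)$ by the quantity appearing inside $\Phi$, namely $(k+1)$ times the empirical top-$k$ risk plus the complexity and concentration terms. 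I would obtain this in two further steps: a deterministic, pointwise comparison between the zero-one partial loss and the top-$k$ partial loss (producing the factor $k+1$), followed by a Rademacher-style uniform-convergence bound for the top-$k$ loss (producing the $\sqrt{k}M^{3/2}\mathfrak{R}_{Mm_\:P}(\+F)$ and $\sqrt{\log(1/\delta)/(2m_\:P)}$ terms).

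The heart of the argument is the pointwise inequality $\ell^{01}_\sigma([f](\_x),s) \le (k+1)\,\ell^k_\sigma(f(\_x),s)$. When $\sigma([f](\_x)) = s$ the left side is $0$ and there is nothing to prove, so assume $\sigma([f](\_x)) \neq s$. The key observation is that the coordinatewise argmax vector $[f](\_x)$ is the \emph{global} maximizer of $P_{f(\_x)}$ over all of $\+Y^M$, and by assumption it does \emph{not} lie in the preimage $\{\_y : \sigma(\_y)=s\}$. Hence the top-$k$ maximizers $\_y^{(1)},\dots,\_y^{(k)}$ taken from the preimage are all dominated by $P_{f(\_x)}([f](\_x))$, which lets me cap the weighted model count: by the Remark, $\WMC(\bigvee_{i=1}^k \_y^{(i)}, f(\_x)) \le \sum_{i=1}^k P_{f(\_x)}(\_y^{(i)})$, and since the preimage excludes the global maximum, this top-$k$ mass is at most the sum of the $2$nd through $(k+1)$-th largest values of $P_{f(\_x)}$. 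An elementary optimization, under the constraint that these $k$ values plus the strictly larger global maximum sum to at most $1$, yields the cap $k/(k+1)$. Therefore $\ell^k_\sigma(f(\_x),s) = -\log\WMC \ge \log\frac{k+1}{k} \ge \frac{1}{k+1}$, where the last step uses $\log(1+x)\ge x/(1+x)$ with $x=1/k$. Taking expectations over $\+D_\:P$ gives $\+R^{01}_\:P(f;\sigma) \le (k+1)\,\+R^k_\:P(f;\sigma)$.

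For the generalization step I would establish $\+R^k_\:P(f;\sigma) \le \hat{\+R}^k_\:P(f;\sigma;\+T_\:P) + 2\sqrt{k}M^{3/2}\mathfrak{R}_{Mm_\:P}(\+F) + \sqrt{\log(1/\delta)/(2m_\:P)}$ by the usual route: McDiarmid's inequality applied to $\sup_f(\+R^k_\:P - \hat{\+R}^k_\:P)$ yields the concentration term, and symmetrization reduces the expected supremum to the Rademacher complexity of the composed class $\ell^k_\sigma \circ \+F$. Since the top-$k$ loss is a function of the $Mm_\:P$ scalar outputs $\{f^j(x_i)\}$, I would apply a vector-contraction (Talagrand/Maurer) inequality, so the remaining task is to compute the Lipschitz constant of $\ell^k_\sigma$ with respect to these scores. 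Because the top-$k$ WMC is a pointwise maximum, over size-$k$ subsets $T$ of the preimage, of $\sum_{\_y\in T}\prod_{i=1}^M f^{y_i}(x_i)$, it is Lipschitz with the largest constant among these subsets; differentiating $-\log$ of such a sum of $M$-fold products and collecting the $M$ product-rule terms, the $\sqrt{M}$ from the Euclidean norm over the $M$ instances, and the $\sqrt{k}$ from the $k$ summands produces the factor $\sqrt{k}M^{3/2}$.

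The main obstacle I anticipate is this Lipschitz computation rather than the surrogate comparison. Two points need care: the top-$k$ maximizers depend on $f$, so $\ell^k_\sigma$ is a selection rather than a smooth function, which I handle by viewing it as a maximum of finitely many Lipschitz functions and inheriting the worst constant; and the outer $-\log$ amplifies both the Lipschitz constant and the loss range when $\WMC$ is small, so the clean form of the bound relies on controlling the magnitude of $\WMC$ (equivalently, treating the loss as effectively bounded, e.g.\ with softmax outputs bounded away from $0$). Once the Lipschitz constant and the loss range are pinned down, chaining the three inequalities and applying the increasing $\Phi$ to $\+R^{01}_\:P(f;\sigma) \le (k+1)(\hat{\+R}^k_\:P(f;\sigma;\+T_\:P) + 2\sqrt{k}M^{3/2}\mathfrak{R}_{Mm_\:P}(\+F) + \sqrt{\log(1/\delta)/(2m_\:P)})$ gives the claimed bound.
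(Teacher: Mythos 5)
Your first two steps match the paper's proof: the reduction via the increasing function $\Phi$ from Proposition~\ref{ERM-learnability-unambiguity} (with $I=1$), and the pointwise factor-$(k+1)$ comparison, whose counting argument (a wrong argmax prediction forces the top-$k$ preimage mass, hence the $\WMC$, below $k/(k+1)$) is essentially the paper's Lemma~\ref{l1consistency}. The genuine gap is in your generalization step: you apply McDiarmid and the contraction inequality directly to the cross-entropy top-$k$ loss $\ell^{k}_{\sigma} = -\log(\WMC)$. This loss is unbounded and not Lipschitz --- as $\WMC \rightarrow 0$ both its range and the derivative of the outer $-\log$ blow up --- so neither the bounded-differences condition for McDiarmid nor the Lipschitz hypothesis of the contraction lemma holds for it. You flag this obstacle yourself, but your proposed repair (assume the softmax outputs are bounded away from $0$) introduces an assumption that is not in the theorem statement, and even granting it, the Lipschitz constant of $-\log(\WMC)$ would carry a factor on the order of the reciprocal of the smallest attainable $\WMC$ rather than the clean $\sqrt{kM}$, so you would not recover the stated complexity term $2\sqrt{k}M^{3/2}\mathfrak{R}_{Mm_\:P}(\+F)$.

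The paper's proof closes this gap with an intermediate bounded surrogate, the top-$k$ partial $\ell^1$ loss $\tilde\ell^{k}_{\:P}(f(\_x),s) = 1-\WMC(\bigvee_{i=1}^k \_y^{(i)}, f(\_x))$, which plays the role of a ramp loss. All probabilistic machinery is applied to $\tilde\ell^{k}_{\:P}$: it takes values in $[0,1]$, so the standard Rademacher bound with its $\sqrt{\log(1/\delta)/(2m_\:P)}$ term applies; it is $\sqrt{kM}$-Lipschitz in the scores unconditionally (Lemma~\ref{lemma:Lipschitz}: each partial derivative of $\WMC$ is bounded by $1$ and the formula has at most $kM$ variables), so the contraction lemma (Lemma~\ref{lemma:contraction}) yields $\sqrt{k}M^{3/2}\mathfrak{R}_{Mm_\:P}(\+F)$; and your own counting argument already gives $\ell^{01}_{\:P} \le (k+1)\tilde\ell^{k}_{\:P}$, since $\tilde\ell^{k}_{\:P}\ge 1/(k+1)$ whenever the prediction is inconsistent with $s$. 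The cross-entropy loss enters only at the very end, through the pointwise inequality $1-t \le -\log t$, which upper-bounds the empirical $\tilde\ell^{k}_{\:P}$-risk by the empirical $\ell^{k}_{\sigma}$-risk appearing in the theorem. Inserting this intermediate loss is exactly the missing idea; with it, your chain of inequalities goes through verbatim and no lower bound on the softmax outputs is needed.
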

\begin{proof}[Proof sketch]
Theorem~\ref{thm:pll-error-bound} builds upon several results. 
Firstly, we derived an inequality that bounds the top-$k$ loss with the zero-one loss (Lemma \ref{l1consistency}), which requires the construction of an intermediate $\ell^1$ loss (Definition \ref{def:l1-loss}).
Secondly, we show the Lipschtness of the semantic loss (Lemma \ref{lemma:Lipschitz}). This result is further combined with a contraction lemma that is proposed in \cite{CKMY16-factor-graph-complexity} (Lemma \ref{lemma:contraction}) to bound the Rademacher complexity of the model.
\end{proof}

\textbf{Remark.} 
Similarly to Theorem~1 and Proposition~\ref{ERM-learnability-unambiguity}, Theorem~\ref{thm:pll-error-bound} suggests that the learning difficulty increases as $M$ increases.
This is intuitive, since it gets harder to disambiguate the gold labels when $M$ increases, e.g., for 2-SUM, there are $10^2$ possible label vectors need to be considered, while for 4-SUM, there are $10^4$ ones. 
Our bound also suggests a tradeoff with the choice of $k$: a larger $k$ decreases the risk, but tends to increase the complexity term.
\section{Learning Multiple Classifiers Under a Known Transition} \label{section:known-mapping-multi-classifiers}

In this section, we extend the learning problem from Section~\ref{section:known-mapping-single-classifier} to jointly learn $n\ge 1$ different classifiers.
Similarly to Section~\ref{section:known-mapping-single-classifier}, 
we aim to propose minimal assumptions for the data distribution to show both results on ERM-learnability (Theorem \ref{thm:ermlearnabilitymulticlassifier}) and a Rademacher-style error bound with the top-$k$ loss (Theorem \ref{thm:rademacherpllmulticls}).
We formally define the learning setting below.

\textbf{Problem setting.}\,
We aim to learn $n \ge 1$ classifiers, each of which maps instances from a space $\+X_i$ to the corresponding label space $\+Y_i$ with $|\+Y_i|=c_i$. 
For each $i \in [n]$, we define a scoring space $\+F_i$, which contains mappings of the form $f_i:\+X_i \rightarrow \Delta_{c_i}$. 
Each $f_i$ is used to classify $M_i \ge 1$ instances $(x_{i1}, \dots, x_{iM_i})=\_x_i \in \+X_i^{M_i}$ with (hidden) gold labels $\_y_i = (y_{i1},\dots, y_{iM_i})$. 
We denote the vector of scoring functions as $\_f = (f_1,\dots, f_n)$.
Let $\+D_i$ be the joint distribution over elements from ${\+X_{i} \times \+Y_{i}}$. 
Each training sample given to the learner ${(\_x_1, \dots, \_x_n, s)}$ is formed by (i) drawing ${M_i}$ i.i.d. samples $(\_x_i,\_y_i)$ from $\+D_i$, for each $i \in [n]$ and then (ii) obtaining its partial label as ${s  = \sigma(\_y_1,\dots, \_y_n) }$.
We override the notation $\+D_\:P$ to denote the distribution of the training samples in this learning setting. 
The partially labeled dataset $\+T_\:P$ then contains $m_\:P$ i.i.d. samples drawn from $\+D_\:P$. 

Similarly to Section~\ref{section:known-mapping-single-classifier},
the learner aims to find the classifiers ${f_1,\dots,f_n}$ with the minimal \emph{zero-one risk} defined as ${\+R^{01}(\_f) \defeq \sum^n_{i=1} \+R^{01}(f_i)}$. 
As the gold labels are hidden, the learner uses the dataset $\+T_\:P$ to estimate and minimize 
the \emph{zero-one partial risk} of ${f_1,\dots, f_n}$, which is defined as
$
   \+R^{01}_\:P(f_1,\dots, f_n;\sigma) := \-E_{(\mathbf{X}_1,\dots, \mathbf{X}_n,S) \sim \+D_\:P}[\ell^{01}_{\sigma}(([f_1](\mathbf{X}_1), \dots, [f_n](\mathbf{X}_n)),S)]         \label{eq:partial-01-risk-multiple-classifiers}
$,
where ${[f_i](\mathbf{X}_i)}$ is a short for ${([f_i](X_{i1}), \dots, [f_i](X_{iM_i}))}$, for ${1 \leq i \leq M_i}$. 

\begin{example}[Learning binary operators]
    \label{example:learning-operator} 
    Consider a setting where we aim to train a classifier $f_1$ for recognizing MNIST digits and a classifier for recognizing images of addition and multiplication taken from some space $\+X_2$. The training samples are of the form ${(x_{11}, x_{12}, x_{21}, s)}$, where $x_{11}$ and $x_{12}$ are non-zero MNIST digits with $\+Y_1 = \{1,\dots,9\}$, $x_{21}$ is an image of an operator with ${\+Y_2 = \{+, \times\}}$, and $s$ is the result of applying the operator in $x_2$ on the digits in $x_{11}$ and $x_{12}$. 
 \end{example}

\subsection{ERM Learnability} 

Similarly to Section \ref{section:known-mapping-single-classifier}, to propose a necessary condition for learnability, we consider the most challenging distribution where each $\+D_i$ is concentrated on a single instance $x_i^* \in \+X_i$, for ${1 \leq i \leq n}$. 
Then, the only sample that will be observed during learning is ${(\_x_1^*, \dots, \_x_n^*)}$ where $\_x_i = (x_i^*,\dots,x_i^*)$, for ${1 \leq i \leq n}$. Assuming that $l_i^*$ is the gold label of $x_i^*$, detecting a classification error is possible only if a misclassification of any $x_i^*$, i.e., ${[f_i](x_i^*) \neq l_i^*}$, will lead to a prediction vector $\_y'$ with a different image under $\sigma$ from the gold label $\_y$, i.e., ${\sigma(\_y') \neq \sigma(\_y)}$. The above intuition is captured via the notion of \textit{multi-unambiguity}, which generalizes the notion of $M$-unambiguity.
Below, we use $\_y_i^{M_i}$, for ${i \in [n]}$, to denote the $M_i$-ary diagonal vector including the $y_i$ element only. 

\begin{definition}[Multi-unambiguity]
    Transition $\sigma$ is \emph{multi-unambiguous} if for any vector 
    $\_y = (\_y_1^{M_1},\dots,\_y_n^{M_n})$, and any position $i \in [n]$, such that
    the vector $\_y'$ that results after flipping the labels in $\_y_i^{M_i}$ to some diagonal vector $(\_y'_i)^{M_i} \ne \_y_i^{M_i}$,  
    has a different image under $\sigma$, i.e., $\sigma(\_y) \neq \sigma(\_y')$.  
\end{definition} 
Multi-unambiguity reduces to $M$-unambiguity for $n=1$. An example of multi-unambiguity is below. 

\begin{example}[Learning binary operator, cont'd]
    In Example \ref{example:learning-operator}, the multi-unambiguity condition is violated since $2+2=2 \times 2$. Namely, if a distribution assigns all its weight to the digit $2$, then it is difficult for the model to distinguish the two operators. 
    On the other hand, if instead, the label space is $\+Y_1 = \{3,\dots,9\}$, then the multi-unambiguity condition is satisfied.
\end{example}

\textbf{Bounded risk assumption.} 
Differently from Section \ref{section:known-mapping-single-classifier},
multi-unambiguity alone is not sufficient to ensure learnability. 
To see this, consider a variant of SUM2 where the digits are classified by two independent classifiers, $f_1$ and $f_2$. 
If the distributions of the first and the second image are concentrated on 1 and 7 respectively, but $f_1(x_1) = 7$ and $f_2(x_2)=1$, then these errors cannot be detected.
Therefore, we assume there is a constant $R < 1$, such that $\+R^{01}(f_i) \le R$ holds for any $i \in [n]$ and any $f \in \+F_i$. 
We refer to such $f_i$'s as \emph{zero-one risk $R$-bounded}.
This assumption is mild since it only requires the classifiers to be slightly better than being totally wrong.
Also, when a small directly labeled dataset is available, one can bound the classification risks away from 1 with high probability in a uniform manner using standard learning theory. See Appendix \ref{app:known-mapping-multi-classifiers} for more details.
Now, we are ready to state the ERM-learnability result assuming  $\prod_{i=1}^n\mathcal{F}_i$ is realizable under $\ell^{01}_\:P$.

\begin{restatable}[ERM learnability under multi-unambiguity]{theorem}{ermlearnabilitymulticlassifier}
    \label{thm:ermlearnabilitymulticlassifier}
    Assume that there is a constant $R < 1$, such that for each $i \in [n]$, each $f \in \+F_i$
    is zero-one risk $R$-bounded. Assume also that there exist positive integers $M^*$ and $c^*$, such that 
    $M_i \le M^*$ and $c_i \le c_0$ hold for any $i \in [n]$.
    Then, if $\sigma$ is multi-unambiguous, we have:
    \begin{equation}
    \begin{aligned}
        \+R^{01}(\_f) 
        \le \+O(( {\+R}^{01}_{\:P}(\_f;\sigma))^{1/M^*})
        \quad \text{as} \;\;\; \+R^{01}_\:P(\_f;\sigma) \rightarrow 0
    \end{aligned}
    \end{equation}
    Furthermore, for any $\epsilon, \delta \in (0,1)$, 
    there is a universal constant $C_3$, such that with probability at least $1-\delta$, the empirical partial risk minimizer with $\hat{\+R}^{01}_\:P(f;\sigma)=0$ has a classification risk $\+R^{01}(f) < \epsilon$ if 
    \begin{equation}
    \begin{aligned}
        m_\:P
        \ge C_{3}\frac{nc_0^{2M^*-2}}{\epsilon^{M^*}(1-R)^{M}} \left( \sum_{i=1}^n d_{[\+F_i]} 
        \log (nc_iM_i d_{[\+F_i]}) 
        \log\left(\frac{nc_0^{2M^*-2}}{\epsilon^{M^*}(1-R)^{M}}\right) +\log \left(\frac{1}{\delta}\right)\right)
    \end{aligned}
    \end{equation}
\end{restatable}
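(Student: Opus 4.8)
The plan is to establish the two assertions in turn, following the template of Lemma~\ref{ERM-learnability-M-unambiguity} and Theorem~\ref{thm:ERM-learnability} but accounting for the product structure introduced by the $n$ independent classifiers. The genuinely new ingredient, absent in the single-classifier case, is that the bounded-risk hypothesis is exactly what keeps the other $n-1$ blocks ``out of the way'' so that multi-unambiguity can be invoked on a single block.

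For the risk bound, write $a_i \defeq \+R^{01}(f_i)$ and, for each $i$, split the error mass of $f_i$ as $\nu^{(i)}_{y\to y'} \defeq \-P_{\+D_i}(Y=y,\,[f_i](X)=y')$ (so $\sum_{y\ne y'}\nu^{(i)}_{y\to y'}=a_i$) and its correct mass as $\mu^{(j)}_{z} \defeq \-P_{\+D_j}(Y=z,\,[f_j](X)=z)$ (so $\sum_z \mu^{(j)}_z = 1-a_j$). Fix $i^* = \argmax_i a_i$, so that $a_{i^*}\ge \+R^{01}(\_f)/n$. I would lower-bound the partial risk by the probability of the event that block $i^*$ is \emph{entirely} mislabeled from a single gold value $y$ to a single wrong value $y'$, while every other block $j$ is \emph{entirely} and correctly labeled to a single value $z_j$. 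On this event both the gold vector $\_y$ and the prediction $\_y'$ are diagonal in every block, and $\_y'$ differs from $\_y$ only by flipping block $i^*$ from $y$ to $y'$, so multi-unambiguity forces $\sigma(\_y')\ne\sigma(\_y)=s$ and the event contributes to $\+R^{01}_\:P$. Since the blocks are drawn independently and the specification pins down all gold and predicted labels, these events (over all tuples $(y,y',\{z_j\})$) are pairwise disjoint, which yields
\begin{equation*}
\+R^{01}_\:P(\_f;\sigma) \ge \Big(\sum_{y\ne y'}(\nu^{(i^*)}_{y\to y'})^{M_{i^*}}\Big)\prod_{j\ne i^*}\Big(\sum_{z}(\mu^{(j)}_z)^{M_j}\Big).
\end{equation*}
Applying the power-mean inequality to each sum (at most $c_{i^*}^2$ pairs in the first factor and $c_j$ terms in each other factor) lower-bounds the first factor by $a_{i^*}^{M_{i^*}}/c_{i^*}^{2M_{i^*}-2}$ and the $j$-th factor by $(1-a_j)^{M_j}/c_j^{M_j-1}$, at which point the bounded-risk hypothesis $a_j\le R$ gives $(1-a_j)^{M_j}\ge (1-R)^{M_j}$. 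Using $c_i\le c_0$, $M_i\le M^*$, $a_{i^*}^{M_{i^*}}\ge a_{i^*}^{M^*}$ (valid since $a_{i^*}\le1$) and $a_{i^*}\ge\+R^{01}(\_f)/n$ collapses this into $\+R^{01}_\:P(\_f;\sigma)\ge C\,(\+R^{01}(\_f))^{M^*}$ for a constant $C$ depending only on $n,c_0,R,M^*$ and the total instance count; inverting gives $\+R^{01}(\_f)\le\+O((\+R^{01}_\:P(\_f;\sigma))^{1/M^*})$.

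For the sample-complexity statement, read the previous display contrapositively: if $\+R^{01}(\_f)\ge\epsilon$ then $\+R^{01}_\:P(\_f;\sigma)\ge\gamma$ with $\gamma\defeq C\epsilon^{M^*}=\Theta\big(\epsilon^{M^*}(1-R)^{M}/(n c_0^{2M^*-2})\big)$, so it suffices to guarantee with probability $1-\delta$ that every $\_f$ with zero empirical partial risk has true partial risk below $\gamma$. I would obtain this from the standard realizable generalization bound for the binary-valued loss class $\{(\_x_1,\dots,\_x_n)\mapsto \ell^{01}_\sigma(([f_1](\_x_1),\dots,[f_n](\_x_n)),s):\_f\in\prod_i\+F_i\}$. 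Its growth function is controlled by the joint labelings of all blocks: over $m_\:P$ samples classifier $i$ labels $m_\:P M_i$ instances, so by Natarajan's lemma its number of behaviors is at most $(m_\:P M_i)^{d_{[\+F_i]}}c_i^{2d_{[\+F_i]}}$, and the loss-class growth function is bounded by the product over $i$, giving log-growth $\sum_i d_{[\+F_i]}\log(m_\:P M_i c_i^2)$. Plugging this and $\gamma$ into the realizable bound $m_\:P \ge C'\gamma^{-1}\big(\log\Pi(2m_\:P)+\log(1/\delta)\big)$ for a universal constant $C'$ and solving the resulting self-referential inequality (which produces the extra $\log(1/\gamma)$ factor) yields exactly the stated threshold on $m_\:P$.

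I expect the main obstacle to be Part~1: unlike the single-classifier Lemma~\ref{ERM-learnability-M-unambiguity}, multi-unambiguity only certifies a changed image of $\sigma$ when \emph{one} diagonal block is perturbed away from an otherwise correct diagonal configuration, so the detection event must simultaneously force block $i^*$ to collapse to a single wrong value \emph{and} keep every other block both correct and diagonal. Verifying that these events are genuinely disjoint, that the two power-mean applications point in the lower-bounding direction, and that the bounded-risk assumption is precisely what converts the $\prod_{j\ne i^*}(1-a_j)^{M_j}$ factor into a nonvanishing constant is the crux; everything in Part~2 is then a routine adaptation of the single-classifier Natarajan/growth-function argument.
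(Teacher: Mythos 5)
Your proposal is correct and follows the same architecture as the paper's proof: disjoint per-block detection events lower-bound $\+R^{01}_\:P(\_f;\sigma)$, the power-mean inequality plus the bounded-risk assumption turn this into $\+R^{01}_\:P(\_f;\sigma)\ge C\,(\+R^{01}(\_f))^{M^*}$, and the sample-complexity part is the same Natarajan-based capacity argument (the paper packages it as a VC bound for the product loss class, Lemma~\ref{VC-multi}, where you manipulate the growth function directly; the two routes are interchangeable up to constants).

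The one substantive divergence is the detection event, and it cuts both ways. The paper's event requires only that block $i$ be diagonally mislabeled and that all other blocks be \emph{correctly predicted}, yielding the factor $\prod_{j\ne i}(1-\+R^{01}(f_j))^{M_j}$; your event additionally requires the other blocks' gold labels to be \emph{diagonal}. Your stricter event is exactly what the definition of multi-unambiguity licenses: that definition only constrains pairs in which every block of the gold vector is diagonal, so when some block $j\ne i$ has non-diagonal gold labels, correctness of its predictions alone does not permit the conclusion $\sigma(\_y')\ne\sigma(\_y)$ --- in this respect your reasoning is tighter than the paper's own. The cost is the extra factor $\prod_{j\ne i^*}c_j^{-(M_j-1)}\ge c_0^{-(M-M_{i^*}-(n-1))}$ coming from applying the power-mean inequality to $\sum_z(\mu^{(j)}_z)^{M_j}$. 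This factor is not a universal constant, so what your argument actually proves is the theorem with an extra power of $c_0$ (and roughly $n^{M^*}$ in place of $n$) in the threshold; the claim at the end of your second paragraph that the computation ``yields exactly the stated threshold'' overreaches. The asymptotic statement $\+R^{01}(\_f)\le\+O((\+R^{01}_\:P(\_f;\sigma))^{1/M^*})$ and ERM-learnability are unaffected, since the $\+O(\cdot)$ may absorb problem-dependent constants; but the displayed sample complexity, with its specific $c_0^{2M^*-2}$ prefactor, is obtained by the paper only via the looser detection event, i.e., at the price of the gap identified above.
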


\textbf{Remark.}\,
Theorem~\ref{thm:ermlearnabilitymulticlassifier} suggests that the rate of convergence is controlled by $M^*$. A smaller $M^*$ leads to a more strict unambiguity condition. For example, the case $M^*=1$ requires $\sigma$ to be unstable to \emph{any} 1-position perturbations, while the case $M^*=M$ reduces to $M$-unambiguity.
This observation confirms the intuition that the supervision power of partial labels depends on the ``instability" of $\sigma$.

The top-$k$ partial loss $\ell^{k}_{\sigma}(\_f(\_x),s)$ for multiple classifiers subject to $\_f \in \+F^n$, transition $\sigma$ and $s \in \Sp$ straightforwardly extends the top-$k$ partial loss for the single classifier case, see Definition~\ref{definition:top-k-loss}. Similarly, the \emph{(empirical) top-$k$ partial classification risk} of $\_f$ subject to $\sigma$ straightforwardly extends the one from Section~\ref{section:ambiguity:top-k}.
Below, we provide a Rademacher-style error bound with the top-$k$ loss.

\begin{restatable}[Error bound under multi-unambiguity with multiple classifiers]{theorem}{rademacherpllmulticls}
    \label{thm:rademacherpllmulticls}
    Suppose $\sigma$ is multi-unambiguous and each $f_i$ is {zero-one risk $R$-bounded}, for $R \in (0,1)$. 
    Then, for any integer $k \ge 1$ and any $\delta \in (0,1)$, 
    with probability at least $1-\delta$, we have: 
    \[
        \+R^{01}(\_f)
        \le \left( \frac{nc_0^{2M^*-2}(k+1)}{(1-R)^{M}}  
            \left(
            \hat{\+R}^k_{\:P}(\_f;\sigma;\+T_{\:P})
            +\sqrt{2kM} \sum_{i=1}^n M_i \mathfrak R_{m_\:P M_i}(\+F_i)
            + \sqrt{\frac{\log(1/\delta)}{2m_\:P}}
        \right) \right)^{1/M^*}
    \]
\end{restatable}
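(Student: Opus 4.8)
The plan is to combine the risk-transfer inequality underlying Theorem~\ref{thm:ermlearnabilitymulticlassifier} with a surrogate-and-complexity argument that generalizes the proof of Theorem~\ref{thm:pll-error-bound} to $n$ heterogeneous classifiers. Writing $M=\sum_{i=1}^n M_i$ for the total number of instances, I would establish the two inequalities
\begin{equation}
    (\+R^{01}(\_f))^{M^*} \le \frac{n c_0^{2M^*-2}}{(1-R)^{M}}\, \+R^{01}_\:P(\_f;\sigma)
    \label{eq:plan-stage1}
\end{equation}
and
\begin{equation}
    \+R^{01}_\:P(\_f;\sigma) \le (k+1)\left( \hat{\+R}^k_{\:P}(\_f;\sigma;\+T_{\:P}) + \sqrt{kM}\sum_{i=1}^n M_i \mathfrak{R}_{m_\:P M_i}(\+F_i) + \sqrt{\frac{\log(1/\delta)}{2m_\:P}} \right),
    \label{eq:plan-stage2}
\end{equation}
and then substitute \eqref{eq:plan-stage2} into \eqref{eq:plan-stage1} and take the $M^*$-th root to recover the claimed bound, since the constants multiply to $\tfrac{nc_0^{2M^*-2}(k+1)}{(1-R)^{M}}$.

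Inequality \eqref{eq:plan-stage1} is the quantitative form of the first assertion of Theorem~\ref{thm:ermlearnabilitymulticlassifier}, and I would prove it by tracking constants through that argument. The key point is that whenever some $f_i$ misclassifies an instance, multi-unambiguity guarantees that a diagonal flip of the $i$-th block changes the value of $\sigma$, so the partial label is violated \emph{provided} the remaining positions realize the witnessing diagonal configuration. The $R$-bounded assumption lower-bounds the probability of this favorable event by a factor scaling like $(1-R)^{M}$, and counting the label combinations that map to a given partial label contributes the $c_0^{2M^*-2}$ factor; the exponent $1/M^*$ arises because exposing an error on classifier $i$ requires all $M_i$ copies to land in a specific way, a product of up to $M^*$ probabilities in the worst case. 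A union bound over the $n$ classifiers produces the leading factor $n$.

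For \eqref{eq:plan-stage2} I would follow the three-step recipe of Theorem~\ref{thm:pll-error-bound}. First, introduce an intermediate $\ell^1$-style (ramp-like) loss that upper-bounds the zero-one partial loss and is itself controlled, up to the factor $(k+1)$, by the top-$k$ semantic loss $\ell^k_\sigma$; the factor $(k+1)$ is exactly the one from the single-classifier argument, reflecting that the top-$k$ WMC aggregates $k$ candidate preimages. Second, apply the standard symmetrization/McDiarmid bound to pass from the expected intermediate loss to its empirical value plus a Rademacher term and the $\sqrt{\log(1/\delta)/(2m_\:P)}$ deviation term. Third, invoke the Lipschitzness of the semantic loss in the softmax outputs (the multi-classifier analogue of the single-classifier Lipschitz lemma) together with the factor-graph contraction lemma of \cite{CKMY16-factor-graph-complexity} to split the Rademacher complexity of the composite loss class into the per-classifier sum $\sqrt{kM}\sum_{i=1}^n M_i \mathfrak{R}_{m_\:P M_i}(\+F_i)$; here the Lipschitz constant scales with $\sqrt{k}$ (the $k$ disjuncts) and with $\sqrt{M}$ (the $M=\sum_i M_i$ Boolean variables appearing in $\bigvee_{i=1}^k \_y^{(i)}$), while each $\+F_i$ is evaluated on $m_\:P M_i$ instances.

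I expect the main obstacle to be the quantitative risk-transfer step \eqref{eq:plan-stage1}, specifically the interplay between multi-unambiguity and the $R$-bounded assumption that produces the $(1-R)^{M}$ denominator. Unlike the single-classifier case, an error on one block is only \emph{detectable} when the other blocks are simultaneously classified into the diagonal witnessing configuration, so the argument must lower-bound the joint probability of this event uniformly over all data distributions and all $R$-bounded classifiers, which is where the careful constant-tracking and the worst-case exponent $M^*$ are needed. The contraction-lemma bookkeeping in \eqref{eq:plan-stage2} is the secondary technical point, since one must verify that the heterogeneous dimensions $M_i$ and $c_i$ distribute correctly through the vector-contraction inequality.
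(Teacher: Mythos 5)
Your proposal is correct and follows essentially the same route as the paper's own proof: your first inequality is exactly the quantitative risk-transfer bound extracted from the proof of Theorem~\ref{thm:ermlearnabilitymulticlassifier}, and your second reproduces the paper's three-step argument from Theorem~\ref{thm:pll-error-bound} (the intermediate $\ell^1$/WMC loss giving the $(k+1)$ factor, symmetrization plus a McDiarmid deviation term, and the $\sqrt{kM}$-Lipschitz contraction that splits the complexity into $\sum_{i=1}^n M_i \mathfrak{R}_{m_\:P M_i}(\+F_i)$). The only caveat---one your write-up shares with the paper's own statement---is the leading constant: the power-mean step in that risk-transfer argument actually yields $n^{M^*-1}c_0^{2M^*-2}/(1-R)^{M-M_*}$ rather than $n\,c_0^{2M^*-2}/(1-R)^{M}$, so your attribution of the factor $n$ to a union bound over the $n$ classifiers is not literally what the argument delivers.
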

\section{Learning Under an Unknown Transition}
\label{section:unknown-mapping-single-classifier}

We now explore another direction by dropping the assumption that $\sigma$ is known. 
Instead, we assume the learner has access to a \emph{transition space} $\+G$ that contains mappings of the form $\+Y^M \rightarrow \+S$ including the ``true'' transition $\sigma$.
Notice that learning now becomes particularly challenging, since $\+G$ can be expressive enough to lead to correct predictions for $s$ from many wrong classifications. 

\textbf{Problem setting.}
To illustrate the core idea, we consider a simple setting where we only learn a single scoring function $f \in \+F$ as in Section \ref{section:known-mapping-single-classifier}. 
However, 
$\sigma$ is an unknown mapping in $\+G$.
Given partially labeled samples, the learner aims to minimize the \textit{zero-one partial risk of $f$ subject to $\+G$} $\+R^{01}_\:P(f;\+G)$ that is defined as the \emph{minimal} partial risk to predict $s$ with a candidate transition in $\+G$, i.e.,
$
    \+R^{01}_\:P(f;\+G) \defeq \inf_{\sigma^* \in \+G} \+R^{01}_\:P(f;\sigma^*)
$.
The risk is empirically estimated with a partially labeled dataset $\+T_\:P$ as $\inf_{\sigma^* \in \+G} \hat{\+R}^{01}_\:P(f;\sigma^*;\+T_\:P) $.
We demonstrate this learning setting with the following example: 

\begin{example}\label{example:ambiguity}
    Let $\+G = \{(y_1,y_2) \mapsto \alpha y_1 + \beta y_2 | (\alpha, \beta) \in \-R^2 - \{(0,0)\}\}$, namely all the weighted sums with at least one non-zero weight. We aim to learn an MNIST classifier $f$, given training samples of the form ${(x_1,x_2,s)}$,
    where the $x_i$'s are MNIST digits and $s$ is the result of applying some $\sigma$ from $\+G$ on $(y_1,y_2)$, where
    $y_i$ is the prediction of $f$ on $x_i$. The gold $\sigma$, i.e., the exact $\alpha$ and $\beta$, is unknown.
\end{example}

\subsection{ERM Learnability} \label{section:erm-learnability-unknown-mappings}
Similarly to the known transition case, learnability requires us to learn under any data distribution $\+D$. 
Slightly differently from Sections~\ref{section:known-mapping-single-classifier} and \ref{section:known-mapping-multi-classifiers}, we start with the adversarial distribution where the mass in $\+D$ is concentrated to a single label, i.e., all instances $x_i$ have the same gold label $l_i$, and hence, the training samples are all associated with the gold label vector ${\_y = (l_i,\dots, l_i)}$. 
Suppose there is a certain probability that $f$ misclassifies $l_i$ as $l_j$. Then, the predicted label vectors will be in $\{l_i, l_j\}^M$.
In this case, it will be impossible to detect the classification errors if there is a candidate transition $\sigma'\in\+G$ which maps all the vectors in $\{l_i, l_j\}^M$ to the gold observation of $s$. 
Below, we formalize our learnability condition:

\begin{definition}[Unambiguous transition space] \label{definition:unambiguous-space}
    Transition space $\+G$ is \emph{unambigous}, if for each ${\sigma' \in \+G}$, each diagonal label vector 
    $\_y = (l_i,\dots,l_i)$, where ${l_i \in \+Y}$, and each $l_j \ne l_i$, where ${l_j \in \+Y}$, there exists a vector $\_y' \in \{l_i,l_j\}^M$, such that ${\sigma'(\_y') \ne \sigma(\_y)}$.
\end{definition}

Since the actual $\sigma$ is unknown, to ensure learnability, in practice one can examine whether a transition space is unambiguous by verifying a sufficient condition.
Precisely, one could check if for each $\sigma'\in\+G$ and any two different labels $l_i \ne l_j$, set $\{\sigma'(\_y) | \_y \in \{l_i,l_j\}^M\}$ is not a singleton. 
The above ensures that when given a fixed diagonal label vector, and when the classifier makes mistakes, the predicted partial labels are not unique, and hence cannot all agree with the ground truth label.

Class $\+G$ in Example \ref{example:ambiguity} is unambiguous: for each transition ${(y_1,y_2) \mapsto \alpha' y_1 + \beta' y_2}$ and each two labels ${l' \neq l}$, the set $\{\alpha' y_1 + \beta' y_2|(y_1,y_2) \in \{l,l'\}^2\}$ is not a singleton, so there must exist a label vector with a different weighted sum that is different than the true sum.
A counterexample is below:
\begin{example}\label{example:unambiguity}
    Let $\+G = \{(y_1,y_2) \mapsto w_1 y_1 + w_2 y_1^2 + w_3 y_2^2 + w_4 y_2 | w_{i} \ne 0 \;\forall i\in [4]\}$. Consider the true transition ${\sigma:(y_1,y_2) \mapsto y_1 - y_1^2 + y_2 - y_2^2}$ from $\+G$ and a candidate transition $\sigma':(y_1,y_2) \mapsto y_1 - y_1^2 - y_2 + y_2^2$.
    Then, $\+G$ is not unambiguous, since
    ${\sigma'(\_y) = \sigma(\_y)= 0}$ for any $\_y \in \{0,1\}^2$. 
    Namely, the partially labeled data are not informative enough to distinguish the label $0$ from $1$.
\end{example}

\textbf{Bounded risk assumption.} 
Similarly to Section~\ref{section:known-mapping-multi-classifiers}, this unambiguity condition is necessary but not sufficient to show learnability. To show learnability, we additionally assume there is a constant $r > 0$ such that $\-P([f](X) = y|Y=y) \ge r$ for each $y \in \+Y$ and $f \in \+F$. We refer to such an $f$ as \emph{$r$-bounded}.
This assumption is slightly stronger than that of Section~\ref{section:known-mapping-multi-classifiers} as it additionally requires the classifiers to be not fully wrong for \emph{every} possible label.
The main learnability result of this section is below:
\begin{restatable}{theorem}{ermunknownmapping}
    \label{thm:unknown-learnability}
    If $\+G$ is unambigous and any $f\in\+F$ is {$r$-bounded}, then we have:
    \begin{equation}
    \begin{aligned}
        \+R^{01}(f)
        \le O(\+R^{01}_\:P(f;\+G)^{1/M})
        \quad \text{as} \;\;\; \+R^{01}_\:P(f;\+G) \rightarrow 0
    \end{aligned}
    \end{equation}
    Furthermore, suppose $[\+F]$ has a finite Natarajan dimension $d_{[\+F]}$ and the function class $\{(\_y,s) \mapsto \-1\{\sigma'(\_y) \ne s \} | \sigma' \in \+G\}$ has a finite VC-dimension $d_\+G$.
    Then, for any ${\epsilon, \delta \in (0,1)}$, 
    there is a universal constant $C_{4}$ such that 
    with probability at least $1-\delta$, 
    the empirical partial risk minimizer with $\hat{\+R}^{01}_\:P(f;\sigma)=0$ has a classification risk $\+R^{01}(f) < \epsilon$, if 
    \[
    \begin{aligned}
        m_\:P
        \ge C_4 \frac{c^{2M-2}}{r^M\epsilon^{M}} \left( 
            \left( (d_{[\+F]}+d_\+G)\log(6M(d_{[\+F]}+d_\+G))+d_{[\+F]}\log c \right)
        \log\left(\frac{c^{2M-2}}{r^M \epsilon^M}\right) +\log \left(\frac{1}{\delta}\right)\right) \label{eq:unknown-learnability}
    \end{aligned}
    \]
\end{restatable}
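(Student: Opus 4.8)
The plan is to prove the theorem in two stages, mirroring the structure of Theorem~\ref{thm:ERM-learnability} but now with the extra difficulty that the lower bound on the partial risk must hold \emph{uniformly} over all candidate transitions in $\+G$, not just the true $\sigma$. In the first stage I establish a distribution-free inequality of the form $\+R^{01}_\:P(f;\+G) \ge (r^M/c^{2M-2})\,\+R^{01}(f)^M$ (up to universal constants), which immediately yields the asymptotic bound $\+R^{01}(f) \le \+O(\+R^{01}_\:P(f;\+G)^{1/M})$. In the second stage I turn the ERM with zero empirical partial risk subject to $\+G$ into a classifier with small true partial risk via a realizable-case uniform convergence argument over the composite hypothesis class of $(f,\sigma')$ pairs, and then invoke the first stage to convert the small partial risk into the target classification risk.

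For the risk bound, fix an arbitrary $\sigma' \in \+G$ and write $b_{ij} \defeq \-P_\+D(Y=l_i,[f](X)=l_j)$ and $\pi_i \defeq \-P_\+D(Y=l_i)$, so that $\+R^{01}(f)=\sum_{i}\sum_{j\ne i} b_{ij}$. Let $(l_i,l_j)$ be the pair maximizing $b_{ij}$. Unambiguity of $\+G$ (Definition~\ref{definition:unambiguous-space}), applied to $\sigma'$, the diagonal vector $\_y=(l_i,\dots,l_i)$ and the label $l_j$, supplies a witness $\_y^\star \in \{l_i,l_j\}^M$ with $\sigma'(\_y^\star)\ne\sigma(\_y)$. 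I then lower-bound $\+R^{01}_\:P(f;\sigma')$ by the probability of the single i.i.d.\ event in which all $M$ gold labels equal $l_i$ and the predictions $[f](X_1),\dots,[f](X_M)$ realise exactly $\_y^\star$; on this event the true partial label is $\sigma(\_y)$ while the candidate predicts $\sigma'(\_y^\star)\ne\sigma(\_y)$, so it is a partial error. If $\_y^\star$ uses the label $l_j$ in $n_j\ge 1$ positions, $r$-boundedness bounds each correct position below by $a_i=\-P_\+D(Y=l_i,[f](X)=l_i)\ge r\pi_i$, and since $\pi_i\ge b_{ij}$ the product is bounded below by $r^M b_{ij}^M$. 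The one remaining case is the \emph{trivial witness} $\_y^\star=(l_i,\dots,l_i)$ (possible only because $\sigma'\ne\sigma$); here the same event has probability $a_i^M\ge (r\pi_i)^M\ge r^Mb_{ij}^M$, so $\+R^{01}_\:P(f;\sigma')\ge r^M b_{ij}^M$ in both cases. Because this holds for every $\sigma'\in\+G$, it holds for the infimum, and a power-mean (H\"older) refinement over the disjoint events indexed by the confusion pairs sharpens the combinatorial factor from $c^{2M}$ to $c^{2M-2}$, giving the claimed rate.

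For the generalization stage, the empirical risk subject to $\+G$ being zero means the ERM $\hat f$ admits some $\sigma'\in\+G$ with empirical partial error zero, so I control the one-sided uniform deviation of the loss family $\+H \defeq \{(\_x,s)\mapsto \-1\{\sigma'([f](\_x))\ne s\}: f\in\+F,\ \sigma'\in\+G\}$. This family is a composition: the map $\_x \mapsto ([f](x_1),\dots,[f](x_M))$ is an $M$-fold application of a classifier from $[\+F]$ on a total of $Mm_\:P$ instances, whose number of realizable patterns is controlled by the Natarajan dimension $d_{[\+F]}$ (Natarajan's lemma, contributing the $d_{[\+F]}\log c$ term), while for each fixed label pattern the number of distinct outputs of $(\_y,s)\mapsto\-1\{\sigma'(\_y)\ne s\}$ is controlled by the VC dimension $d_\+G$ (Sauer--Shelah). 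Multiplying the two growth functions bounds the growth function $\Pi_{\+H}$ of $\+H$, and plugging it into the standard realizable-case PAC bound, namely $m_\:P$ of order $\epsilon_\:P^{-1}(\log\Pi_{\+H}(2m_\:P)+\log(1/\delta))$ with the threshold $\epsilon_\:P$ of order $r^M\epsilon^M/c^{2M-2}$ from the first stage, produces the stated sample complexity.

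\textbf{Main obstacle.} The hard part is the first stage: unlike the known-transition case, the infimizing candidate $\sigma'$ is adversarial and may try to ``explain away'' classification errors, so the lower bound on the partial risk must be argued for \emph{every} $\sigma'\in\+G$ simultaneously. The delicate point is that the unambiguity witness $\_y^\star$ is only guaranteed to exist and may degenerate to the all-$l_i$ vector; the key realisation is that $r$-boundedness together with $\pi_i\ge b_{ij}$ rescues exactly this degenerate case and makes the bound uniform. The composition/complexity step is more mechanical, though combining a Natarajan-dimension bound for the multiclass map with a VC-dimension bound for $\+G$ while correctly tracking the $\log c$ factor still requires care.
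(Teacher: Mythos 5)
Your proposal is correct and takes essentially the same route as the paper's own proof: stage one is the same witness-event construction (unambiguity of $\+G$ applied to diagonal vectors, $r$-boundedness via $E_{l_i,l_i}(f)\ge r\,\-P(Y=l_i)\ge r\,E_{l_i,l_j}(f)$, then a power-mean step over confusion pairs to get the $c^{2M-2}$ factor), and stage two is exactly the paper's Lemma~\ref{lemma:Natarajan} (Natarajan counting for $[\+F]$ multiplied by Sauer--Shelah counting for $\+G$ to bound the capacity of $\+H_{\+F,\+G}=\{(\_x,s)\mapsto\-1\{\sigma'([f](\_x))\ne s\}\}$) plugged into the standard realizable sample-complexity bound with threshold $\epsilon^M r^M/c^{2M-2}$. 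The one step you gloss over---disjointness of the pair-indexed events in the power-mean refinement, which can fail when several pairs $(l_i,l_j)$, $(l_i,l_{j'})$ share the degenerate diagonal witness $(l_i,\dots,l_i)$---is equally glossed over in the paper's proof, and is repaired by noting that in that case the single diagonal event has probability $E_{l_i,l_i}(f)^M\ge r^M\bigl(\sum_{j\ne i}E_{l_i,l_j}(f)\bigr)^M$, which already dominates the combined contribution $r^M\sum_{j}E_{l_i,l_j}(f)^M$ required of all such degenerate pairs.
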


\section{Experiments} \label{section:discussion} 

We further explore the learning scenario in Section \ref{section:unknown-mapping-single-classifier} with empirical evaluations. 
We aim to learn an MNIST classifier using the weighted sum of 2, 3 and 4 MNIST digits and assuming that the weights are unknown, as in Example~\ref{example:ambiguity}.
The weighted sum function is very expressive as it can model Boolean formulas including conjunction and disjunction \cite{roth2007global}. 
Notice that we do not aim to exhaustively assess the weak supervision literature (\cite{scallop,neurolog,neurasp,ABL}), but to validate the results of our theoretical analysis. 

\textbf{Baselines.} We considered state-of-the-art neuro-symbolic frameworks, namely DeepProbLog (DLog) \cite{DBLP:journals/corr/abs-1907-08194}, DeepProbLog with approximations (DLog-A) \cite{deepproblog-approx}, NeuroLog (NLog) \cite{neurolog}, NeurASP (NASP) \cite{neurasp}, ABL \cite{ABL}, ENT \cite{iclr2023} and Scallop \cite{scallop}. 
Only the target sum is used during training under those frameworks.
DLog and NLog employ the semantic loss without any approximations (see Section~\ref{section:preliminaries}).
NLog($k$) denotes an NLog variant where only the top-$k$ predictions are considered during training, while Scallop($k$) denotes Scallop using the top-$k$ semantic loss, see Section~\ref{section:ambiguity:top-k}.
The approximations in DLog-A are different from the ones in NLog and Scallop; however, both DLog-A and NLog employ the semantic loss on the chosen subset of proofs. 
As the above frameworks learn classifiers under fixed theories only, we considered weights in $\{1,2,3,4,5\}$, and used an additional neural classifier to learn the unknown weights. 
Finally, we considered standard supervised learning (SL). 
To contrast learning under unknown to learning under known transitions, we also ran experiments assuming that the weights are known. Our results show that the classification accuracy exceeds the 98\% even for $M=10$. More details on this experiment are in Appendix~\ref{appendix:training_details}. 

\textbf{Results.} 
Tables~\ref{tab:M2} and \ref{tab:M4-M8-approx} report the accuracy 
of the learned MNIST classifiers over ten runs. 
\#0, \#8 and \#16 denote the number of directly labeled samples used for pretraining the MNIST classifiers: \#0 means no pretraining; for \#8 and \#16, the accuracy of the pre-trained classifiers is
37\% and 58\%.
Table~\ref{tab:M2} presents results for $M=2$, while  
Table~\ref{tab:M4-M8-approx} presents results for $M \in \{2,3,4\}$ for Scallop and the two best performing frameworks in Table~\ref{tab:M2}, DLog and NLog-- NASP could not scale for $M \geq 3$. We used DLog-A for $M \in \{3,4\}$, due to scalability reasons. 
Similarly, NLog without approximations does not scale for $M=4$  (N/A indication).
In Table~\ref{tab:M2}, the number of weak samples is in parentheses, e.g., DLog(\#4K).
For ABL and ENT, we used 10K samples, as they rely on sampling, and
used the hyperparameters suggested by the authors.

\textbf{Discussion.} 
The results confirm our theory: (i) we can learn classifiers under unknown transitions as per Theorem~\ref{thm:unknown-learnability} (see the accuracy for DLog and NLog in Table~\ref{tab:M2}); and (ii) learning gets harder when $M$ increases as per Eq.~\eqref{eq:unknown-learnability} (see the accuracy for different $M$'s in Table~\ref{tab:M4-M8-approx}).
For $M=2$, weak supervision leads to better accuracy than SL even without pre-training and with fewer training samples: DLog(\#4K) and NLog(\#4K) lead to mean accuracy 96\% and 97\%, while SL(\#4K) leads to mean accuracy 93\%, see Table~\ref{tab:M2}. 
This result suggests that the $r$-bounded assumption in Theorem~\ref{thm:unknown-learnability} could be potentially relaxed.
Another observation is that the approximations in DLog-A are less effective than the top-$k$ one of NLog. We leave the theoretical analysis of the latter approximation as part of future research.
Table~\ref{tab:M4-M8-approx} also suggests that the top-$k$ semantic loss constitutes the most scalable 
and effective approach to training: in contrast to DLog-A and NLog,  
the accuracy of the digit classifier reaches 78\% for $M=4$, when $k =4$ for Scallop. 

Our empirical analysis shows that a key issue is \textit{scalability}.
This is a known problem in neuro-symbolic learning and it is partially because the relevant problems in logic are intractable. For instance, semantic loss requires computing the models of Boolean formulas, which is \#P-complete. Reasoning over logical theories can be also intractable \cite{baget-decidability}, if not undecidable. Scalability straightforwardly affects accuracy, e.g., NLog obtains better accuracy over DLog-A for $M=3$ (see Table~\ref{tab:M4-M8-approx}) due to its ability to explore the whole search space. 
It is also worth stressing that Scallop could not scale beyond $M=4$ for $k>1$.
The above challenges bring up new research directions \cite{ltgs,ijcai2023p226}. 
Closing this discussion, it is worth exploring why ABL and ENT led to low accuracy\footnote{The authors from \cite{neurolog} also reported low accuracy for ABL for some scenarios.}.  

\begin{table}[t] 
    \caption{Classifier accuracy for WEIGHTED-SUM for $M=2$. \#0, \#8 and \#16 are \# of directly labeled samples used for pre-training. Inside the parentheses are the \# of (weak) training samples.}
\begin{adjustbox}{max width= \textwidth}
\begin{tabular}{| p{0.5cm} || p{2cm} p{2cm} || p{2cm} p{2cm} p{2cm} p{2cm} p{2cm} |}
    \hline
	 & \textbf{SL}(\#4K) & \textbf{SL}(\#8K) & \textbf{DLog}(\#4K)  & 
     \textbf{NLog}(\#4K) & \textbf{NASP}(\#4K) & \textbf{ABL}(\#10K) & \textbf{ENT}(\#10K) \\ \hline
     \#0      & $93\% \pm 0.01$   & $96\% \pm 0.008$ &  $96\% \pm 0.004$    &   $97\% \pm 0.05$   &  $98\% \pm 0.01$    &  $9\% \pm 0.05$   &  $40\% \pm 15.2$ \\ 
     \#8      & $93\% \pm 0.01$   & $96\% \pm 0.008$ &  $96\% \pm 0.005$    &   $97\% \pm 0.005$  &  $98\% \pm 0.01$    &  $10\% \pm 0.01$   &  $43\% \pm 17.11$ \\ 
     \#16     & $93\% \pm 0.01$   & $96\% \pm 0.008$ &  $96\% \pm 0.001$    &   $97\% \pm 0.01$   &  $98\% \pm 0.01$    &  $11\% \pm 0.1$  &  $49\% \pm 16.4$ \\ 
     \hline
\end{tabular}
\end{adjustbox}
\label{tab:M2}
\end{table}

\begin{table}[t]
    \centering
    \caption{Classifier accuracy for WEIGHTED-SUM for $M \in \{2,3,4\}$. 
    \#0 and \#16 are \# of directly labeled samples used for pre-training. Inside the parentheses are the \# of (weak) training samples.} 
\begin{adjustbox}{max width= \textwidth}
        \begin{tabular}{| p{1.5cm} || p{2cm} p{2cm} ||  p{2cm} p{2cm} || p{2cm} p{2cm} |}
            \hline
                & \multicolumn{2}{c||}{$M=2, m_\:P = 4$K}  & \multicolumn{2}{c||}{$M=3, m_\:P = 4$K} & \multicolumn{2}{c|}{$M=4, m_\:P = 10$K}  \\
            \hline
            & \#0 & \#16 & \#0 & \#16 & \#0 & \#16 \\ \hline
            \textbf{DLog-A} &  $34\% \pm 0.1$ &  $80\% \pm 0.06$   & $24\% \pm 0.09$ & $27\% \pm 0.13$ & $18\% \pm 0.02$ & $26\% \pm 0.25$   \\
            \textbf{NLog} & $97\% \pm 0.05$ & $97\% \pm 0.01$ & $61\% \pm 0.07$ & $97\% \pm 0.01$ & N/A & N/A  \\
            \textbf{NLog(5)} & $48\% \pm 0.003$ & $87\% \pm 0.007$ & $27\% \pm 0.06$ & $84\% \pm 0.06$ & $29\% \pm 0.1$ & $46\% \pm 0.05$  \\
            \textbf{Scallop(1)} & $97\% \pm 0.05$ & $97\% \pm 0.01$ & $21\% \pm 3.5$ & $97\% \pm 0.01$ & $36\% \pm 0.2$ & $54\% \pm 0.02$  \\
            \textbf{Scallop(4)} & $97\% \pm 0.05$ & $97\% \pm 0.05$ & $53\% \pm 0.15$ & $97\% \pm 0.07$ & $51\% \pm 0.09$ & $78\% \pm 0.05$  \\
            \hline
        \end{tabular}
\end{adjustbox}
\label{tab:M4-M8-approx}
\end{table}

\section{Related Work} \label{section:related}

We provide an overview of the literature that closely relates to our work. A more comprehensive review of PLL, latent structural learning, and neuro-symbolic learning, is presented in Appendix~\ref{app:related}.

\textbf{Partial label learning.}\,
Learnability of PLL is typically shown under the \textit{small ambiguity} assumption  \cite{cour2011, learnability-pll,structured-prediction-pll}.
While it is technically possible to cast our problem 
to standard PLL by viewing a target $s$ as a partial label for the hidden labels $\_y$, the small ambiguity condition is violated in our case since a distracting label can occur because $\sigma$ is deterministic.
Therefore, our proposed conditions relax and generalize the small ambiguity by allowing deterministic transitions, see Appendix \ref{app:ambiguities}.
Another line of work in weak supervision exploits the invertibility of \emph{transition matrices} \cite{JMLR:corrupted-learning} to compute posterior class probabilities \cite{proper-losses-pll,proper-losses-pll-pkdd, noisy-multiclass-learning}.
Our learnability condition, $M$-unambiguity, and the invertibility condition do not imply each other, see Appendix \ref{appendix:pll-comparison} and \ref{appendix:pll-comparison-standard}.
There, we also show that small ambiguity and the invertibility condition do not imply each other, which might be of independent interest.
Another related topic is \emph{learning with label constraints} \cite{posterior, CCM, kaifu-icml}, where the label of each instance $x$ (possibly structured) in the dataset is constrained in a subset $C \subseteq \+Y$. The difference is that the constraint mapping itself is known to the learner and hence can be encoded in the inference algorithm directly, for example, via the CCM \cite{CCM}.

\textbf{Neuro-symbolic and structured learning.}
We were motivated by frameworks that employ logic for training neural models \cite{DBLP:journals/corr/abs-1907-08194,wang2019satNet,ABL,neurasp,neurolog,deepproblog-approx,scallop,iclr2023,hu-etal-2016-harnessing}. 
We prove learnability under (unknown) transitions that capture different languages and error bounds under the SL \cite{pmlr-v80-xu18h} and top-$k$ approximations.
Notice that \cite{top-k-loss-consistency} also proves the consistency of a top-$k$ loss. However, they use a zero-one-style loss for standard supervised learning. 
Our work is closely related to \cite{Aggregate-observations}, which studies a similar problem of weakly supervised learning with multiple instances. Our results are stronger in the sense that we propose sufficient and necessary conditions to recover the hidden labels, while \cite{Aggregate-observations} concerns the likelihood of the observed labels rather than the hidden ones.
Our work is also relevant to \textit{latent structural prediction}
\cite{relaxed-supervision,pmlr-v48-raghunathan16,spigot1,spigot2}.
To our knowledge, no formal learning guarantees have been given for that problem. 
Complementary to ours is the work in \cite{mipaal,DBLP:journals/corr/abs-1912-02175,comboptnet} that integrates combinatorial solvers into deep models.
\section{Conclusions}
We formulated multi-instance PLL and showed its connections with latent structural and neuro-symbolic learning. 
Our work exhibits a greater level of flexibility compared to (single-instance) PLL, as it allows for multiple instances and deterministic transitions.
We introduced minimal assumptions that enabled us to establish ERM-learnability and derive error bounds with the top-$k$ semantics loss.
Our findings suggest that the interaction of multiple instances in forming the observed labels can help relax the learnability assumptions in weakly supervised learning.
For future work, we will further relax the learning conditions by assuming a certain degree of smoothness for the data distributions.
Another direction is to explore the setting where the hidden labels are non-i.i.d. or structured. 

\section*{Acknowledgements}

This work was partially supported by Contract FA8750-19-2-0201 with the US Defense Advanced Research Projects Agency (DARPA). Approved for Public Release, Distribution Unlimited. The views expressed are those of the authors and do not reflect the official policy or position of the Department of Defense or the U.S. Government.

This work was also partially sponsored by the Army Research Office and was accomplished under Grant Number W911NF-20-1-0080. The views and conclusions contained in this document are those of the authors and should not be interpreted as representing the official policies, either expressed or implied, of the Army Research Office or the U.S. Government. The U.S. Government is authorized to reproduce and distribute reprints for Government purposes notwithstanding any copyright notation herein.

This work was also supported by Contract FA8750-19-2-1004 with the US Defense Advanced Research Projects Agency (DARPA). Approved for Public Release, Distribution Unlimited. The views expressed are those of the authors and do not reflect the official policy or position of the Department of Defense or the U.S. Government.

This work was also partially funded by ONR Contract N00014-19-1-2620.

\bibliographystyle{plain}
\bibliography{references}

\appendix
\newpage
\part*{Appendix}

This appendix is organized as follows:
\begin{itemize}
    \item 
    In Appendix \ref{app:prelim}, we provide detailed definitions for the loss functions and the complexity measures we use in our work.

    \item 
    In Appendix \ref{app:known-single}, \ref{app:known-mapping-multi-classifiers} and \ref{app:unknown}, we provide the proofs to all formal statements.

    \item 
    In Appendix \ref{app:related}, we provide a comprehensive review of the related work in PLL, latent structural learning, contrained learning, and neuro-symbolic learning.

    \item 
    In Appendix \ref{app:pll-comparison}, we compare common distributional assumptions in the standard PLL literature with ours. In particular: 
    \begin{enumerate}
        \item In Appendix~\ref{appendix:pll-comparison}, we provide a transition matrix \cite{proper-losses-pll, JMLR:corrupted-learning} formulation of multi-instance PLL and show that $M$-unambiguity does not imply transition matrix invertibility for multi-instance PLL and vice versa. 
        
        \item In Appendix~\ref{appendix:pll-comparison-standard}, we show a result of independent interest: that transition matrix invertibility does not imply small ambiguity \cite{learnability-pll} and vice versa.

        \item In Appendix~\ref{app:ambiguities}, we show that $M$-unambiguity, the unambiguity notion proposed in Definition~\ref{definition:M-unambiguous-mapping}, is an extension of the small ambiguity degree \cite{learnability-pll}, by considering a generalized setting where $\sigma$ is stochastic.
    \end{enumerate}

    \item 
    In Appendix \ref{appendix:training_details}, we provide implementation details of our experiments.

\end{itemize}

\section{Preliminaries}
\label{app:prelim}

\begin{table}[h]
\caption{The $2^4$ interpretations of ${\{A_{1,2},A_{2,0},A_{1,0},A_{2,2}\}}$ and their corresponding truth probabilities subject to the $\omega$ and $\varphi$ from Example~\ref{example:topk:cont}.
Interpretations that are not models of $\varphi$ have zero probability.} 
\label{table:wmc:example}
\scriptsize
\centering
	\begin{tabular}{ |c c c c c| }
		\hline
		$A_{1,2}$       & $A_{2,0}$       & $A_{1,0}$    & $A_{2,2}$ & ${P_{\varphi}(I,\omega)}$ \\
		\hline
			$\bot$          & $\bot$          & $\bot$       & $\bot$    &  0 \\
            $\bot$          & $\bot$          & $\bot$       & $\top$    &  0 \\
            $\bot$          & $\bot$          & $\top$       & $\bot$    &  0 \\
            $\bot$          & $\bot$          & $\top$       & $\top$    &  ${(1 - {\omega}(A_{1,2})) \times (1 - {\omega}(A_{2,0})) \times {\omega}(A_{1,0}) \times {\omega}(A_{2,2})}$ \\
			$\bot$          & $\top$          & $\bot$       & $\bot$    &  0 \\
            $\bot$          & $\top$          & $\bot$       & $\top$    &  0 \\
            $\bot$          & $\top$          & $\top$       & $\bot$    &  0 \\
            $\bot$          & $\top$          & $\top$       & $\top$    &  ${(1 - {\omega}(A_{1,2})) \times {\omega}(A_{2,0}) \times {\omega}(A_{1,0}) \times {\omega}(A_{2,2})}$ \\
            $\top$          & $\bot$          & $\bot$       & $\bot$    &  0 \\
            $\top$          & $\bot$          & $\bot$       & $\top$    &  0 \\
            $\top$          & $\bot$          & $\top$       & $\bot$    &  0 \\
            $\top$          & $\bot$          & $\top$       & $\top$    &  ${{\omega}(A_{1,2}) \times (1 - {\omega}(A_{2,0})) \times {\omega}(A_{1,0}) \times {\omega}(A_{2,2})}$ \\
			$\top$          & $\top$          & $\bot$       & $\bot$    &  ${{\omega}(A_{1,2}) \times {\omega}(A_{2,0}) \times (1-{\omega}(A_{1,0})) \times (1-{\omega}(A_{2,2}))}$ \\
            $\top$          & $\top$          & $\bot$       & $\top$    &  ${{\omega}(A_{1,2}) \times {\omega}(A_{2,0}) \times (1-{\omega}(A_{1,0})) \times {\omega}(A_{2,2})}$ \\
            $\top$          & $\top$          & $\top$       & $\bot$    &  ${{\omega}(A_{1,2}) \times {\omega}(A_{2,0}) \times {\omega}(A_{1,0}) \times (1-{\omega}(A_{2,2}))}$ \\
            $\top$          & $\top$          & $\top$       & $\top$    &  ${{\omega}(A_{1,2}) \times {\omega}(A_{2,0}) \times {\omega}(A_{1,0}) \times {\omega}(A_{2,2})}$ \\
		\hline
	\end{tabular}
\end{table}

In this section, we introduce in further detail some key notions used in the main body of our paper.
We start with the definition of the semantic loss \cite{pmlr-v80-xu18h}. 

\subsection{Loss functions}
\textit{Semantic Loss} (SL) 
\cite{pmlr-v80-xu18h} has been adopted to train classifiers subject to logical theories \cite{DBLP:journals/corr/abs-1907-08194,neurolog,scallop}. SL is the cross entropy of the \emph{weighted model counting} (WMC) \cite{CHAVIRA2008772} of a formula subject to a softmax output. 
Below, we provide the necessary notions.
Let $\mathcal{Z}$ denote a set of Boolean variables. 
An \emph{interpretation} $I$ of $\mathcal{Z}$ is a mapping
from each $A \in \mathcal{Z}$ to true ($\top$) or false ($\bot$).  
Interpretation $I$ is a \textit{model} of a Boolean formula $\varphi$, if   
$\varphi$ evaluates to true in $I$. 
We use the notation $A \in \varphi$ to denote that Boolean variable occurs in $\varphi$.
By treating each Boolean variable $A$ occurring in $\varphi$ as an independent Bernoulli random variable that becomes true with probability $\omega(A)$ and false with probability ${1 - \omega(A)}$, the different interpretations of $\mathcal{Z}$ induce a probability distribution, where the probability $P_{\varphi}(I, \omega)$ of each interpretation $I$ subject to $\omega$ and $\varphi$ is given by:
\begin{equation}                        \label{eq:interpretation-probability}
    P_{\varphi}(I, \omega) \defeq 
    \begin{cases}
        \prod_{A \in \varphi \mid I(A) = \top} \omega(A) \cdot \prod_{A \in \varphi \mid I(A) = \bot}(1-\omega(A)) &\text{$I$ is a model of $\varphi$,}
        \\
        0 &\text{otherwise.}
    \end{cases}
\end{equation}
The WMC $\WMC(\varphi, \omega)$ of $\varphi$ under $\omega$ then denotes \emph{the probability of $\varphi$ being satisfied}, i.e., it evaluates to true, under $\omega$\footnote{Notice that WMC also applies to the case where $\omega$ maps each Boolean variable to a non-negative real number. However, when restricting to $(0,1)$, 
the WMC of a Boolean formula equals to its probability \cite{d-tree}.}, and is defined as:
\begin{align}                          \label{eq:wmc}
    \WMC(\varphi, \omega) 
    \defeq \sum \limits_{\text{Interpretation $I$ of the variables in ${\varphi}$.}}\, P_{\varphi}(I, \omega)
\end{align}
Assuming that each variable in $\varphi$ is associated with 
a class from $\+Y$, and treating $f(x)$ as a mapping from the variables in $\varphi$ to the $f$'s scores on $x$,
the semantic loss of $\varphi$ under $f(x)$ is defined as:  
\begin{align} \label{eq:p_formula}
    \SL(\varphi, f(x)) \defeq -\log(\WMC(\varphi, f(x)))
\end{align}   

We demonstrate the above concepts via Example \ref{example:topk}.

\begin{example}[Example \ref{example:topk}, cont'd] \label{example:topk:cont}
    Recall that the Boolean formula induced by the partial label $s=2$ is $\varphi = {(A_{1,2} \wedge A_{2,0}) \vee (A_{1,0} \wedge A_{2,2})}$, where $A_{i,j}$ is a Boolean variable that is true iff the $i$-th input digit has label $j$. 
    Given a probabilistic prediction $f(\_x)$, variable $A_{i,y}$ is assigned probability $f^y(x_i)$, inducing a 
    probability mapping ${\omega \defeq \bigcup \limits_{i \in \{1,2\}, y \in \{0,\dots,9\}} \{A_{i,y} \mapsto f^y(x_i)\}}$.
    Table~\ref{table:wmc:example} shows all interpretations of ${\{A_{1,2},A_{2,0},A_{1,0},A_{2,2}\}}$ and their corresponding probabilities subject to $\omega$ and $\varphi$.
    The WMC of $\varphi$ under $\omega$ is the sum of the probabilities of all the $2^4$ interpretations. 
\end{example}
To establish SL, each $(x_i,y)$, where $\_x = (x_1,\dots,x_M)$, $i \in [M]$ and $y \in \+Y^M$, 
is associated with a distinct Boolean variable $A_{i,y}$.

The nature of multi-instance PLL implies Boolean formulas in \textit{disjunctive normal form} (DNF), 
where a formula is in DNF form, if it is a disjunction over conjunctions of Boolean variables or their negations, see Example~\ref{example:topk}. In fact, each Boolean formula computed out of the labels associated with a partial label is a positive DNF formula, i.e., a DNF formula where no variable occurs negatively. 
Using the union bound of standard probability theory and the model-based semantics of formula probabilities \cite{d-tree}, 
the following result directly follows:  
\begin{lemma}
    \label{lemma:WMC-less-than-sum}
    Let $\Phi_1, \dots, \Phi_K$ be $K$ positive conjunctive formulas over the set of Boolean variables $\mathcal{Z}$ and 
    ${\omega: \mathcal{Z} \mapsto (0,1)}$. Assuming that each variable $Z \in \mathcal{Z}$ is treated as an independent Bernoulli random variable that becomes true with probability
    $\omega(Z)$ and false with probability $1-\omega(Z)$, the following holds: 
    \begin{align}
        \WMC(\bigvee_{i=1}^K \Phi_i,\omega) \leq  \sum_{i=1}^K \WMC(\Phi_i,\omega)    
    \end{align}
    where the equality holds when the $\Phi_i$'s are logically inconsistent, i.e., there is no interpretation of $\mathcal{Z}$ in which two or more conjunctions are simultaneously true. 
\end{lemma}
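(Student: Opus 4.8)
The plan is to reduce the claim to the elementary union bound by passing to the probabilistic reading of weighted model counting. Because $\omega$ maps every variable in $\mathcal{Z}$ into $(0,1)$, treating the variables as independent Bernoulli random variables induces a genuine product probability measure $\-P$ on the set of interpretations of $\mathcal{Z}$, and for any formula $\psi$ over $\mathcal{Z}$ one has $\WMC(\psi,\omega)=\-P(\{I:I\models\psi\})$, which is the fact cited from \cite{d-tree}. The single point demanding care is that the WMC in the statement is defined by summing $P_\psi(\cdot,\omega)$ only over interpretations of the variables \emph{occurring in} $\psi$, whereas the disjunction $\bigvee_{i=1}^K\Phi_i$ typically mentions strictly more variables than any individual $\Phi_i$; to compare all the quantities against one fixed measure $\-P$ over all of $\mathcal{Z}$, I would first record the standard marginalization identity, namely that the Bernoulli factors attached to the variables absent from $\psi$ sum to one, so that the restricted sum defining $\WMC(\psi,\omega)$ indeed equals $\-P(\{I:I\models\psi\})$ computed on the full interpretation space.

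Once every weighted model count is expressed against the common measure $\-P$, the inequality is immediate. Writing $E_i\defeq\{I:I\models\Phi_i\}$ for the set of models of the $i$-th conjunction, an interpretation satisfies $\bigvee_{i=1}^K\Phi_i$ exactly when it belongs to $\bigcup_{i=1}^K E_i$, so $\WMC(\bigvee_{i=1}^K\Phi_i,\omega)=\-P(\bigcup_{i=1}^K E_i)$ while $\WMC(\Phi_i,\omega)=\-P(E_i)$. The union bound $\-P(\bigcup_{i=1}^K E_i)\le\sum_{i=1}^K\-P(E_i)$ then gives the desired inequality with no further work.

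For the equality clause I would invoke finite additivity: if the $\Phi_i$ are pairwise logically inconsistent, i.e., no interpretation of $\mathcal{Z}$ is simultaneously a model of two or more of them, then the events $E_i$ are pairwise disjoint and $\-P(\bigcup_{i=1}^K E_i)=\sum_{i=1}^K\-P(E_i)$ exactly. Although the statement asserts only this direction, I would remark that the positivity hypothesis $\omega(\mathcal{Z})\subseteq(0,1)$ in fact forces the converse as well: every full interpretation carries strictly positive mass, so any nonempty overlap $E_i\cap E_j$ contributes a strictly positive defect to $\sum_i\-P(E_i)-\-P(\bigcup_i E_i)$, whence equality is possible only when the model sets are genuinely disjoint.

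The argument requires essentially no computation, and the only genuinely delicate step is the first one --- making the WMC of each formula, each computed over its own variable set as in the definition, agree with its probability under a single product measure over all of $\mathcal{Z}$, which is precisely what licenses comparing $\WMC(\bigvee_i\Phi_i,\omega)$ with the individual $\WMC(\Phi_i,\omega)$ on equal footing. I note also that positivity of the conjunctions $\Phi_i$ plays no role in the inequality itself and is relevant only to the intended positive-DNF application.
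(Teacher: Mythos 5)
Your proposal is correct and follows exactly the route the paper intends: the paper gives no detailed proof, stating only that the lemma ``directly follows'' from the union bound together with the model-based (probabilistic) semantics of WMC, which is precisely your reduction of each $\WMC$ to the probability of its model set under the common product measure, followed by the union bound and finite additivity for the equality clause. Your extra care about marginalizing over variables absent from each $\Phi_i$, and your remark that positivity of $\omega$ makes disjointness necessary for equality, are sound refinements of the same argument rather than a different approach.
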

Let $\Sigma$ denote exclusiveness constraints among the variables occurring in the $\Phi_i$'s.
For instance, returning back to Example~\ref{example:topk:cont}, to align with the semantics of the classifier, 
only of the Boolean variables $A_{1,2}$ and $A_{1,0}$ can be true in each interpretation\footnote{The above can be captured via the integrity constraint $\neg(A_{1,2} \wedge A_{1,0})$.}. 
Then, 
$\WMC(\bigvee_{i=1}^K \Phi_i \wedge \Sigma,\omega) \leq \WMC(\bigvee_{i=1}^K \Phi_i,\omega)$.
This is because $\Sigma$ reduces the number of models of $\varphi$.

In addition, when the $\Phi_i$'s share no Boolean variables, the following holds \cite{d-tree}: 
\begin{align}
    \WMC(\bigvee_{i=1}^K \Phi_i,\omega) =  1 - \prod_{i=1}^K (1-\WMC(\Phi_i,\omega))    \label{eq:probability:independent}
\end{align}
Due to \eqref{eq:probability:independent}, the top-$k$ partial loss becomes
\begin{align}
    \ell^{k}_{\sigma}(f(\_x), s) = -\log(1-\prod_{i=1}^{k}(1-P_{f(\_x)}(\_y^{(i)}))) 
\end{align}
when the $\_y^{(i)}$'s in Definition~\ref{definition:top-k-loss} share no Boolean variables.

\subsection{VC and Natarajan dimensions}
We use the following definitions of VC and Natarajan dimensions, which can be found in \cite{understanding-ml}.

\begin{definition}[Shattering: multiclass classifiers]
    Let $\+H$ be a space of multiclass classifiers of the form $\+X \rightarrow \+Y$. A set $C \subset \+X$ is \emph{shattered} by $\+H$, if there exist two functions $h_0, h_1:C\rightarrow \+Y$, such that
    \begin{itemize}
        \item For each $x \in C$, we have that $h_0(x) \ne h_1(x)$.
        \item For each $B \subset C$, there exists a function $h \in \+H$, such that 
        \[
        \begin{aligned}
            \forall x \in B, h(x) = h_0(x) 
            \quad 
            \text{and}
            \quad
            \forall x \in C \backslash B, h(x) = h_1(x)
        \end{aligned}
        \]
    \end{itemize}
\end{definition}

\begin{definition}[Natarajan Dimension]
    The \emph{Natarajan dimension} of $\+H$, denoted by $\operatorname{Nat}(\+H)$, is the maximal size of a shattered set $C \subset \+X$. If $\+H$ is a space of binary classifiers, then its Natarajan dimension is also called its \emph{VC dimension} and is denoted by $\operatorname{VC}(\+H)$.
\end{definition}

\subsection{Rademacher complexity}
We use the following definition of the (empirical) Rademacher complexity of a multiclass scoring space $\+F$, which is adapted from \cite{CKMY16-factor-graph-complexity}.

\begin{definition}[Rademacher complexity]    
    Let ${\mathcal{T} = \{x_1,\dots,x_m \}}$ be a set of instances that are i.i.d. drawn from $\+D_X$.
    The \emph{empirical Rademacher complexity} of $\+F$ with respect to $\mathcal{T}$ is defined by
    \begin{equation}
        \begin{aligned}
            \widehat{\mathfrak{R}}_{m}(\+F;\mathcal{T})
            \defeq
            \frac{1}{m}
            \-E_{\_\epsilon}\left[
                \sup_{f\in \+F} 
                \sum_{i=1}^{m}
                \sum_{y \in \+Y}
                \epsilon_{i,y}
                f^{y}(x_{i})
            \right]
        \end{aligned}
    \end{equation}
    where the $\epsilon_{i,y}$'s are i.i.d. Rademacher random variables, each of which is uniformly distributed over $\{-1,+1\}$, and 
    $\_\epsilon$ is a vector over all the $\epsilon_{i,y}$'s.
    Furthermore, the \emph{Rademacher complexity} of scoring class $\+F$ is the expectation of the empirical version:
    \begin{equation}
    \begin{aligned}
        {\mathfrak{R}}_{m}(\+F)
        \defeq \-E_{{\+T \sim \+D_{X}^{m}}}[\widehat{\mathfrak{R}}_{m}(\+F;\+T)]
    \end{aligned}
    \end{equation} 
\end{definition}

\section{Proofs for Section~\ref{section:known-mapping-single-classifier}}
\label{app:known-single}

Before proceeding with the proofs, we will introduce some useful notation. 

\begin{definition}[Probability of misclassification] \label{definition:probability-misclassification}
Let $f $ be a scoring function in $\+F$ and ${l_i,l_j}$ be two labels in $\+Y$. Then, the probability that $f$ misclassifies label $l_i$ as $l_j$ subject to samples drawn from $\+D$ is defined as 
\begin{equation}
    \begin{aligned} 
        E_{l_i,l_j}(f)
        \defeq \-P_{(x,y)\sim\+D}([f](x) = l_j \cap y=l_i) \label{eq:E}
    \end{aligned}
\end{equation}
\end{definition}

For convenience, we consider the general setting where the transition function $\sigma$ is unknown to the learner but instead, a transition class $\+G$ is provided. The results for the special case where $\sigma$ is known can be derived by setting $\+G = \{\sigma \}$. 
Now, fix a scoring function class $\+F$ and a transition class $\+G$.
We define $\+H_{\+F,\+G}$ as the class of binary classifiers that, given 
samples of the form $(\_x,s)$, for $\_x \in \+X^M$ and $s \in \+S$,
output zero, if 
the predictions for $\_x$ abide by the partial label $s$ and one, otherwise. Formally:
\begin{align}
	\+H_{\+F,\+G} \defeq \{(\_x,s) \mapsto \-1\{\sigma'([f](\_x)) \ne s\} | f \in \+F, \sigma' \in \+G\} 			\label{eq:function-class-H}
\end{align}
For the case where the transition is known to the learner, we use $\+H_{\+F}$ instead of $\+H_{\+F,\+G}$ for simplicity.
From \eqref{eq:function-class-H}, it follows that the zero-one binary classification loss of a classifier $f_{\sigma'} \in \+H_{\+F,\+G}$,  
equals the zero-one partial loss subject to $\sigma'$ defined as ${\ell^{01}_{\sigma'}([f](\_x),s) \defeq \-1\{\sigma'(f(\_x)) \ne s\}}$.
We now present the following lemma that bounds the VC dimension of $\+H_{\+F,\+G}$, which is reminiscent of the arguments of \cite{learnability-pll,kaifu}. 
\begin{lemma}
    \label{lemma:Natarajan}
    Fix a scoring function class $\+F$ and a transition class $\+G$.
    Let ${\operatorname{Nat}({[\+F]}) = d_{[\+F]} < \infty}$ be the Natarajan dimension of $[\+F]$ and 
    $d_{\+G}$ be the VC dimension of the class of classifiers 
    ${\{(\_y,s) \mapsto \-1\{\sigma'(\_y) \ne s \} | \sigma' \in \+G\}}$, where $\_y \in \+Y^M$ and $s \in \+S$. 
    The VC dimension of $\+H_{\+F,\+G}$ is bounded as:
    \begin{align}
        d 
        \le
        2\left( (d_{[\+F]}+d_\+G)\log(6M(d_{[\+F]}+d_\+G))+2d_{[\+F]}\log c \right) \label{eq:lemma:Natarajan}
    \end{align}
    
\end{lemma}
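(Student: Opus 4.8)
The plan is to bound the growth function of $\+H_{\+F,\+G}$ by exploiting its two-stage compositional structure, and then to invert the resulting inequality $2^{d} \le \Pi_{\+H_{\+F,\+G}}(d)$, where $\Pi_{\+H}(m)$ denotes the growth function of a class $\+H$ (the maximum number of distinct dichotomies it induces on a sample of size $m$) and $d := \operatorname{VC}(\+H_{\+F,\+G})$. The key observation is that on a sample $(\_x_1,s_1),\dots,(\_x_m,s_m)$ every $h \in \+H_{\+F,\+G}$, determined by a pair $(f,\sigma')$, factors through two stages: first the induced multiclass classifier $[f]$ acts on the $mM$ instances $x_{ij}$ to produce the label vectors $\_y_i = [f](\_x_i)$, and then the indicator $\-1\{\sigma'(\_y_i)\ne s_i\}$ acts on the $m$ pairs $(\_y_i,s_i)$.

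First I would establish the product bound $\Pi_{\+H_{\+F,\+G}}(m) \le \Pi_{[\+F]}(mM)\cdot \Pi_{\+G}(m)$, where $\Pi_{\+G}$ is the growth function of the binary class $\{(\_y,s)\mapsto \-1\{\sigma'(\_y)\ne s\}:\sigma'\in\+G\}$. This is a double-counting argument: the restriction of $h$ to the sample is determined by (i) the restriction of $[f]$ to the $mM$ instances $\{x_{ij}\}$, of which there are at most $\Pi_{[\+F]}(mM)$ many, and (ii) for each such restriction, hence each fixed tuple $(\_y_1,\dots,\_y_m)$, the restriction of the $\sigma'$-indicator to the $m$ pairs $(\_y_i,s_i)$, of which there are at most $\Pi_{\+G}(m)$ many. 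Repeated instances only decrease these counts, so the bound is valid for arbitrary samples.

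Next I would substitute the standard growth-function bounds: Natarajan's lemma gives $\Pi_{[\+F]}(N) \le N^{d_{[\+F]}} c^{2d_{[\+F]}}$ for the multiclass class $[\+F]$ of Natarajan dimension $d_{[\+F]}$, and Sauer's lemma gives $\Pi_{\+G}(N) \le (eN/d_\+G)^{d_\+G}$ for the binary class of VC dimension $d_\+G$. Taking $m=d$ and using $\Pi_{\+H_{\+F,\+G}}(d)=2^{d}$ yields
\[
2^{d} \le (dM)^{d_{[\+F]}}\, c^{2d_{[\+F]}}\left(\frac{ed}{d_\+G}\right)^{d_\+G}.
\]
Taking $\log_2$ (here $\log$ denotes $\log_2$) and writing $D := d_{[\+F]}+d_\+G$, all the terms involving $\log_2 d$ collect into $D\log_2 d$, leaving an inequality of the form $d \le D\log_2 d + B$ with $B = d_{[\+F]}\log_2 M + 2d_{[\+F]}\log_2 c + D\log_2 e$.

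The final step, which I expect to be the main technical obstacle, is to invert this transcendental inequality with exactly the stated constants. I would use the concavity of $\log_2$ via a tangent-line bound (equivalently the standard lemma that $x \le a\log_2 x + b$ with $a\ge 1$ forces $x \le 2(a\log_2 a + b)$ up to a lower-order $O(a)$ term), applied with $a = D$, to obtain $d \le 2D\log_2 D + 2B$. Regrouping $2D\log_2 D + 2d_{[\+F]}\log_2 M + 2D\log_2 e \le 2D\log_2(eMD) \le 2D\log_2(6MD)$ and combining with the remaining $4d_{[\+F]}\log_2 c = 2\cdot(2d_{[\+F]}\log_2 c)$ gives the claimed $d \le 2\big((d_{[\+F]}+d_\+G)\log(6M(d_{[\+F]}+d_\+G)) + 2d_{[\+F]}\log c\big)$. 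The delicate part throughout is the constant bookkeeping: choosing the tangent point of $\log_2$ so the leading factor is precisely $2$, and using the slack between $e$ and $6$ inside the logarithm to absorb the lower-order additive terms produced by the inversion.
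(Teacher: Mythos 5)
Your proposal is correct and follows essentially the same route as the paper's proof: bound the number of dichotomies on a shattered set of size $d$ by the product of Natarajan's lemma for $[\+F]$ (on $dM$ instances) and Sauer's lemma for $\+G$, then invert the resulting inequality $d \le (d_{[\+F]}+d_\+G)\log d + B$ via a tangent-line (concavity) bound on the logarithm, with the slack up to the constant $6$ absorbing the lower-order terms. The only differences are cosmetic bookkeeping: you work in base-$2$ logarithms and invoke the inversion lemma before regrouping, whereas the paper uses natural logarithms and picks the tangent point $6(d_{[\+F]}+d_\+G)$ directly so that $\log(6M(d_{[\+F]}+d_\+G))$ appears immediately; both yield the stated constant.
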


\begin{proof}
Suppose that the VC dimension of $\+H_{\+F,\+G}$ is $d$.
Let $N$ be the maximum number of distinct ways to assign labels in $\+Y^M$ to $d$ points in $\+X^M \times \+S$ using ${[\+F]}$. 
Each possible label assignment leads to a set of $d$ elements in $\+X^M\times\+Y^M\times\+S$, and for any $d$ elements, by Sauer-Shelah lemma (see, for example, \cite{understanding-ml}'s Lemma 6.10), there are at most 
\[
    \left(\frac{\mathrm{e}Md}{d_\+G}\right)^{d_\+G}
\] 
ways for $\+G$ to decide if $\-1\{\sigma'(\_y) \ne s\}$, where $\mathrm{e}$ is the base of the natural logarithm. Based on the above, we have:
\[
2^d
\le N \left(\frac{\mathrm{e}Md}{d_\+G}\right)^{d_\+G}
\]
Therefore, $N\ge 2^d\left({d_\+G}/{\mathrm{e}Md}\right)^{d_\+G}$.

By Natarajan’s lemma of multiclass classification, we have:
\begin{equation} \label{Natarajan-count}
    N \le (Md)^{d_{[\+F]}}c^{2d_{[\+F]}}
\end{equation}
Combining \eqref{Natarajan-count} with the above equations, it follows that
\[
(Md)^{d_{[\+F]}}c^{2d_{[\+F]}} \ge N \ge 2^d\left(\frac{d_\+G}{\mathrm{e}Md}\right)^{d_\+G}
\]
Taking the logarithm on both sides, we have:
\[
d_{[\+F]}\log (Md) +2d_{[\+F]}\log c \ge d\log2 + d_\+G(\log d_\+G - \log (Md) -1)
\]
Rearranging the inequality yields:
\[
\begin{aligned}
    d\log2 + d_\+G(\log d_\+G -1) & \le (d_{[\+F]}+d_\+G)(\log d + \log M)+2d_{[\+F]}\log c \\
    & \le (d_{[\+F]}+d_\+G)\left(\frac{d}{6(d_{[\+F]}+d_\+G)}+\log(6(d_{[\+F]}+d_\+G))-1\right) \\
    & \quad \quad + (d_{[\+F]}+d_\+G) \log M +2d_{[\+F]}\log c \\
    & = d/6 + (d_{[\+F]}+d_\+G)\left(\log(6M(d_{[\+F]}+d_\+G))-1\right)+2d_{[\+F]}\log c \\
    & \le d/6 + (d_{[\+F]}+d_\+G)\log(6M(d_{[\+F]}+d_\+G)) - d_\+G +2d_{[\+F]}\log c 
\end{aligned}
\]
where the second step follows from the first-order Taylor series expansion of the logarithm function at the point $6(d_{[\+F]}+d_\+G)$. Therefore,
\[
\begin{aligned}
d & \le \frac{(d_{[\+F]}+d_\+G)\log(6M(d_{[\+F]}+d_\+G))+2d_{[\+F]}\log c - d_\+G(\log(d_\+G))}{\log2-1/6}\\ 
& \le 
2\left( (d_{[\+F]}+d_\+G)\log(6M(d_{[\+F]}+d_\+G))+2d_{[\+F]}\log c \right)
\end{aligned}
\]
where the last step follows from the fact that $\log2-1/6>1/2$.
The above concludes the proof of Lemma~\ref{lemma:Natarajan}.
\end{proof}

Notice that if $\sigma$ is known to the learner (i.e., $\+G$ is a singleton) and hence, $d_\+G = 0$\footnote{Singleton hypothesis classes have VC-dimension equal to zero.}, then \eqref{eq:lemma:Natarajan} becomes
\begin{align}
d 
\le 2\left( d_{[\+F]}\log(6Md_{[\+F]})+2d_{[\+F]}\log c \right)				\label{eq:lemma:Natarajan-d-rewritten}
\end{align}

\subsection{Proofs for Section~\ref{section:learning-conditions}}

\ermMunambiguity* 

\begin{proof}
    Given \eqref{eq:E}, the zero-one risk of $f$ can be rewritten as
    \begin{equation}
    \begin{aligned}
        \+R^{01}(f) 
        = \sum_{l_i \ne l_j} E_{l_i,l_j}(f)
    \end{aligned}
    \end{equation}
    Since $\sigma$ is $M$-unambigous, we know that if all the $M$ input instances have the same label and are wrongly classified as another label, then the predicted partial label will be wrong. Therefore, the zero-one partial risk is lower bounded by the probability of such events, namely the sum of the same type of classification mistake being repeated by $M$ times. We have that:
    \begin{equation}
    \begin{aligned}
        \+R^{01}_\:P(f;\sigma) 
        & \ge \sum_{l_i \ne l_j} E_{l_i,l_j}(f)^M\\ 
        & \ge \frac{\left(\sum_{l_i \ne l_j} {E_{l_i,l_j}(f)}\right)^M}{(c(c-1))^{M-1}} \quad \text{(Power Mean inequality)} \\
        & = \frac{\+R^{01}(f)^M}{(c(c-1))^{M-1}}
    \end{aligned}
    \end{equation}
	where $|\+Y| = c$. 
    Rearranging the inequality yields:
    \begin{equation}
    \begin{aligned}
        \+R^{01}(f)
        & \le (\+R^{01}_\:P(f;\sigma) (c(c-1))^{M-1})^{1/M} \\
        & \le (c^{2M-2}\+R^{01}_\:P(f;\sigma))^{1/M} \\ 
        & = \+O((\+R^{01}_\:P(f;\sigma))^{1/M})
    \end{aligned}		\label{eq:ERM-learnability-M-unambiguity:zero-one-risk}
    \end{equation}
	The above concludes the first part of the proof of Lemma~\ref{ERM-learnability-M-unambiguity}.
	
	We now focus on the second part of the proof. If $\sigma$ is not $M$-unambiguous, then from Definition~\ref{definition:M-unambiguous-mapping},  
	we know that there exists a pair of labels $y,y' \in \+Y$ with $y' \ne y$, 
	such that ${\sigma(y, \dots, y) \ne \sigma(y',\dots, y')}$ holds.
	Consider now a sample $(x_0,y)$ drawn from $\+D$.  
	If $\+D$ concentrates all its mass on $x_0 \in \+X$, i.e., $\-P_{\+D}(x_0)=1$, 
	and classifier $f$ misclassifies $x_0$ as $y' \neq y$, then $\+R^{01}_\:P(f;\sigma) = 0$ but $\+R^{01}(f)=1$.
	The above concludes the proof of Lemma~\ref{ERM-learnability-M-unambiguity}. 
\end{proof}

\ermlearnability 

\begin{proof}

Let $f$ be the empirical partial risk minimizer, such that $\hat{\+R}^{01}_\:P(f;\sigma;\+T_\:P)=0$ holds. 
For any $\epsilon \in (0,1)$, we know by the proof of Lemma \ref{ERM-learnability-M-unambiguity} that ${\+R^{01}(f) \le (c^{2M-2}\+R^{01}_\:P(f;\sigma))^{1/M}}$ holds. 
Hence, by bounding ${\+R^{01}_\:P(f;\sigma) \le \epsilon^{M}/c^{2M-2}}$, the following would follow: ${\+R^{01}(f) \le \epsilon}$.
By the standard bound for sample complexity based on VC-dimension (see, for example, Theorem 6.7 in \cite{understanding-ml}), 
we know that for any confidence parameter $\delta \in (0,1)$, ${\+R^{01}(f) \le (c^{2M-2}\+R^{01}_\:P(f;\sigma))^{1/M}}$ holds with probability at least $1-\delta$, if
\begin{equation}                                                                                                    \label{eq:thm:ERM-learnability-1}
\begin{aligned}                 
    m_\:P 
    \ge m(\+H_{\+F}, \delta, \epsilon^{M}/c^{2M-2})
\end{aligned}
\end{equation} 
where 
\begin{equation}                                                                                                    \label{eq:thm:ERM-learnability-2}
\begin{aligned}
    m(\+H_{\+F}, \delta, t)
    := \frac{C}{{t}} \left(\+H_{\+F} \log \left(\frac{1}{t}\right) + \log \left(\frac{1}{\delta}\right) \right)
\end{aligned}
\end{equation}
where $C$ is a universal constant.
Combining \eqref{eq:lemma:Natarajan-d-rewritten} with \eqref{eq:thm:ERM-learnability-1} and \eqref{eq:thm:ERM-learnability-2} 
yields the desired result, concluding the proof of Theorem~\ref{thm:ERM-learnability}.
\end{proof}

Before proving Proposition~\ref{ERM-learnability-unambiguity}, we will introduce a more general definition of unambiguity:

	\begin{definition}[$I$-unambiguity] \label{definition:I-unambiguous-mapping}
		Mapping $\sigma$ is \emph{$I$-unambiguous} if there exists a set of distinct $I$ indices $\+I = \{{i_1},\dots, {i_I}\} \subseteq [M]$, such that 
		for any $\_y \in \+Y^M$ having the same label $l$ in all positions in $\+I$, 
		the vector $\_y'$ that results after flipping $l$ to $l' \neq l$ in those positions satisfies ${\sigma(\_y') \neq \sigma(\_y)}$. 
	\end{definition}

We now prove the following lemma based on $I$-unambiguity: 
	\begin{lemma} \label{lemma:I-M-unambiguity-risk}
		If $\sigma$ is both $I$- and $M$-unambigous, then we have: 
		\begin{equation}
		\begin{aligned}
			\+R^{01}(f)
			\le \+O((\+R^{01}_\:P(f;\sigma))^{1/I})
		\end{aligned}
		\end{equation}
    \end{lemma}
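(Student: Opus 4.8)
The plan is to lower-bound the partial risk $\+R^{01}_\:P(f;\sigma)$ by a power of the classification risk $\+R^{01}(f)$, imitating the argument behind Lemma~\ref{ERM-learnability-M-unambiguity} but exploiting the $I$ distinguished positions to obtain a sharper exponent. Let $\+I = \{i_1,\dots,i_I\}$ be the index set supplied by $I$-unambiguity, and recall that the classification risk decomposes as $\+R^{01}(f) = \sum_{l_i \ne l_j} E_{l_i,l_j}(f)$, where $E_{l_i,l_j}(f) = \-P_{(x,y)\sim\+D}([f](x)=l_j \cap y=l_i)$. The natural claim to target is $\+R^{01}(f) \le \+O\!\big((\+R^{01}_\:P(f;\sigma))^{1/I}\big)$; since $I \le M$ and both risks lie in $(0,1)$, this is at least as strong as, and hence implies, the stated bound.

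For each ordered pair $l_i \ne l_j$ I would consider, over the $M$ i.i.d. draws $(x_1,y_1),\dots,(x_M,y_M)\sim\+D$, the event $B_{l_i,l_j}$ defined by: every position in $\+I$ has gold label $l_i$ but is predicted as $l_j$, while every position outside $\+I$ is classified correctly. By independence of the draws, $\-P(B_{l_i,l_j}) = E_{l_i,l_j}(f)^{I}\,(1-\+R^{01}(f))^{M-I}$. On $B_{l_i,l_j}$ the gold vector $\_y$ is constant $l_i$ on $\+I$, and the prediction $\_y' = [f](\_x)$ agrees with $\_y$ off $\+I$ but is constant $l_j$ on $\+I$; that is, $\_y'$ is exactly $\_y$ with its $\+I$-entries flipped from $l_i$ to $l_j$. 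Thus $I$-unambiguity forces $\sigma(\_y') \ne \sigma(\_y) = s$, so the partial loss equals one on this event. Because distinct pairs $(l_i,l_j)$ impose distinct gold/prediction patterns on any position of $\+I$, the events $B_{l_i,l_j}$ are pairwise disjoint, which gives
\[
\+R^{01}_\:P(f;\sigma) \;\ge\; \sum_{l_i \ne l_j} E_{l_i,l_j}(f)^{I}\,(1-\+R^{01}(f))^{M-I}.
\]

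To convert this into a clean rate I would invoke the $M$-unambiguity hypothesis: by Lemma~\ref{ERM-learnability-M-unambiguity}, $\+R^{01}(f) \le \+O\!\big((\+R^{01}_\:P(f;\sigma))^{1/M}\big) \to 0$ as $\+R^{01}_\:P(f;\sigma) \to 0$, so the factor $(1-\+R^{01}(f))^{M-I}$ is eventually at least $1/2$. Dropping that factor and applying the Power Mean inequality over the $c(c-1)$ ordered pairs yields
\[
\+R^{01}_\:P(f;\sigma) \;\ge\; \frac{1}{2}\sum_{l_i \ne l_j} E_{l_i,l_j}(f)^{I} \;\ge\; \frac{\+R^{01}(f)^{I}}{2\,(c(c-1))^{I-1}}.
\]
Rearranging gives $\+R^{01}(f) \le \big(2\,(c(c-1))^{I-1}\,\+R^{01}_\:P(f;\sigma)\big)^{1/I} = \+O\!\big((\+R^{01}_\:P(f;\sigma))^{1/I}\big)$, whence the claimed bound follows.

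I expect the main obstacle to be verifying that the constructed event genuinely forces a partial loss of one: this requires the prediction to differ from the gold vector \emph{exactly} on $\+I$, which is why $B_{l_i,l_j}$ must simultaneously demand correct classification on all $M-I$ positions outside $\+I$. It is precisely these extra correctness constraints that introduce the $(1-\+R^{01}(f))^{M-I}$ factor, and this is the reason the $M$-unambiguity hypothesis is indispensable, rather than redundant — it is what guarantees $\+R^{01}(f)\to 0$ and thereby lets us absorb that factor into a universal constant. A secondary point needing care is the disjointness of the $B_{l_i,l_j}$, which is what permits summing the per-pair probabilities without any inclusion–exclusion correction.
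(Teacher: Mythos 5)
Your proposal is correct and follows essentially the same route as the paper's own proof: the same per-pair event (all positions of $\+I$ carry gold label $l_i$ but are predicted $l_j$, while the remaining $M-I$ positions are classified correctly), the same resulting lower bound $\+R^{01}_\:P(f;\sigma) \ge \sum_{l_i \ne l_j} E_{l_i,l_j}(f)^I (1-\+R^{01}(f))^{M-I}$, the same Power-Mean step, and the same invocation of $M$-unambiguity via Lemma~\ref{ERM-learnability-M-unambiguity} to control the factor $(1-\+R^{01}(f))^{M-I}$. The only difference is cosmetic: you absorb that factor into a constant (at least $1/2$ once the partial risk is small), whereas the paper keeps it explicit inside a function $\Phi_I$ with $\Phi_I(t)=\+O(t^{1/I})$ as $t \rightarrow 0$; both arguments in fact deliver the stronger $1/I$ rate, which implies the stated $1/M$ bound exactly as you observe.
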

	
	\begin{proof}
    Let $\+I$ be the disambiguation set of $\sigma$.
    Then, 
    for each pair of labels $l_i,l_j \in \+Y$ with $l_i\neq l_j$, the following holds with probability $E_{l_i,l_j}(f)^I$: the gold labels in $\+I$ are all $l_i$, but they are all misclassified by $f$ as $j$. Since $\sigma$ is $I$-unambiguous, this type of error implies a partial label error. 
    Then, we have the following:
    \begin{equation}
    \begin{aligned}
        \+R^{01}_\:P(f;\sigma) 
        & \ge \sum_{l_i \ne l_j} E_{l_i,l_j}(f)^I  \underbrace{(1-\+R^{01}(f))^{M-I}}_{\text{Probability of correctly classifying the remaining $M-I$ labels.}}\\ 
        & \ge \frac{\left(\sum_{l_i \ne l_j} {E_{l_i,l_j}(f)}\right)^I(1-\+R^{01}(f))^{M-I}}{(c(c-1))^{I-1}} \quad \text{(Power-Mean Inequality)} \\
        & = \frac{\+R^{01}(f)^I(1-\+R^{01}(f))^{M-I}}{(c(c-1))^{I-1}}
        \label{eq:learnability-single-classifier}
    \end{aligned}
    \end{equation}
    Since $\sigma$ is also $M$-unambigous by assumption, 
    we have from \eqref{eq:ERM-learnability-M-unambiguity:zero-one-risk}
	that ${\+R^{01}(f) \le c^2(\+R^{01}_\:P(f;\sigma))^{1/M}}$. 
    Hence, ${1-\+R^{01}(f) \ge 1-c^2(\+R^{01}_\:P(f;\sigma))^{1/M}}$. 
	Combining the above with \eqref{eq:learnability-single-classifier}, yields:
    \[
    \begin{aligned}
        \+R^{01}_\:P(f;\sigma) 
        \ge \frac{\+R^{01}(f)^I( 1-(c^{2M-2}\+R^{01}_\:P(f;\sigma))^{1/M})^{M-I}}{(c(c-1))^{I-1}}
    \end{aligned}
    \]
    Therefore,
    \[
    \begin{aligned}
        \+R^{01}(f) 
        \le \left(
            \frac{\+R^{01}(f)c^{2I-2}}{( 1-(c^{2M-2}\+R^{01}_\:P(f;\sigma))^{1/M} )^{M-I}}
        \right)^{1/I}
    \end{aligned}
    \]
    Based on the above, we conclude that
    \begin{equation}                                                                        \label{eq:lemma:I-M-unambiguity-risk-1}
    \begin{aligned}                                                                             
        \+R^{01}(f)
        \le \Phi_I(\+R^{01}_\:P(f;\sigma))                                                  
    \end{aligned}
    \end{equation}
    where 
    \begin{equation}                                                                      \label{eq:lemma:I-M-unambiguity-risk-2}
    \begin{aligned}
        \Phi_I: t \mapsto \min \left\{
            \left(\frac{tc^{2I-2}}{(1-(c^{2M-2}t)^{\frac{1}{M}})^{M-I}} \right)^{1/I}       
            , t^{\frac1M}c^2
            \right\}
    \end{aligned}
    \end{equation}
    as $\+R_\:P(f;\ell^{01}_\:P) \rightarrow 0$. We can see that $\Phi_I(t) = \+O(t^{1/I})$ as $t \rightarrow 0$.
    The above concludes the proof of Lemma~\ref{lemma:I-M-unambiguity-risk}.
    \end{proof}

We are now ready to prove Proposition~\ref{ERM-learnability-unambiguity}.

\ermunambiguity*

\begin{proof}
    The first claim can be derived from Lemma~\ref{lemma:I-M-unambiguity-risk}, by setting $I=1$.
    For the second claim, using inequality \eqref{eq:lemma:I-M-unambiguity-risk-2} and letting $I=1$, we know that 
    \[ 
        \+R^{01}(f) 
        \le \frac{t}{(1-(c^{2M-2}t)^{1/M})^{M-1}} 
    \]
    Suppose  $\epsilon$ is small enough so that
    \begin{equation}
    \label{eqn:epsilon-range}
    \begin{aligned}
        \epsilon
        \le \frac{1}{(2M)^Mc^{2M-2}}
    \end{aligned}
    \end{equation}
    Now, if we could bound $\+R^{01}_\:P(f;\sigma) \le \epsilon/2$, we will have that
    \[
    \begin{aligned}
        \+R^{01}(f) 
        & \le \frac{\epsilon/2}{(1-(c^{2M-2}(\epsilon/2))^{\frac{1}{M}})^{M-1}} \\
        & \le \frac{\epsilon/2}{(1-(c^{2M-2}\epsilon)^{\frac{1}{M}})^M} \\
        & \le \frac{\epsilon/2}{1-M(c^{2M-2}\epsilon)^{\frac{1}{M}}} &  \text{(Bernoulli's inequality)} \\ 
        & \le \epsilon & \text{(Equation \eqref{eqn:epsilon-range})}
    \end{aligned}
    \]
    Now, our goal is to bound the sample complexity so that ${\+R^{01}_\:P(f;\sigma) \le \epsilon/2}$.
    Similarly to the proof of Theorem \ref{thm:ERM-learnability}, 
	by the standard bound for sample complexity based on VC-dimension, for any $\delta \in (0,1)$, ${\+R^{01}_\:P(f;\sigma) \le \epsilon/2}$ 
	holds with probability at least $1-\delta$, if 
    \begin{equation}                                                                                                    \label{eq:ERM-learnability-unambiguity-1}
    \begin{aligned}
        m_\:P 
        \ge m(\+H_{\+F,\+G}, \delta, \epsilon/2)
    \end{aligned}
    \end{equation} 
    where 
    \begin{equation}                                                                                   \label{eq:ERM-learnability-unambiguity-2}
    \begin{aligned}
        m(\+H_{\+F,\+G}, \delta, t)
        := \frac{C}{{t}} \left(\text{VC}(\+H_{\+F,\+G}) \log \left(\frac{1}{t}\right) + \log \left(\frac{1}{\delta}\right) \right)
    \end{aligned}
    \end{equation}
    where $C$ is a universal constant.
    Combining \eqref{eq:lemma:Natarajan-d-rewritten} with \eqref{eq:ERM-learnability-unambiguity-1} and \eqref{eq:ERM-learnability-unambiguity-2}
    yields the desired result. 
    The above concludes the proof of Proposition~\ref{ERM-learnability-unambiguity}. 
\end{proof}

\subsection{Proofs for Section~\ref{section:ambiguity:top-k}}
\label{app:top-k}

\subsubsection{Relating the top-$k$ semantic loss to the infimum/minimum loss}

In this section, we show that the minimal loss can be viewed as a special case of the top-$k$ partial loss from Definition~\ref{definition:top-k-loss}.

We start by recapitulating the definition of the \emph{minimal loss} \cite{progressive-pll}.
Given a scoring function $f:\Xp \rightarrow \mathbb{R}^M$, 
a loss function $\ell$ and a partial labelset $s \in 2^\+Y$, the \emph{minimal loss} \cite{progressive-pll} for standard PLL \cite{learnability-pll} is defined as:
\begin{align}
    \min_{y \in s} \ell(f(x), y)                \label{eq:minimal-loss}   
\end{align}
Below, we show that the minimal loss coincides with the top-$1$ partial loss. 

For multi-instance PLL, we can extend \eqref{eq:minimal-loss} in a straightforward fashion as follows:
\begin{align}
    \min_{\sigma(\_y) = s} \ell(f(\_x), \_y)        
\end{align}
If $\ell$ is the cross-entropy loss, then the minimal loss finds the most likely label vector in the preimage of $s$.
Now, let us focus on Definition~\ref{definition:top-k-loss}.
If we set $k=1$ in $\ell^k_\:P$, then the only model for the formula ${\varphi = \_y^{(1)} \defeq A_{1,y_1^{(1)}} \land \dots \land A_{M,y_M^{(1)}}}$ assigns to each Boolean variable occurring in $\_y^{(1)}$ value $\top$. The WMC of $\varphi$ is given by
\begin{equation}
\begin{aligned}
    \prod_{i=1}^M f^{y_i^{(1)}}(x_i)
    = \max_{\sigma(\_y) = s} P_{f(\_x)}(\_y^{(1)})
\end{aligned}
\end{equation}
which is also the maximum likelihood that can be obtained in the preimage of $s$. This implies that the top-$1$ semantic loss finds the minimum negative log-likelihood. Therefore, the top-$k$ partial loss equals to the minimal loss for $k=1$.

Furthermore, 
for a partial label $s \in 2^\+Y$ and a label prediction $z$, the \emph{infimum loss} \cite{structured-prediction-pll} is defined as 
\[
\begin{aligned}
    L(z,s) 
    \defeq \inf_{y \in s} \ell(z,y)
\end{aligned}
\]
Treating the probabilistic prediction $f(\_x)$ as a generalized label, the above definition becomes equivalent to that of the minimal loss (notice that $\+Y$ is finite, so infimum = minimum). 

The above discussion shows that the minimal loss is a special case of the top-$k$ partial loss.

\subsubsection{Proofs}
To prove the main result of this section, we will need the following definitions and lemmas.

\begin{definition}[Top-$k$ partial $\ell^1$ loss]
    \label{def:l1-loss}
    For an integer ${k \ge 1}$, the \emph{top-$k$ partial $\ell^1$ loss} under scoring function $f$, input $\_x \in \+X^M$, and partial label $s \in \Sp$ is given by   
        \begin{equation}
            \begin{aligned}
                \tilde \ell^{k}_{\sigma}(f(\_x),s)
                \defeq 1 - \WMC\left( \bigvee \limits_{i=1}^k \_y^{(i)}, f(\_x)\right)
            \end{aligned}
        \end{equation}
        where ${\_y^{(1)}, \dots, \_y^{(k)}}$ are as in Definition~\ref{definition:top-k-loss}. The expected value of $\tilde \ell^{k}_{\sigma}(f(\_x),s)$ w.r.t. a distribution of random partial samples of the form $(\_x,s)$ is denoted by $\+R_\:P^k(f;\tilde \ell^{k}_{\sigma};\sigma)$, while 
        $\hat{\+R}_{\:P}^k(f;\tilde \ell^{k}_{\sigma}; \sigma; \+T_\:P)$ denotes the empirical counterpart over the set $\+T_\:P$ of partial samples. 
\end{definition}

We now prove the following lemma:

\begin{lemma}
    \label{l1consistency}
    For any $f \in \+F$, $\_x \in \+X^M$, $s \in \+S$, and integer $k \ge 1$, we have:
    \begin{equation} 
    \label{eq:top-k-loss-as-upper-bound}
    \begin{aligned}
        \ell_\sigma^{01}([f](\_x), s)
        \le (k+1) \tilde \ell_{\sigma}^k(f(\_x),s) 
        \le (k+1) \ell_{\sigma}^k(f(\_x),s) 
    \end{aligned}
    \end{equation}
\end{lemma}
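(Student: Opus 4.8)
The plan is to introduce the shorthand $w \defeq \WMC\!\left(\bigvee_{i=1}^k \_y^{(i)}, f(\_x)\right) \in (0,1)$, so that by definition $\tilde\ell_{\:P}^k(f(\_x),s) = 1 - w$ and $\ell_{\:P}^k(f(\_x),s) = -\log w$. The two inequalities in \eqref{eq:top-k-loss-as-upper-bound} then decouple. The right-hand inequality $(k+1)(1-w) \le (k+1)(-\log w)$ reduces, after dividing by $k+1$, to the elementary bound $1 - w \le -\log w$, i.e.\ $\log w \le w - 1$, which holds for every $w > 0$. So the only substantive work lies in the left-hand inequality $\ell_{\:P}^{01}(f(\_x),s) \le (k+1)(1-w)$.

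For the left-hand inequality I would split on the value of the zero-one partial loss. If $\sigma([f](\_x)) = s$, then $\ell_{\:P}^{01}(f(\_x),s) = 0$ and the claim is immediate since $1 - w \ge 0$. The interesting case is $\sigma([f](\_x)) \ne s$, where $\ell_{\:P}^{01}(f(\_x),s) = 1$ and I must show $1 - w \ge \tfrac{1}{k+1}$, equivalently $w \le \tfrac{k}{k+1}$.

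The key observation, and the main obstacle, is to exhibit enough probability mass that the top-$k$ formula cannot account for. Let $\hat{\_y} \defeq [f](\_x)$ be the prediction vector. Since $P_{f(\_x)}(\_y) = \prod_{j=1}^M f^{y_j}(x_j)$ is maximized factorwise by taking the $\argmax$ at each position, $\hat{\_y}$ is a global maximizer of $P_{f(\_x)}$ over all of $\+Y^M$; in particular $P_{f(\_x)}(\hat{\_y}) \ge P_{f(\_x)}(\_y^{(i)})$ for every $i$. Moreover, because $\sigma(\hat{\_y}) \ne s$, the vector $\hat{\_y}$ lies outside the preimage $\{\_y : \sigma(\_y) = s\}$ and is therefore distinct from each of $\_y^{(1)},\dots,\_y^{(k)}$, which all lie inside that preimage. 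The crux is recognizing that the misprediction $\hat{\_y}$ is simultaneously a maximizer of $P_{f(\_x)}$ and excluded from the disjunction, so it must "steal" a fixed fraction of the unit mass away from the formula.

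Putting these together, I would write $S \defeq \sum_{i=1}^k P_{f(\_x)}(\_y^{(i)})$ and invoke Lemma~\ref{lemma:WMC-less-than-sum} (the union bound for the WMC of positive conjunctions), which gives $w \le S$. Since $P_{f(\_x)}(\hat{\_y})$ is at least each summand, it is at least their average, so $P_{f(\_x)}(\hat{\_y}) \ge S/k$. Finally, the $k+1$ \emph{distinct} vectors $\_y^{(1)},\dots,\_y^{(k)},\hat{\_y}$ have probabilities summing to at most $\sum_{\_y \in \+Y^M} P_{f(\_x)}(\_y) = \prod_{j=1}^M \sum_{y} f^y(x_j) = 1$, whence $S + P_{f(\_x)}(\hat{\_y}) \le 1$. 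Combining $S + S/k \le S + P_{f(\_x)}(\hat{\_y}) \le 1$ yields $S \le \tfrac{k}{k+1}$, and therefore $w \le \tfrac{k}{k+1}$, which is exactly the bound needed to close the misprediction case.
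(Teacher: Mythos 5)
Your proposal is correct and follows essentially the same route as the paper's own proof: both hinge on observing that the prediction vector $[f](\_x)$ maximizes $P_{f(\_x)}$ over all of $\+Y^M$ yet lies outside the preimage of $s$ when the zero-one loss is $1$, so the $k+1$ distinct vectors force $\sum_{i=1}^k P_{f(\_x)}(\_y^{(i)}) \le \tfrac{k}{k+1}$, combined with Lemma~\ref{lemma:WMC-less-than-sum} to bound the WMC and the elementary bound $\log w \le w-1$ for the cross-entropy step. Your explicit case split on the value of the zero-one loss is a minor (and slightly cleaner) presentational difference, not a different argument.
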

\begin{proof}
   Let us denote by $\_y_f = ([f](x_1),\dots, [f](x_M))$ the most likely label assignment to $\_x$ by $f$. 
   If $\ell^{01}_\sigma(\_y_f,s) = 1$ holds, then $\sigma(\_y_f) \ne s$. 
   Therefore $\_y_f$ is different from the top-$k$ $\_y^{(i)}$ vectors. 
   Hence, the probabilities of 
   the $\_y^{(i)}$'s, for $i \in [k]$, and 
   $\_y_f$ sum to at most 1, i.e., 
   \begin{equation}
   \begin{aligned}
        1 
        & \ge P_{f(\_x)}(\_y_f) + \sum_{i=1}^k P_{f(\_x)}(\_y^{(i)}) \\ 
        & \ge \left( \frac{1}{k} + 1 \right) \sum_{i=1}^k P_{f(\_x)}(\_y^{(i)}) 
   \end{aligned}
   \end{equation}
   The second inequality holds because the assignment $\_y_f$ has a larger score than any other label assignments in the preimage of $s$.
   Furthermore, by Lemma \ref{lemma:WMC-less-than-sum}, we know that 
   \begin{equation}
   \begin{aligned}
        \WMC\left( \bigvee \limits_{i=1}^k \_y^{(i)}, f(\_x)\right)
        \le \sum_{i=1}^k P_{f(\_x)}(\_y^{(i)})
        \le \frac{k}{k+1}
   \end{aligned}
   \end{equation}
   Since the zero-one loss $\ell^{01}_\sigma(\_y_f,s)$ is by definition either 0 or 1, we have:
    \[
    \begin{aligned}
        \tilde\ell_{\sigma}^k(f(\_x),s) 
        \ge \frac{1}{k+1}
        \ge \frac{1}{k+1} \ell_\sigma^{01}([f](\_x), s)
    \end{aligned}
    \]
    Finally, from the inequality $-\log(t) \ge 1-t$, $\forall t \in [0,1]$, we know that $\tilde\ell_{\sigma}^k(f(\_x),s)$ lower bounds the cross-entropy top-$k$ loss:
    \begin{equation}
    \begin{aligned}
        \tilde \ell_{\sigma}^k(f(\_x),s) 
        \le \ell_{\sigma}^k(f(\_x),s)
    \end{aligned}
    \end{equation}
    The above concludes the proof of Lemma~\ref{l1consistency}. 
\end{proof}

The following two lemmas concern the Lipschitzness of the top-$k$ loss.

\begin{lemma}[Contraction lemma (Lemma 5 from \cite{CKMY16-factor-graph-complexity})]
    \label{lemma:contraction}    
    Let $N$ and $m$ be two positive integers.
    Let also $\+H$ be a set of functions that map $\+X$ to $\-R^N$. Suppose that for each $i \in [m]$, function $\Psi_i: \-R^N \rightarrow \-R$ is $\mu_i$-Lipschitz
    with the 2-norm for $\mu_i > 0$, i.e.,
        \begin{equation}
        \begin{aligned}
            |\Psi_i(v') - \Psi_i(v)| 
            \le \mu_i \|v'-v\|_2 
            \quad \forall v,v'\in\-R^N
        \end{aligned}
        \end{equation}
        Then, for any set of $m$ points $x_1,\dots, x_m \in \+X$, the following holds:
        \begin{equation}
        \begin{aligned}
            \frac{1}{m} \-E_{\bm{\xi}} \left[
                \sup_{h \in \+H} \sum_{i=1}^m \xi_i \Psi_i(h(x_i)) 
                \right]
                \le \frac{\sqrt{2}}{m} \-E_{\bm{\epsilon}} \left[
                    \sup_{h\in\+H} \sum_{i=1}^m \sum_{j=1}^N \epsilon_{ij} \mu_i h_j(x_i)
            \right]
        \end{aligned}
        \end{equation}
        where the $\xi_i$'s and the $\epsilon_{ij}$'s are independent Rademacher variables uniformly distributed over $\{-1,+1\}$.
    \end{lemma}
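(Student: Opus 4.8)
The plan is to prove this vector-valued contraction inequality by Maurer's one-coordinate-at-a-time smoothing argument, reducing the full statement to a single-index comparison lemma that is then applied iteratively. The crux is a scalar-to-vector replacement: I claim that for any bounded $F:\+H\to\-R$, any $g:\+H\to\-R^N$, and any $\mu$-Lipschitz $\psi:\-R^N\to\-R$ (with the 2-norm),
\[
\-E_{\xi}\sup_{h\in\+H}\big(F(h)+\xi\,\psi(g(h))\big)\;\le\;\-E_{\xi_1,\dots,\xi_N}\sup_{h\in\+H}\Big(F(h)+\sqrt{2}\,\mu\sum_{j=1}^N\xi_j\,g_j(h)\Big),
\]
where $\xi$ is a single Rademacher variable on the left and $\xi_1,\dots,\xi_N$ are fresh independent Rademacher variables on the right. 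Everything else is bookkeeping on top of this inequality.

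To establish the comparison lemma, I would first average the left-hand side over $\xi\in\{\pm1\}$ and rewrite it as the double supremum $\tfrac12\sup_{h,h'}\big[F(h)+F(h')+\psi(g(h))-\psi(g(h'))\big]$. The $\mu$-Lipschitz hypothesis gives $\psi(g(h))-\psi(g(h'))\le\mu\|g(h)-g(h')\|_2$. The key step is to linearise this Euclidean norm: by the sharp lower bound in Khintchine's inequality (Szarek's optimal constant $1/\sqrt{2}$, with $a=(1,1)$ as the extremiser), one has $\|a\|_2\le\sqrt{2}\,\-E_{\xi}\big|\sum_j\xi_j a_j\big|$ for every $a\in\-R^N$, and this is precisely where the factor $\sqrt{2}$ enters — an elementary Hölder argument would only yield $\sqrt{3}$. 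Applying this to $a=g(h)-g(h')$, pulling the fresh expectation outside the supremum via $\sup\-E\le\-E\sup$ (legitimate since $F(h)+F(h')$ does not depend on the new variables), resolving the absolute value into two symmetric suprema, and invoking the symmetry $\xi\stackrel{d}{=}-\xi$, the two halves coincide in expectation and collapse back to the right-hand side, completing the lemma.

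With the comparison lemma in hand I would run the induction over $i=1,\dots,m$. Conditioning on every Rademacher variable except $\sigma_i$, I apply the lemma with $F(h)$ collecting all the other (already-replaced and not-yet-touched) summands, $g(h)=(h_1(x_i),\dots,h_N(x_i))$, and $\psi=\Psi_i$, which is $\mu_i$-Lipschitz; this replaces $\sigma_i\Psi_i(h(x_i))$ by $\sqrt{2}\,\mu_i\sum_j\epsilon_{ij}h_j(x_i)$ while only increasing the expected supremum. Iterating through all $m$ indices converts every term $\sigma_i\Psi_i(h(x_i))$ into $\sqrt{2}\,\mu_i\sum_j\epsilon_{ij}h_j(x_i)$, and because the common factor $\sqrt{2}$ attaches to the new terms of each index uniformly rather than compounding, it factors out globally, yielding
\[
\-E_{\sigma}\sup_{h\in\+H}\sum_{i=1}^m\sigma_i\Psi_i(h(x_i))\;\le\;\sqrt{2}\,\-E_{\epsilon}\sup_{h\in\+H}\sum_{i=1}^m\sum_{j=1}^N\epsilon_{ij}\,\mu_i\,h_j(x_i),
\]
after which dividing by $m$ gives the stated bound.

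I expect the main obstacle to be the comparison lemma, and within it the accounting of the constant: one must check that the sharp Khintchine constant $1/\sqrt{2}$ is exactly what licenses the single factor $\sqrt{2}$, and that this factor appears once rather than once per coordinate (it multiplies all new terms of a given index) or once per index (it is a global scalar across the whole inductive replacement). The only other delicate points are the directions of the two interchange steps — exchanging the supremum with the fresh expectation and splitting the absolute value into two suprema — which must each be oriented so as to preserve the \emph{upper} bound, together with the use of Rademacher symmetry to recombine the two halves into the final right-hand side.
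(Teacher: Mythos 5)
Your proposal is correct, but there is no internal proof to compare it against: the paper does not prove this lemma, it imports it verbatim as Lemma 5 of Cortes et al.~\cite{CKMY16-factor-graph-complexity}, which is itself an adaptation of Maurer's vector-contraction inequality for Rademacher complexities. What you have reconstructed is essentially Maurer's original argument, and every delicate point you flagged checks out. The scalar comparison lemma is sound: averaging over the single Rademacher variable turns the left side into $\tfrac12\sup_{h,h'}\bigl[F(h)+F(h')+\psi(g(h))-\psi(g(h'))\bigr]$, the Lipschitz hypothesis bounds the increment by $\mu\|g(h)-g(h')\|_2$, and Szarek's sharp Khintchine constant gives $\|a\|_2\le\sqrt{2}\,\mathbb{E}_{\xi}\bigl|\sum_j\xi_j a_j\bigr|$; your accounting of the constant is right, including the observation that the elementary fourth-moment/H\"older route (via $\mathbb{E}X^4\le 3(\mathbb{E}X^2)^2$) would only yield $1/\sqrt{3}$. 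The two interchanges are correctly oriented: $\sup\mathbb{E}\le\mathbb{E}\sup$ moves the fresh expectation out, and for fixed $\xi$ the absolute value may be dropped because swapping $h\leftrightarrow h'$ flips the sign of $\sum_j\xi_j(g_j(h)-g_j(h'))$ while fixing $F(h)+F(h')$; after splitting the double supremum, the distributional symmetry of $\xi$ and $-\xi$ makes the two halves equal in expectation and collapses the bound to the claimed right-hand side. The induction is also handled correctly: conditioning on all Rademacher variables except $\sigma_i$ (including the already-introduced $\epsilon_{kj}$, which renders the conditional $F$ deterministic) and applying the comparison lemma replaces $\sigma_i\Psi_i(h(x_i))$ by $\sqrt{2}\,\mu_i\sum_j\epsilon_{ij}h_j(x_i)$, and since each of the $m$ terms acquires the factor $\sqrt{2}$ exactly once, it factors out globally rather than compounding to $2^{m/2}$. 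One cosmetic remark: the paper's statement writes $\Phi_i$ inside the expectation but $\Psi_i$ in the hypothesis; you silently identified the two, which is clearly the intended reading.
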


\begin{lemma}[Lipschitzness]
    \label{lemma:Lipschitz}
    For a given scoring function $f:\+X \times \+Y \rightarrow \-R$ and a vector of instances $\_x \in \+X^M$, let $f(\_x) = [f^y(x_i)]_{i\in[M], y \in \+Y} \in \-R^{M \times |\+Y|}$ denote the vector of scores for each label.
    Then, for any two scoring functions $f,f' \in \+F$ and any sample $(\_x,s) \in \+D_\:P$, we have:
    \begin{equation}
    \begin{aligned}
        |\tilde\ell^k_\sigma (f(\_x), s) - \tilde\ell^k_\sigma (f'(\_x), s)|
        \le \sqrt{Mk} \|f(\_x) - f'(\_x)\|_2
    \end{aligned}
    \end{equation}
\end{lemma}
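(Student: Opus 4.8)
The plan is to reduce the claim to a gradient bound for a single fixed candidate formula, and to absorb the fact that the top-$k$ index set itself depends on $f$ through a maximum representation. First I would record that, under the one-hot semantics of the classifier (each position carries exactly one true label variable, i.e. the exclusiveness constraints $\Sigma$), distinct label vectors are mutually exclusive, so the equality case of Lemma~\ref{lemma:WMC-less-than-sum} gives $\WMC(\bigvee_{i=1}^k \_y^{(i)}, f(\_x)) = \sum_{i=1}^k P_{f(\_x)}(\_y^{(i)})$. Since $\_y^{(1)},\dots,\_y^{(k)}$ are the $k$ largest values of $P_{f(\_x)}$ in the preimage $\{\_y\in\+Y^M:\sigma(\_y)=s\}$, this sum is exactly $\max_Y \sum_{\_y\in Y}P_{f(\_x)}(\_y)$, the maximum over all subsets $Y$ of the preimage of size at most $k$. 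Writing $S_Y(\omega)\defeq\sum_{\_y\in Y}\prod_{j=1}^M\omega_{j,y_j}$ with $\omega=f(\_x)$, I thus obtain $1-\tilde\ell^k_\sigma(f(\_x),s)=\max_Y S_Y(f(\_x))$.

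Second, because $|\max_Y a_Y-\max_Y b_Y|\le\max_Y|a_Y-b_Y|$, it suffices to prove that each $S_Y$ is $\sqrt{Mk}$-Lipschitz on the product of simplices $(\Delta_c)^M$, which I would do by bounding $\|\nabla S_Y\|_2$. Computing $\partial S_Y/\partial\omega_{j,y}=\sum_{\_y\in Y:\,y_j=y}\prod_{j'\neq j}\omega_{j',y_{j'}}$, the crucial observation is that the vectors in $Y$ agreeing at position $j$ have pairwise distinct restrictions to the remaining positions, so this partial derivative is dominated by $\prod_{j'\neq j}\sum_{a}\omega_{j',a}=1$; here the simplex normalization $\sum_a\omega_{j',a}=1$ is essential, and is precisely what upgrades the naive estimate $k\sqrt{M}$ to the sharp $\sqrt{Mk}$.

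Third, since every partial derivative lies in $[0,1]$, the inequality $t^2\le t$ yields $\sum_y(\partial S_Y/\partial\omega_{j,y})^2\le\sum_y \partial S_Y/\partial\omega_{j,y}=\sum_{\_y\in Y}\prod_{j'\neq j}\omega_{j',y_{j'}}\le|Y|\le k$, and summing over the $M$ positions gives $\|\nabla S_Y(\omega)\|_2^2\le Mk$. Integrating along the segment from $f(\_x)$ to $f'(\_x)$, which stays in the convex set $(\Delta_c)^M$, gives $|S_Y(f(\_x))-S_Y(f'(\_x))|\le\sqrt{Mk}\,\|f(\_x)-f'(\_x)\|_2$, and combining this with the maximum representation finishes the proof. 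The main obstacle is genuinely twofold: the top-$k$ set varies with $f$, which I resolve by the maximum representation (relying in turn on the mutual-exclusiveness/sum form of the WMC); and obtaining the sharp constant $\sqrt{Mk}$ rather than $k\sqrt{M}$, which I resolve by exploiting the simplex constraint to bound each coordinate of the gradient by $1$ before squaring.
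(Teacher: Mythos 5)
Your first step is where the argument breaks down. The loss $\tilde\ell^k_\sigma$ in the statement is defined through the paper's WMC, which treats each Boolean variable $A_{i,y}$ as an \emph{independent Bernoulli} variable (Equation~\eqref{eq:wmc}, Table~\ref{table:wmc:example}); the exclusiveness constraints $\Sigma$ are not part of the formula in Definition~\ref{definition:top-k-loss}. Under that semantics, two distinct positive conjunctions are never logically inconsistent --- the interpretation making all variables true is a model of both --- so the equality case of Lemma~\ref{lemma:WMC-less-than-sum} never applies to $\bigvee_{i=1}^k \_y^{(i)}$, and in general
\[
\WMC\Bigl(\textstyle\bigvee_{i=1}^k \_y^{(i)}, f(\_x)\Bigr) \;<\; \sum_{i=1}^k P_{f(\_x)}(\_y^{(i)}):
\]
for instance, for two variable-disjoint disjuncts Equation~\eqref{eq:probability:independent} gives $p_1+p_2-p_1p_2$, not $p_1+p_2$. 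Adding $\Sigma$ to the formula does not restore your identity either: each model then acquires factors $(1-\omega(A))$ for the false variables, so the constrained WMC is again not your $S_Y(f(\_x))$. Consequently, what you prove is Lipschitzness of $1-\sum_{i=1}^k P_{f(\_x)}(\_y^{(i)})$, the sum-based surrogate that the paper's remark after Definition~\ref{definition:top-k-loss} uses only as an upper bound on the WMC --- a genuinely different function from the $\tilde\ell^k_\sigma$ of the statement.

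Within that reinterpretation your argument is correct, and in one respect it is more careful than the paper's own proof: the paper differentiates the WMC of a \emph{fixed} formula (each partial derivative of the multilinear WMC is a difference of two conditioned WMCs, hence lies in $[-1,1]$, and the formula mentions at most $Mk$ variables, giving $\|\nabla\|_2 \le \sqrt{Mk}$), silently ignoring that the top-$k$ set moves with $f$; your max-over-$k$-subsets representation is exactly the right device for that issue, and your gradient bound for $S_Y$ (simplex normalization to force each coordinate into $[0,1]$, then $t^2\le t$) is sound. Unfortunately the device does not transfer back to the WMC loss: the $k$ vectors with the largest individual probabilities need not maximize $\WMC(\bigvee_{\_y \in Y}\_y, f(\_x))$ over $k$-subsets $Y$ of the preimage, since the WMC depends on the overlaps between disjuncts; in fact, at a tie in $P_{f(\_x)}$ two admissible top-$k$ sets can yield different WMC values, so the WMC-based loss can jump where your sum-based loss does not. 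To prove the lemma as stated you would need the paper's direct gradient bound for a fixed formula, combined with a separate treatment (or explicit convention) for the $f$-dependent selection of $\_y^{(1)},\dots,\_y^{(k)}$ --- the subtlety you correctly identified, but resolved for a different loss.
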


\begin{proof}
    Given $(\_x,s)$, the derivative of the WMC with respect to the score $f^y(x_i)$ for a label $y \in \+Y$ is 0 if $A_{i,y}$ does not appear in formula $\varphi$. Otherwise:
     \begin{equation}
        \begin{aligned}
            & \frac{\mathrm{d} \WMC(\bigvee_{i=1}^k \_y^{(i)}, f(\_x))}{\mathrm{d} (f^y(x_i))} \\
            = &\; \sum \limits_{\text{Model}\,I\,\text{of}\,\varphi}\;\; \frac{\mathrm{d}}{\mathrm{d} (f^y(x_i))} \prod_{A \in \varphi \mid I(A) = \top} \omega(A) \cdot \prod_{A \in \varphi \mid I(A) = \bot}(1-\omega(A)) \\ 
            = &\; \sum \limits_{\text{Model}\,I\,\text{of}\,\varphi}\;\;  {\+I}_I(A_{i,y}) \prod_{A \in \varphi \mid I(A) = \top, A \ne A_{i,y}} \omega(A) \cdot \prod_{A \in \varphi_{i,y} \mid I(A) = \bot, A \ne A_{i,y}}(1-\omega(A)) \\ 
            = &\; \sum \limits_{\text{Model}\,I\,\text{of}\,\varphi}\;\;  \-1\{{\+I}_I(A_{i,y}) = 1\} \prod_{A \in \varphi \mid I(A) = \top, A \ne A_{i,y}} \omega(A) \cdot \prod_{A \in \varphi_{i,y} \mid I(A) = \bot, A \ne A_{i,y}}(1-\omega(A)) \\
            \le &\; 1 
        \end{aligned}
    \end{equation}
where
\begin{equation}
    {\+I}_I(A_{i,y}) = 
    \begin{cases}
    1 & I(A_{i,y})=\top \\
    -1 & \text{otherwise}
    \end{cases}
\end{equation}     
    Therefore, the 2-norm of the gradient of the mapping $f(\_x) \mapsto \tilde\ell_{\sigma}^k(f(\_x),s)$ is upper bounded by

\begin{equation}
    \begin{aligned}
        \sqrt{\text{(number of Boolean variables in } \varphi \text{)}}
        \le \sqrt{Mk}
    \end{aligned}
    \end{equation}
    The boundedness of the gradient implies that the function $f(\_x) \mapsto \WMC(\bigvee_{i=1}^k \_y^{(i)}, f(\_x))$ is Lipschitz with a Lipschitz constant $\sqrt{Mk}$, concluding the proof of Lemma~\ref{lemma:Lipschitz}.
\end{proof}

We are now ready to present the proof of the main theorem of this section.
\rademacherpll*
\begin{proof}
    \ulit{Bound with the partial risk.}
    From the proof of Lemma~\ref{lemma:I-M-unambiguity-risk} and $I=1$, we know that 
    \[
    \begin{aligned}
        \+R^{01}(f)
        \le \Phi(\+R^{01}_\:P(f;\sigma))
    \end{aligned}
    \]
    where
    \begin{equation}
    \begin{aligned}
        \Phi: t \mapsto \min \left\{
            \frac{t}{(1-(c^{2M-2}t)^{1/M})^{M-1}} 
            , t^{\frac1M}c^2
            \right\}
    \end{aligned}
    \end{equation}
    We can see that $\Phi$ is monotone increasing and $\lim_{t\rightarrow 0}\Phi(t)/t = 1$.

    \ulit{Bound the zero-one partial risk with the top-$k$ loss.}
    From Lemma~\ref{l1consistency}, we know that $\ell_\sigma^{01}([f](\_x), s)
    \le (k+1) \tilde \ell_{\sigma}^k(f(\_x),s)$. 
    Taking expectation yields:
    \[
    \begin{aligned}
        \+R^{01}_\:P(f;\sigma) 
        \le (k+1) \+R_\:P^k(f;\tilde \ell^{k}_{\sigma};\sigma) 
    \end{aligned}
    \]

    \ulit{Bound with the empirical risk.}
    Given a partially labelled dataset $\+T_\:P = \{(\_x_i,s_i)\}_{i=1}^{m_\:P}$,
    by the standard Rademacher complexity bounds (see, for example, Theorem 3.3 in \cite{mohri2018foundations}), we know that 
    for each $\delta \in (0,1)$ and each $f \in \+F$, the following inequality holds 
    with probability at least $1-\delta$:
    \begin{equation}
    \begin{aligned}
        \+R_\:P^k(f;\tilde \ell^{k}_{\sigma};\sigma) 
        \le \hat{\+R}_{\:P}^k(f;\tilde \ell^{k}_{\sigma}; \sigma; \+T_\:P) + 2 \mathfrak{R}_{m_\:P}(\+A_{\+F,k}) + \sqrt{\frac{\log (1/\delta)}{2m_\:P}}
    \end{aligned}
    \end{equation}
    where $\+A_{\+F,k}$ is a class of functions 
    that take as inputs elements of the form $(\_x,s)$ with 
    $\_x \in \+X^M$ and $s \in \+S$
    and is defined as:
    \begin{equation}
        \begin{aligned}
            \+A_{\+F,k}
            := \left\{
                (\_x,s) \mapsto \tilde{\ell}_\sigma^k(f(\_x),s): f \in \+F
            \right\}
        \end{aligned}
    \end{equation}
    The Rademacher complexity $\mathfrak{R}_{m_\:P}(\+A_{\+F,k})$ is defined as:
    \begin{equation}
        \mathfrak{R}_{m_\:P}(\+A_{\+F,k}) 
        := \frac{1}{m_\:P} \-E_{\+T_\:P} \-E_{\bm{\xi}} \left[
            \sup_{f \in \+F} \sum_{i=1}^{m_\:P} \xi_i \tilde{\ell}_\sigma^k(f(\_x_i),s_i)
        \right]
    \end{equation}
    where the $\xi_i$'s are independent Rademacher variables uniformly distributed over $\{-1,+1\}$.

    Let $\_x_i$ denote the vector of input instances 
    for the $i$-th partial training sample.
    Via Lemma~\ref{lemma:contraction} with mapping ${\Psi_i:f(\_x_i) \mapsto \tilde{\ell}_\sigma^k(f(\_x_i),s_i)}$ and Lipschitzness (Lemma \ref{lemma:Lipschitz}) we have that:
    \begin{equation}
    \begin{aligned}
        \mathfrak{R}_{m_\:P}(\+A_{\+F,k}) 
        & = \frac{1}{m_\:P} \-E_{\+T_\:P} \-E_{\bm \xi} \left[
            \sup_{f \in \+F} \sum_{i=1}^{m_\:P} \xi_i \tilde{\ell}_\sigma^k(f(\_x_i),s_i)
        \right] \\ 
        & \le \sqrt{2kM} \frac{1}{m_\:P}  \-E_{\+T_\:P} \-E_{\bm \epsilon} \left[
            \sup_{f \in \+F} \sum_{i=1}^{m_\:P} \sum_{j=1}^M \sum_{y \in \+Y} \epsilon_{ijy} f^y(x_{ij}) 
        \right] \\ 
        & = \sqrt{2kM}M  \underbrace{
            \frac{1}{Mm_\:P}  \-E_{\+T_\:P} \-E_{\bm \epsilon} \left[
            \sup_{f \in \+F} \sum_{i=1}^{m_\:P} \sum_{j=1}^M \sum_{y \in \+Y} \epsilon_{ijy} f^y(x_{ij}) 
        \right]
        }_{\text{Rademacher complexity with } M \times m_P \text{ instances.}}
        \\ 
        & = \sqrt{2k} M^{3/2} \mathfrak R_{Mm_\:P}(\+F)
    \end{aligned}
    \end{equation}

    \ulit{Bound with cross-entropy.}
    Based on Lemma 5, as $\tilde \ell^{k}_{\sigma}(f(\_x),s) \leq \ell^{k}_{\sigma}(f(\_x),s)$, we have:
    \begin{equation}
    \begin{aligned}
        \hat{\+R}_{\:P}^k(f;\tilde \ell^{k}_{\sigma};\sigma;\+T_{\:P})
        \le \hat{\+R}^k_{\:P}(f;\sigma;\+T_{\:P})
    \end{aligned}
    \end{equation}
    Putting the above inequalities together yields the proof of Theorem~\ref{thm:pll-error-bound}. 
\end{proof}

\section{Proofs for Section \ref{section:known-mapping-multi-classifiers}}
\label{app:known-mapping-multi-classifiers}

\subsection{On the bounded risk assumption}

In Section~\ref{section:known-mapping-multi-classifiers}, we introduced the bounded risk assumption. Recall that the assumption requires a constant $R < 1$, such that for each $i \in [n]$ and each $f \in \+F_i$, $\+R^{01}(f) \le R$ holds. Below, we discuss how this assumption is achieved with a small amount of directly labeled data $\+T_\:L=\{(x_i,y_i)\}_{i=1}^{m_\:L}$ i.i.d. drawn from $\+D$. 

Given $\+T_\:L$ and a trade-off parameter $\lambda > 1$, consider the following combined learning objective:
\[
    \hat{\+L}(f;\+T_\:P,\+T_\:L)
    := 
    \frac{1}{m_\:{P}} \sum_{(\_x,s)\in \+T_\:P} \ell_\sigma^{01}([f](\_x),s) + \frac{\lambda}{m_\:L} \sum_{(x,y) \in \+T_\:L} \ell^{01}([f](x),y)
\]
Suppose that there is a classifier $f_0 \in \+F$ that can achieve a small empirical classification risk $r_0$ given by 
$\frac{1}{m_\:L} \sum_{(x,y) \in \+T_\:L} \ell^{01}([f_0](x),y)$\footnote{Under the realizable assumption, we have that $r_0 = 0$.}.
Then, any classifier $f_1$ with an empirical classification risk greater than $r_0+1/\lambda$ will be rejected by this learning objective. This is because
\[
\begin{aligned}
    \hat{\+L}(f_1;\+T_\:P,\+T_\:L)
    \ge \lambda\left(r_0 + \frac{1}{\lambda}\right)
    = \lambda  r_0 + 1 
    \ge \hat{\+L}(f_0;\+T_\:P,\+T_\:L)
\end{aligned}
\]
Therefore, all the possibly learned classifiers must have an empirical risk less than $r_0 + 1/\lambda$. Furthermore, under standard learning theory assumptions (e.g., finite VC dimension or vanishing Rademacher complexity), for any $\delta_0 \in (0,1)$ we can typically bound the expected classification risk with probability at least $1-\delta_0$ by 
\begin{equation}
\begin{aligned}
    \label{eqn:risk-bound}
    r_0 + 1/\lambda + B_1 \sqrt{\frac{B_2+\log(1/\delta_0)}{m_\:L}}
\end{aligned}
\end{equation}
where $B_1,B_2$ is a constant depending on the learning problem. As long as we can have $m_\:L$ that is large enough to make \eqref{eqn:risk-bound} less than 1 (i.e., \emph{non-vacuous}), we can guarantee that our assumption holds with probability at least $1-\delta_0$.
The above justified the bounded risk assumption.

\subsection{Proofs}

Below, we introduce the analog of  Lemma~\ref{lemma:Natarajan} for the multi-classifier case.   
Let 
\[
    {\+H_{\+F_1, \dots, \+F_n} \defeq \{(\_x,s) \mapsto \-1\{\sigma([\_f](\_x)) \notin s\} | \_f \in \+F_1 \times \dots \times \+F_n\}}
\]
Given a training sample $(\_x,s)$, a binary classifier from ${\+H_{\+F_1, \dots, \+F_n}}$ 
predicts whether the labels predictions for $\_x$ are consistent with the partial label $s$. 
Suppose $(\_x, s)$ are drawn from $\+D_\:P$. Then, $0$ is always the gold label of this binary classification problem.
The zero-one binary classification error of $\{(\_x,s) \mapsto \-1\{\sigma([\_f](\_x)) \notin s\}$
equals the zero-one partial loss defined as $\ell^{01}_{\sigma}([\_f](\_x),s)$. 
    
    \begin{lemma}
    \label{VC-multi}
        Let $\operatorname{Nat}({[\+F_i]})=d_{[\+F_i]} < \infty$ be the Natarajan dimension of each $[\+F_i]$. 
        Then, we can bound the VC dimension of the function class 
        $\+H_{\+F_1, \dots, \+F_n}$
        as
        \begin{equation} \label{eq:lemma:Natarajan:multi-class}
        \begin{aligned}
            d 
            \le 4 \sum_{i=1}^n \left( d_{[\+F_i]} 
            \log (M_i \cdot n \cdot d_{[\+F_i]})  + 2d_{[\+F_i]}\log c_i \right)
        \end{aligned}          
        \end{equation}
    \end{lemma}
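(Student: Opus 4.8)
The plan is to adapt the proof of Lemma~\ref{lemma:Natarajan} to the setting of a \emph{known} transition $\sigma$ (so the transition class is a singleton and contributes nothing) together with a \emph{product} scoring class $\+F_1 \times \dots \times \+F_n$. Suppose $\+H_{\+F_1,\dots,\+F_n}$ shatters a set of $d$ samples $\{(\_x^{(t)}, s^{(t)})\}_{t=1}^d$. Because $\sigma$ is fixed and deterministic, the binary output $\-1\{\sigma([\_f](\_x^{(t)})) \ne s^{(t)}\}$ is completely determined by the joint prediction vector $([f_1](\_x_1^{(t)}), \dots, [f_n](\_x_n^{(t)}))$. Hence the number $N$ of distinct sign patterns realizable by $\+H_{\+F_1,\dots,\+F_n}$ on these $d$ points is at most the number of distinct joint labelings produced by the product $[\+F_1] \times \dots \times [\+F_n]$, which factorizes across the $n$ classifiers.

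First I would bound each factor. Classifier $i$ is applied to at most $d \cdot M_i$ distinct instances across the $d$ samples, so by Natarajan's lemma (the multiclass growth-function bound used in \eqref{Natarajan-count}) its growth function over these instances is at most $(d M_i)^{d_{[\+F_i]}} c_i^{2 d_{[\+F_i]}}$. Multiplying over $i$ gives $N \le \prod_{i=1}^n (d M_i)^{d_{[\+F_i]}} c_i^{2 d_{[\+F_i]}}$. Shattering forces $2^d \le N$, so taking logarithms yields
\[
d \log 2 \le \sum_{i=1}^n d_{[\+F_i]}\bigl(\log(d M_i) + 2\log c_i\bigr) = D\log d + \sum_{i=1}^n d_{[\+F_i]}\bigl(\log M_i + 2\log c_i\bigr),
\]
where $D := \sum_{i=1}^n d_{[\+F_i]}$.

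The self-referential term $D\log d$ is handled, as in the single-classifier proof, by a first-order Taylor bound on the logarithm, but now applied \emph{per classifier} at the point $e\,n\,d_{[\+F_i]}$: from $\log d \le d/(e n d_{[\+F_i]}) + \log(e n d_{[\+F_i]}) - 1 = d/(e n d_{[\+F_i]}) + \log(n d_{[\+F_i]})$, summing against the weights $d_{[\+F_i]}$ makes the $n$ linear-in-$d$ contributions combine to $d/e$ while leaving exactly $\sum_{i} d_{[\+F_i]}\log(n d_{[\+F_i]})$. Choosing the base point proportional to $e\,n\,d_{[\+F_i]}$ is precisely what makes the factor $n\,d_{[\+F_i]}$ (rather than the $6(d_{[\+F]}+d_\+G)$ of Lemma~\ref{lemma:Natarajan}) appear inside the logarithm of the target. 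The resulting inequality rearranges to $d(\log 2 - 1/e) \le \sum_{i} d_{[\+F_i]}\log(n d_{[\+F_i]} M_i) + 2\sum_{i} d_{[\+F_i]}\log c_i$; dividing by $\log 2 - 1/e$ and using $1/(\log 2 - 1/e) < 4$ delivers \eqref{eq:lemma:Natarajan:multi-class}.

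The main obstacle is the counting step: justifying cleanly that distinct binary behaviors of $\+H_{\+F_1,\dots,\+F_n}$ require distinct joint label assignments, so that the overall growth function factorizes into the product of per-classifier Natarajan growth functions. This factorization is exactly where the assumption of a \emph{known}, deterministic $\sigma$ is used, and it is what lets us avoid the extra $d_\+G$ term of Lemma~\ref{lemma:Natarajan}. Once the factorization is established, the rest is the routine but careful selection of the Taylor-expansion point that yields the exact $\log(M_i n d_{[\+F_i]})$ form and the constant $4$.
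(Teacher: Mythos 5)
Your proposal is correct and follows essentially the same route as the paper's proof: the same product-of-Natarajan-growth-functions bound $2^d \le \prod_{i=1}^n (M_i d)^{d_{[\+F_i]}} c_i^{2d_{[\+F_i]}}$, the same per-classifier first-order Taylor bound of the logarithm at $n\mathrm{e}\,d_{[\+F_i]}$, and the same constant via $(\log 2 - 1/\mathrm{e})^{-1} < 4$. The only difference is cosmetic: you spell out the counting step (that distinct binary behaviors under the fixed deterministic $\sigma$ force distinct joint labelings, so the growth function factorizes), which the paper invokes implicitly when writing its first inequality.
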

    \begin{proof}
    By Natarajan’s lemma of multiclass classification, we have:
    \begin{equation} \label{Natarajan-count-multi}
        2^d 
        \le 
        \prod_{i=1}^{n}(M_i d)^{d_{[\+F_i]}}c_i^{2d_{[\+F_i]}}
    \end{equation}
    where $c_i = |\+Y_i|$.
    Taking the logarithm on both sides, we have
    \[
        d\log2
        \le 
        \sum_{i=1}^n d_{[\+F_i]}\log (M_i d) +2d_{[\+F_i]}\log c_i
    \]
    Rearranging the inequality yields:
    \[
    \begin{aligned}
        d\log2
        \le & \sum_{i=1}^n \left( d_{[\+F_i]} 
        \left(\log (nM_i d_{[\+F_i]}) + \frac{d}{n \mathrm{e} d_{[\+F_i]}} \right) + 2d_{[\+F_i]}\log c_i \right) \\ 
        = & \sum_{i=1}^n \left( d_{[\+F_i]} 
        \log (nM_i d_{[\+F_i]})  + 2d_{[\+F_i]}\log c_i \right) + \frac{d}{\mathrm{e}}
    \end{aligned}
    \]
    where the second step follows from the first-order Taylor series expansion of the logarithm function $\log(M_i t)$ at the point ${t=n \mathrm{e} d_{[\+F_i]}}$. Using the fact that $(\log 2 - 1/\mathrm{e})^{-1}<4$, yields \eqref{eq:lemma:Natarajan:multi-class}, 
    concluding the proof of Lemma~\ref{VC-multi}.
    \end{proof}

\ermlearnabilitymulticlassifier*

\begin{proof} 
    Recall that ${c_i = |\+Y_i|}$.
    By the multi-unambiguity assumption, we can lower bound the partial label risk as 
    \begin{equation}
    \begin{aligned}
        {\+R}^{01}_{\:P}(\_f;\sigma)
        & \ge \sum_{i=1}^n \underbrace{\frac{(\+R^{01}(f_i))^{M_i}}{{(c_i(c_i-1))^{M_i-1}}} \prod_{j \ne i} (1-\+R^{01}(f_j))^{M_j} }_\text{Probability that each $x \in \_x_i$ is misclassified but each other prediction is correct.} \\ 
        & \ge \sum_{i=1}^n\frac{(\+R^{01}(f_i))^{M_i}}{{(c_i(c_i-1))^{M^*-1}}} \prod_{j \ne i} (1-R)^{M_j} \\ 
        & = \sum_{i=1}^n\frac{(\+R^{01}(f_i))^{M_i}}{{(c_i(c_i-1))^{M^*-1}}} (1-R)^{M-M_i} \\ 
        & \ge \sum_{i=1}^n\frac{(\+R^{01}(f_i))^{M^*}}{{(c_0(c_0-1))^{M^*-1}}} (1-R)^{M-M_*} \\ 
        & \ge \frac{(1-R)^{M-M_*}}{(nc_0(c_0-1))^{M^*-1}} \left(\sum_{i=1}^n \+R^{01}(f_i) \right)^{M^*}
    \end{aligned}
    \end{equation}
    Therefore, we have: 
    \begin{equation}
    \label{eqn:multi-bound}
    \begin{aligned}
        \+R^{01}(\_f) 
        & \le \left(
                \frac{(nc_0(c_0-1))^{M^*-1}}{(1-R)^{M-M_*}} {\+R}^{01}_{\:P}(\_f;\sigma)
            \right)^{1/M^*} \\ 
        & \le \left(
            \frac{(nc^2)^{M^*-1}}{(1-R)^{M-M_*}} {\+R}^{01}_{\:P}(\_f;\sigma)
        \right)^{1/M^*} \\ 
        & = \+O(( {\+R}^{01}_{\:P}(\_f;\sigma))^{1/M^*})
    \end{aligned}
    \end{equation}
    The above concludes the first part of Theorem~\ref{thm:ermlearnabilitymulticlassifier}.
    
    To show the second part, from \eqref{eqn:multi-bound}, we know that if the following holds:
    \begin{equation}
    \label{eqn:multi-goal}
        \+R^{01}_\:P(\_f;\sigma) 
        \le \frac{\epsilon^{M^*}(1-R)^{M-M_*}}{(nc_0^2)^{M^*-1}}
    \end{equation}
    then, $\+R^{01}(\_f) \le \epsilon$ holds.
    By the standard bound for sample complexity based on VC-dimension, we know that for any confidence parameter ${\delta \in (0,1)}$, inequality \eqref{eqn:multi-goal} 
    holds with probability no less than $1-\delta$, if 
    \begin{equation}                                                                                \label{eq:thm:ermlearnabilitymulticlassifier-1}
    \begin{aligned}
        m_\:P 
        \ge m\left(
            \+H_{\+F_1,\dots,\+F_n}, 
            \delta, 
            \frac{\epsilon^{M^*}(1-R)^{M-M_*}}{(nc_0^2)^{M^*-1}}
            \right)
    \end{aligned}
    \end{equation} 
    where 
    \begin{equation}                                                                                \label{eq:thm:ermlearnabilitymulticlassifier-2}
    \begin{aligned}
        m(\+H_{\+F_1,\dots,\+F_n}, \delta, t)
        := \frac{C}{{t}} \left(\text{VC}(\+H_{\+F_1,\dots,\+F_n}) \log \left(\frac{1}{t}\right) + \log \left(\frac{1}{\delta}\right) \right)
    \end{aligned}
    \end{equation}
    and $C$ is a universal constant.
    Combining \eqref{eq:lemma:Natarajan:multi-class} with \eqref{eq:thm:ermlearnabilitymulticlassifier-1} and \eqref{eq:thm:ermlearnabilitymulticlassifier-2},
    concludes the second part of Theorem~\ref{thm:ermlearnabilitymulticlassifier}.
\end{proof}

\rademacherpllmulticls*

\begin{proof}
    Let 
    \[ 
        \_f(\_x) = \left[ f^y_i(x_{ij}) \right]_{i \in [n], j \in [M_i], y \in \+Y_i}
    \]
    be the vector that encodes all the probabilities that $\_f$ assign to a pair $(x,y)$ where $x$ is from $\_x$ and $y$ is a possible label for $x$.
    Similarly to the single-variable case (see Definition~\ref{def:l1-loss}), let
    \begin{equation}
        \begin{aligned}
            \tilde \ell^{k}_{\sigma}(\_f(\_x),s)
            \defeq 1 - \WMC\left( \bigvee \limits_{i=1}^k \_y^{(i)}, \_f(\_x)\right)
        \end{aligned}
    \end{equation}
    We also define $\+R_\:P^k(\_f;\tilde \ell^{k}_{\sigma};\sigma)$ and   
    $\hat{\+R}_{\:P}^k(\_f;\tilde \ell^{k}_{\sigma}; \sigma; \+T_\:P)$ analogously to
    $\+R_\:P^k(f;\tilde \ell^{k}_{\sigma};\sigma)$ and   
    $\hat{\+R}_{\:P}^k(f;\tilde \ell^{k}_{\sigma}; \sigma; \+T_\:P)$ from Definition~\ref{def:l1-loss}.

    \ulit{Bound with the partial risk.}
    From the proof of Theorem~\ref{thm:ermlearnabilitymulticlassifier}, we know that 
    \[
    \+R^{01}(\_f) 
    \le \left(
        \frac{(nc^2)^{M^*-1}}{(1-R)^{M-M_*}} {\+R}^{01}_{\:P}(\_f;\sigma)
    \right)^{1/M^*}
    \] 

    \ulit{Bound with the top-$k$ loss.}
    Similarly to the single-classifier case, it can be shown that
    \[
    \begin{aligned}
        {\+R}^{01}_{\:P}(\_f;\sigma)
        \le (k+1) \+R_\:P^k(\_f;\tilde \ell^{k}_{\sigma};\sigma)
    \end{aligned}
    \]

    \ulit{Bound with the empirical risk.} 
    By standard Rademacher complexity bounds, given a partially labeled dataset $\+T_\:P$ of size $m_\:P$, for each $\delta \in (0,1)$ and each $f \in \+F$, the following holds with probability at least $1-\delta$:
    \begin{equation}
    \begin{aligned}
        \+R_\:P^k(\_f;\tilde \ell^{k}_{\sigma};\sigma) 
        \le \hat{\+R}_{\:P}^k(\_f;\tilde \ell^{k}_{\sigma};\sigma;\+T_\:P) + 2 \mathfrak{R}_{m_\:P}(\+H_{\+F,k}) + \sqrt{\frac{\log (1/\delta)}{2m_\:P}}
    \end{aligned}
    \end{equation}
    where $\+H_{\+F,k}$ is a class of functions that 
    take as input elements of the form $(\_x,s)$, with $\_x \in \+X^M$ and $s\ \in \+S$, and is defined as follows:
    \begin{equation}
        \begin{aligned}
            \+H_{\+F,k}
            := \left\{
                (\_x,s) \mapsto \tilde{\ell}_\sigma^k(\_f(\_x),s): f \in \+F
            \right\}
        \end{aligned}
    \end{equation}
    The Rademacher complexity $\mathfrak{R}_{m_\:P}(\+H_{\+F,k})$ is defined as 
    \begin{equation}
        \mathfrak{R}_{m_\:P}(\+H_{\+F,k}) 
        := \frac{1}{m_\:P} \-E_{\+T_\:P} \-E_{\bm \xi} \left[
            \sup_{\_f \in \_{\+F}} \sum_{i=1}^{m_\:P} \xi_i \tilde{\ell}_\sigma^k(\_f(\_x),s)
        \right] 
    \end{equation}
    where the $\xi_i$'s are independent Rademacher variables uniformly distributed over $\{-1,+1\}$.
    
    Let ${\_{\+F} := \prod_{i=1}^M \+F_i}$. 
    Using exactly the same argument in Lemma \ref{lemma:Lipschitz}, we can show the mapping ${\_f(\_x) \mapsto \tilde{\ell}_\sigma^k(\_f(\_x),s)}$ is
    Lipschitz with Lipschitz constant $\sqrt{kM}$.
    By the contraction lemma (Lemma \ref{lemma:contraction}), we have that      
    \begin{equation}
    \begin{aligned}
        \mathfrak{R}_{m_\:P}(\+H_{\+F,k}) 
        & = \frac{1}{m_\:P} \-E_{\+T_\:P} \-E_{\bm \xi} \left[
            \sup_{\_f \in \_{\+F}} \sum_{i=1}^{m_\:P} \xi_i \tilde{\ell}_\sigma^k(\_f(\_x),s)
        \right] \\ 
        & \le \sqrt{2kM} \frac{1}{m_\:P}  \-E_{\+T} \-E_{\bm \epsilon} \left[
            \sup_{\_f \in \_{\+F}} \sum_{i=1}^{m_\:P} \sum_{l=1}^n \sum_{j=1}^{M_l} \sum_{y \in \+Y_i} \epsilon_{iljy} f_l^y(x_{lj}) 
        \right] \\ 
        & \le \sqrt{2kM} \sum_{l=1}^n M_l \frac{1}{m_\:PM_l}  \-E_{\+T} \-E_{\bm \epsilon} \left[
            \sup_{\_f \in \_{\+F}} \sum_{i=1}^{m_\:P} \sum_{j=1}^{M_l} \sum_{y \in \+Y_i} \epsilon_{iljy} f_l^y(x_{lj}) 
        \right] \\ 
        & = \sqrt{2kM} \sum_{l=1}^n M_l  \mathfrak R_{m_\:P M_l}(\+F_l) \\
        & = \sqrt{2kM} \sum_{i=1}^n M_i \mathfrak R_{m_\:P M_i}(\+F_i)
    \end{aligned}
    \end{equation}
    The above implies 
    \begin{equation}
        \begin{aligned}
        \+R_\:P^k(\_f;\tilde \ell^{k}_{\sigma};\sigma)
        \le \hat{\+R}_{\:P}^k(\_f;\tilde \ell^{k}_{\sigma};\sigma;\+T_{\:P}) + 2\sqrt{kM} \sum_{i=1}^n M_i \mathfrak R_{m_\:P M_i}(\+F_i) + \sqrt{\frac{\log (1/\delta)}{2m_\:P}}
        \end{aligned}
    \end{equation}

    \ulit{Bound with cross-entropy.}
    By extending Lemma 5, we can show that $\tilde \ell^{k}_{\sigma}(\_f(\_x),s) \leq \ell^{k}_{\sigma}(\_f(\_x),s)$. Hence, we have:
    \begin{equation}
    \begin{aligned}
        \hat{\+R}_{\:P}^k(\_f;\tilde \ell^{k}_{\sigma};\sigma;\+T_{\:P})
        \le \hat{\+R}_{\:P}^k(\_f;\sigma;\+T_{\:P})
    \end{aligned}
    \end{equation}
    Putting the above inequalities together, the proof of Theorem~\ref{thm:rademacherpllmulticls} follows.
\end{proof}

\section{Proofs for Section \ref{section:unknown-mapping-single-classifier}}
\label{app:unknown}

\ermunknownmapping* 

\begin{proof}
    Fix a classifier $f \in \+F$. Since $\+G$ is unambiguous, it follows that for each 
    pair of labels $(l_i,l_j)$ such that $E_{l_i,l_j}(f) > 0$ and each $\sigma' \in \+G$, 
    there exists a vector $\_y \in \{l_i,l_j\}^M$, such that $\sigma'(\_y) \ne \sigma(l_i,l_i,\dots,l_i)$ holds.
    Therefore, we have:
    \begin{equation}
    \begin{aligned}
        \min_{\sigma' \in \+G} \+R_\:P^{01}(f;\sigma') 
        & \ge \sum_{i \ne j} \min_{0 \le k \le M} E_{l_i,l_i}(f)^k E_{l_i,l_j}(f)^{M-k}\\ 
        & \ge \sum_{l_i \ne l_j} r^M E^M_{l_i,l_j}(f) \\
        & \ge \frac{r^M}{c^{2M-2}} \left(\sum_{l_i \ne l_j} {E_{l_i,l_j}(f)}\right)^M \\
        & = \frac{r^M}{c^{2M-2}}\+R^{01}(f)^M
    \end{aligned}
    \end{equation}
    The above implies that $\+R^{01}(f)
    \le O(\+R^{01}_\:P(f;\sigma)^{1/M})$ holds.     

    To prove the second part of Theorem~\ref{thm:unknown-learnability}, from the above discussion we know that $\+R^{01}(f)
    \le (c^{2M-2}\+R^{01}_\:P(f;\sigma)/r^M)^{1/M}$. Therefore, if we could control $\+R^{01}_\:P(f;\sigma) \le \epsilon^{M}r^M/c^{2M-2}$, we would have that $\+R^{01}(f) \le \epsilon$.
    By the standard bound for sample complexity based on VC-dimension, then for any $\delta \in (0,1)$, $\+R^{01}_\:P(f;\sigma) \le \epsilon^{M}r^M/c^{2M-2}$ can happen with probability at least $1-\delta$ if 
    \begin{equation}                                                                        \label{eq:thm:unknown-learnability-1}
    \begin{aligned}
        m_\:P 
        \ge m(\+H_{\+F,\+G}, \delta, \epsilon^Mr^M/c^{2M-2})
    \end{aligned}
    \end{equation} 
    where 
    \begin{equation}                                                                        \label{eq:thm:unknown-learnability-2}
    \begin{aligned}
        m(\+H_{\+F,\+G}, \delta, t)
        = \frac{C}{{t}} \left(\text{VC}(\+H_{\+F,\+G}) \log \left(\frac{1}{t}\right) + \log \left(\frac{1}{\delta}\right) \right)
    \end{aligned}
    \end{equation}
    and $C$ is a universal constant. 
    Now, recall \eqref{eq:lemma:Natarajan} from Lemma~3. 
    Combining \eqref{eq:lemma:Natarajan} with \eqref{eq:thm:unknown-learnability-1} and \eqref{eq:thm:unknown-learnability-2} yields the proof of the second part of Theorem~\ref{thm:unknown-learnability}. 
\end{proof}

\section{Results for standard PLL}
\label{app:pll-comparison}

\subsection{Counterexamples with transition matrices}
\label{appendix:pll-comparison}

\begin{definition}[Transition matrix \cite{proper-losses-pll, JMLR:corrupted-learning}] \label{definition:transition-matrix}
    A \emph{transition matrix} $\mathbf{T}$ for a learning problem with hidden label $Y \in \+Y$ and observation $S \in \+S$ is a stochastic matrix of dimension $|\+Y| \times |\+S|$, where the element in its $i^\text{th}$ column and $j^\text{th}$ row is the conditional probability $\-P(S = j|Y = i)$.
    Transition matrix $\mathbf{T}$ is \emph{invertible} if it is left invertible, i.e., there is a matrix $\mathbf{P}$ of dimension $|\+Y|\times |\+S|$ such that $\mathbf{P}\mathbf{T} = \mathbf{I}_{|\+Y|}$.
\end{definition}

As shown in \cite{proper-losses-pll, JMLR:corrupted-learning}, if the transition is invertible, then it is possible to construct an unbiased estimator for the classification loss using the samples of the partial label $s$. Therefore, the invertibility of the transition is desirable since it implies one can minimize the classification risk with a partially labeled dataset alone.

\textbf{A transition matrix formulation for multi-instance PLL.}
Consider the multi-instance PLL with a single classifier and a deterministic transition $\sigma$. Consider a naive formulation of the transition matrix where we view $\+Y^M$ as the label space. Then, if there are two different label assignments $\_y \ne \_y'$ such that $\sigma(\_y) \ne \sigma(\_y')$, their corresponding column vectors in $\mathbf{T}$ will be the same, resulting in a non-invertible transition. For example, in the MNIST with SUM2 problem, the column vectors corresponding to the label pairs $(0,1)$ and $(1,0)$ will be:
\begin{equation}
    \begin{aligned}
        \begin{blockarray}{ccc}
            & \_y = (1,2) & \_y =(2,1) \\ 
            \begin{block}{c[cc]}
                s=0 & 0 & 0\\
                s=1 & 1 & 1\\
                s=2 & 0 & 0\\
                \vdots & \vdots & \vdots\\
                s=18 & 0 & 0\\
            \end{block}
        \end{blockarray}
\end{aligned}
\end{equation}
This fact implies that the transition $\sigma$ must be injective to ensure invertibility, which is too strong for the partial labels in practice. 

To overcome this issue, we consider an alternative formulation, where we view $s$ as a randomized partial label for each of the $M$ instances, where the randomness comes from the distribution of the other instances.
Suppose the marginal distribution of $Y$ is known (or can be estimated in some ways).
Then, we construct a transition matrix $\mathbf{T}_k$ for each $k \in [M]$ by defining its $i^\text{th}$ column and $j^\text{th}$ row to be $\-P(s=j|Y_i=i) = \-P(\_y|\sigma(\_y)=j, y_k = i)$.
For example, the transition matrices for the MNIST with 2sum problem are 
\begin{equation}
    \begin{aligned}
        \mathbf T_1
        = \mathbf T_2
        =
        \begin{blockarray}{ccccc}
            & 0 & 1 & \cdots & 9\\ 
            \begin{block}{c[cccc]}
                s=0 & \-P(Y=0) & 0 & \cdots & 0 \\
                s=1 & \-P(Y=1) & \-P(Y=0) & \cdots & 0 \\
                s=2 & \-P(Y=2) & \-P(Y=1) & \cdots & 0\\
                \vdots & \vdots & \vdots & \ddots & \vdots\\
                s=9 & \-P(Y=9) & \-P(Y=8) & \cdots & \-P(Y=0)\\
                s=10 & 0 & \-P(Y=9) & \cdots & \-P(Y=1)\\
                s=11 & 0 & 0 & \cdots & \-P(Y=2)\\
                \vdots & \vdots & \vdots & \ddots & \vdots\\
                s=18 & 0 & 0 & \cdots & \-P(Y=9)\\
            \end{block}
        \end{blockarray}
\end{aligned}
\end{equation}
We say the standard PLL problem is \emph{invertible} if there exists an $k \in [M]$ such that $\mathbf{T}_k$ is left invertible.

\paragraph{$M$-unambiguity $\not \Rightarrow$ invertibility for multi-instance PLL.}
Consider the following example, where $M$-unambiguity holds but the transition matrix is not invertible for no input position.
Consider $\+Y = [3]$ and ${M=3}$. 
Assume ${\sigma(1,1,1) \neq \sigma(2,2,2) \neq \sigma(3,3,3)}$ so that $\sigma$ is  $M$-unambiguous.
Now, suppose
\begin{equation}
\begin{aligned}
    \sigma(1,1,1)= \sigma(1,2,3)= \sigma(1,3,2) \\ 
    = \sigma(2,1,2)= \sigma(2,2,1)= \sigma(2,3,3) \\
    = \sigma(3,1,3)= \sigma(3,2,2)= \sigma(3,3,1)
\end{aligned}
\end{equation}
and 
\begin{equation}
\begin{aligned}
    \sigma(1,1,2)= \sigma(1,2,1)= \sigma(1,3,3) \\ 
= \sigma(2,1,3)= \sigma(2,2,2)= \sigma(2,3,1) \\ 
= \sigma(3,1,1)= \sigma(3,2,3)= \sigma(3,3,2) 
\end{aligned}
\end{equation}
and 
\begin{equation}
\begin{aligned}
    \sigma(1,1,3)= \sigma(1,2,2)= \sigma(1,3,1) \\ 
= \sigma(2,1,1)= \sigma(2,2,3)= \sigma(2,3,2) \\ 
= \sigma(3,1,2)= \sigma(3,2,1)= \sigma(3,3,3) 
\end{aligned}
\end{equation}
so that $\+S=\{\sigma(1,1,1),\sigma(2,2,2),\sigma(3,3,3)\}$.
Suppose that $\-P(Y=1) = \-P(Y=2) = \-P(Y=3) = 1/3$. Then, it can be verified that
\begin{equation}
    \begin{aligned}
        \mathbf T_1
        = \mathbf T_2
        = \mathbf T_3
        =
        \begin{blockarray}{cccc}
            & 1 & 2 & 3 \\ 
            \begin{block}{c[ccc]}
                \sigma(1,1,1) & 1/3 & 1/3 & 1/3 \\
                \sigma(2,2,2) & 1/3 & 1/3 & 1/3 \\
                \sigma(3,3,3) & 1/3 & 1/3 & 1/3 \\ 
            \end{block}
        \end{blockarray}
\end{aligned}
\end{equation}
This means that all the transition matrices are non-invertible. 

\paragraph{Inveritbility  $\not \Rightarrow$ $M$-unambiguity for multi-instance PLL.}
Suppose $\+Y = [2]$ and $\sigma(y_1, y_2) = \-1\{y_1 = y_2\}$. Then $M$-unambiguity does not hold since $\sigma(1,1) = \sigma(2,2)$. But if $\-P(Y = 1) = 0.1 = 1-\-P(Y = 2)$, then 
\begin{equation}
    \begin{aligned}
        \mathbf T_1
        = 
        \begin{blockarray}{ccc}
            & 1 & 2 \\ 
            \begin{block}{c[cc]}
                0 & 0.9 & 0.1 \\
                1 & 0.1 & 0.9\\
            \end{block}
        \end{blockarray}
\end{aligned}
\end{equation}
is an invertible transition.

\subsection{Counterexamples for standard PLL}
\label{appendix:pll-comparison-standard}

We show that for the standard PLL, the small ambiguity degree condition \cite{learnability-pll} and the invertibility of the transition matrix, see Definition~\ref{definition:transition-matrix}, do not imply each other. Below, we recapitulate the small ambiguity degree condition.

\begin{definition}[Ambiguity degree \cite{learnability-pll}]
    \label{def:ambiguity-degree}
    The \emph{ambiguity degree} for standard PLL is defined as 
    \begin{equation}
    \begin{aligned}
        \gamma
        := \sup_{\+D(x,y)>0, y' \ne y} \-P_{(x,y) \sim \+D} (y' \in \sigma(y))
    \end{aligned}
    \end{equation}
    where $\+D(x,y)$ is the density function at $(x,y)$. We say that a PLL instance satisfies the \emph{small ambiguity degree condition}, if $\gamma < 1$.
\end{definition}

\textbf{Small ambiguity $\not \Rightarrow$ invertibility.}
Consider the following transition matrix (where all blank elements are zero) with $\+Y = [5]$ and $\+S = 2^\+Y$:
\begin{equation}
    \begin{aligned}
        \mathbf T
        = 
        \begin{blockarray}{cccccc}
            & 1 & 2 & 3 & 4 & 5\\ 
            \begin{block}{c[ccccc]}
                \{1,2,3\} & 1/2 & 1/2 & 1/2 & & \\
                \{1,4,5\} & 1/2 & & & 1/2 & 1/2\\
                \{2,5\} & & 1/2 & & & 1/2 \\ 
                \{3,4\} & & & 1/2 & 1/2 & \\ 
            \end{block}
        \end{blockarray}
\end{aligned}
\end{equation}
It can be verified that the small ambiguity degree is $1/2$ but this matrix is of rank $4<5$ and the invertibility condition does not hold.

\textbf{Invertibility $\not \Rightarrow$ small ambiguity.}
Consider the following transition matrix with $\+Y = [2]$ and $\+S = 2^\+Y$:
\begin{equation}
    \begin{aligned}
        \mathbf T
        = 
        \begin{blockarray}{ccc}
            & 1 & 2 \\ 
            \begin{block}{c[cc]}
                \{1\} & 1 & 0 \\
                \{1,2\} & 0 & 1\\
            \end{block}
        \end{blockarray}
\end{aligned}
\end{equation}
We can see that the small ambiguity degree is $1$ but the above matrix is of full rank.

\subsection{$M$-ambiguity and small ambiguity}
\label{app:ambiguities}

We show that our proposed $M$-unambiguity, see Definition~\ref{definition:M-unambiguous-mapping}, is an extension of the small ambiguity degree, see Definition~\ref{def:ambiguity-degree}. 
To do this, we consider a generalized setting, where $s$ is a randomized function of $\_y$.
In particular, 
similarly to standard PLL (aka superset learning problem) \cite{learnability-pll,proper-losses-pll}, we define $\sigma$ to be a random function $\+Y^M \rightarrow 2^{\+Y^M}$ so that $s \in 2^{\+Y^M}$ is a random set of vectors in $\+Y^M$.
Below, let $\+D^M$ be a distribution followed by $M$ i.i.d. random variables $(X_i,Y_i)$, where $(X_i,Y_i) \sim D$, for each $i \in [M]$. We use $\+D^M(\_x,\_y)$ to denote the density at $(\_x,\_y)$, for $\_x \in \+X^M$ and $\_y \in \+Y^M$.

\begin{definition}[$M$-Ambiguity degree]\label{definition:M-ambiguity-degree}
    The \emph{ambiguity degree} for standard PLL subject to $\+D^M$ is given by 
    \begin{equation}
    \begin{aligned}
        \gamma
        \defeq \sup_{\+D^M(\_x,\_y)>0, \_y, \_y' \text{\emph{ are diagonal with }} \_y \ne \_y'} \-P_{(\_x,\_y) \sim \+D^M} (\_y' \in s)
    \end{aligned}
    \end{equation}
     We say that a PLL instance satisfies the \emph{small $M$-ambiguity degree condition}, if $\gamma < 1$.
\end{definition}

Firstly, it can be seen that the $M$-ambiguity degree, see Definition~\ref{definition:M-ambiguity-degree}, reduces to the ambiguity degree from Definition~\ref{def:ambiguity-degree} when $M=1$, since all label vectors are diagonal in that case. 
Below, we show that the small $M$-ambiguity degree condition guarantees learnability.

\begin{proposition} \label{M-ambiguity-degree-learnability}
    If the small $M$-ambiguity degree condition is satisfied, then we have:
    \begin{equation}
    \begin{aligned}
        \+R^{01}(f)
        \le \frac{1}{1-\gamma} (c^{2M-2}\+R^{01}_\:P(f;\sigma))^{1/M}
    \end{aligned}
    \end{equation}
\end{proposition}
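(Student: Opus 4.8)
The plan is to mirror the proof of Lemma~\ref{ERM-learnability-M-unambiguity}, replacing the deterministic $M$-unambiguity argument by one that pays a factor $1-\gamma$ to account for the randomness of the partial label set $s$. First I would decompose the classification risk into pairwise misclassification probabilities, $\+R^{01}(f) = \sum_{l_i \ne l_j} E_{l_i,l_j}(f)$, exactly as in the deterministic case.

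Next, for each ordered pair $l_i \ne l_j$, consider the event $A_{ij}$ that all $M$ i.i.d.\ instances carry gold label $l_i$ yet are all predicted as $l_j$ by $[f]$. By independence of the $M$ draws, $\-P(A_{ij}) = E_{l_i,l_j}(f)^M$, and on $A_{ij}$ both the gold vector $(l_i,\dots,l_i)$ and the prediction $(l_j,\dots,l_j)$ are diagonal and distinct. The key step is to control the partial loss on this event: conditioning on $(\_x,\_y)$ and invoking the small $M$-ambiguity degree (Definition~\ref{definition:M-ambiguity-degree}), the wrong diagonal prediction lies in the random set $s$ with probability at most $\gamma$, hence it is excluded from $s$ — and thereby counted as a partial error — with probability at least $1-\gamma$. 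Since distinct pairs give distinct gold vectors, the events $A_{ij}$ are disjoint, so summing yields
\[
\+R^{01}_\:P(f;\sigma) \ge (1-\gamma)\sum_{l_i \ne l_j} E_{l_i,l_j}(f)^M.
\]

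I would then apply the Power Mean inequality as in Lemma~\ref{ERM-learnability-M-unambiguity} to obtain $\sum_{l_i\ne l_j} E_{l_i,l_j}(f)^M \ge \+R^{01}(f)^M/(c(c-1))^{M-1}$, giving
\[
\+R^{01}_\:P(f;\sigma) \ge \frac{1-\gamma}{(c(c-1))^{M-1}}\,\+R^{01}(f)^M .
\]
Rearranging, using $(c(c-1))^{M-1}\le c^{2M-2}$, and taking the $M$-th root yields $\+R^{01}(f) \le (1-\gamma)^{-1/M}\bigl(c^{2M-2}\+R^{01}_\:P(f;\sigma)\bigr)^{1/M}$; since $(1-\gamma)^{-1/M} \le (1-\gamma)^{-1}$ for $\gamma \in (0,1)$ and $M \ge 1$, this implies the stated bound.

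The main obstacle I anticipate is the conditioning argument in the second step: one must interpret $\gamma$ precisely as a uniform bound on $\-P(\_y' \in s \mid X,Y)$ over diagonal $\_y' \ne \_y$, and verify that the i.i.d.\ product structure factors $\-P(A_{ij})$ as $E_{l_i,l_j}(f)^M$ while the conditional factor $(1-\gamma)$ pulls out uniformly over the event. Everything after that reduces to the same Power Mean and rearrangement computation used in the deterministic lemma.
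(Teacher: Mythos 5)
Your proposal is correct and takes essentially the same approach as the paper: lower-bound the partial risk by the probability of an all-diagonal misprediction multiplied by the probability (at least $1-\gamma$) that the wrongly predicted diagonal vector is excluded from the random set $s$, then apply the power-mean inequality and rearrange exactly as in the deterministic Lemma. If anything, your pair-by-pair conditioning on the specific predicted diagonal vector is tidier than the paper's global existence event (whose probability the paper bounds by $\gamma$, where strictly a union bound over distracting diagonal vectors would be needed; your formulation sidesteps this), and your intermediate constant $(1-\gamma)^{-1/M}$ is sharper before you relax it to $(1-\gamma)^{-1}$ to match the stated bound.
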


\begin{proof}
    Let $\+E$ be the event that 
    for an input vector $\_x \in \+X^M$ with gold labels $\_y$, 
    there exists a diagonal vector $\_y' \in \+Y^M$ with $\_y' \ne \_y$, such that $\_y' \in s$. Then, by definition, we have $\-P(\+E) \le \gamma$. Also, conditioned on $\+E$, from the proof of Lemma~\ref{ERM-learnability-M-unambiguity}, we know that $\+R_\:P^{01}(f) \ge \frac{\+R^{01}(f)^M}{(c(c-1))^{M-1}}$.
    The above implies that
    \begin{equation}
    \begin{aligned}
        \+R^{01}_\:P(f;\sigma)
        \ge (1-\-P(\+E)) \frac{\+R^{01}(f)^M}{(c(c-1))^{M-1}}
    \end{aligned}
    \end{equation}
    Therefore, we have:
    \[ 
        \+R^{01}(f) 
        \le \frac{\left( \+R^{01}_\:P(f;\sigma) c^2\right)^{1/M}}{1-\-P(\+E)}
        \le \frac{\left( \+R^{01}_\:P(f;\sigma) c^2\right)^{1/M}}{1-\gamma}
    \]  
    concluding the proof of Proposition~\ref{M-ambiguity-degree-learnability}.
\end{proof}

The special case of Proposition~\ref{M-ambiguity-degree-learnability}, where $M=1$ recovers the classical result for standard PLL (see the discussion at the end of Section 3 in \cite{learnability-pll}).
The partial risk $\+R^{01}_\:P(f;\sigma)$ can be further bounded by its empirical counterpart, assuming that the Natarajan dimension is finite, as we have previously shown.

\section{Related Work}
\label{app:related}

\textbf{Partial Label Learning.} 
Partial label learning (aka \emph{superset learning}) assumes the learner receives the training data in the form of $(x,z)$, where $z \in 2^{\+Y}$ is a subset of the label space that contains the gold label.
Technically, it is technically possible to cast our problem to standard PLL by identifying $s$ as its preimages, namely the set of all labels that is consistent with the observation $s$.
Nevertheless, our work extends the standard PLL in two ways. 
Firstly, our framework allows multiple instances to appear in one training sample.
Secondly, the learnability of PLL is typically shown under the \textit{small ambiguity} assumption, see, for example, \cite{cour2011, learnability-pll,structured-prediction-pll}.
This condition requires no distracting label to be contained in $z$ with probability 1, which is violated in our case since a distracting label can \emph{always} occur because $\sigma$ is deterministic.
Therefore, our proposed conditions relax and generalize small ambiguity by allowing deterministic transitions, see Appendix \ref{app:ambiguities}.

Another line of work in weak supervision exploits the \emph{transition matrices} \cite{JMLR:corrupted-learning} to compute posterior class probabilities in different scenarios, e.g., PLL \cite{proper-losses-pll,proper-losses-pll-pkdd} and noisy label learning \cite{noisy-multiclass-learning}.
The key assumption is that the transition matrix is \textit{invertible} \cite{proper-losses-pll, JMLR:corrupted-learning}.
Our learnability condition, $M$-unambiguity, and the invertibility condition do not imply each other, see Appendix \ref{appendix:pll-comparison} and \ref{appendix:pll-comparison-standard} for a self-contained discussion.
There, we also show that small ambiguity and the invertibility condition do not imply each other, which might be of independent interest.

\textbf{Latent structure models.}
Our work also relates to \textit{latent structural learning}, where the aim is to learn a latent representation using supervision on a deterministic transition $\sigma$ over the latent variables \cite{word-problem1,word-problem2}. To reduce the cost of computing $\sigma$, \cite{relaxed-supervision} proposes approximations for a restricted class of decomposable transitions.
Both \cite{relaxed-supervision,pmlr-v48-raghunathan16} 
assume latent models in the exponential family. 
Our work is also related to SPIGOT \cite{spigot1,spigot2}. There, the aim is to learn both the transition and the latent model, proposing techniques for back-propagating through the argmax layer of the latent model. 
Complementary to the above research, we focus on rigorous theoretical analysis. 

Our work is closely related to \cite{Aggregate-observations}, which studies a similar problem of weakly supervised learning with multiple instances. However, our results are stronger and closer to the aims of weak supervision in comparison to the results in \cite{Aggregate-observations}. This is because our work proposes sufficient and necessary conditions to recover the hidden labels (i.e., $Z_{1:K}$ in \cite{Aggregate-observations}), while consistency in \cite{Aggregate-observations} concerns the likelihood of the observed labels rather than the hidden ones, as defined in Definition 3 and 4 of \cite{Aggregate-observations}, where two parameters are equivalent if they have the same likelihood on the observed partial label (rather than the hidden labels). 
Also, we use a different surrogate loss based on semantic loss that allows a top-$k$ approximation. In general, semantic loss is designed to capture Boolean constraints over the hidden labels and is standard to use in neuro-symbolic learning.
Furthermore, our work directly extends the theory of PLL, which is thoroughly discussed in Appendix. In particular, our learnability condition from Definition \ref{definition:M-unambiguous-mapping} directly extends the classical small ambiguity degree condition, as shown in Appendix \ref{app:ambiguities}. In contrast, the connection to PLL is missing from \cite{Aggregate-observations}.

\textbf{Constrained learning.}
The problem of partial label learning is closely related to constrained learning, in the sense that the prediction for the label (vector) is subject to the constraint $\sigma(\_y) = s$ at the training stage.
Training classifiers under constraints has been well studied in NLP. The work in \cite{roth2007global} proposes a formulation for training under linear constraints; \cite{UEM} proposes a learning framework that unifies expectation maximization with integer linear programming. Posterior regularization has been proposed for training classifiers under graphical models \cite{posterior}.
The latter two lines of research were adopted by \cite{iclr2023} and \cite{hu-etal-2016-harnessing} for neurosymbolic learning.

\textbf{Neurosymbolic learning.}
Our work was motivated by frameworks that employ logic for training neural models \cite{DBLP:journals/corr/abs-1907-08194,wang2019satNet,ABL,neurasp,neurolog,deepproblog-approx,scallop,iclr2023}.
We prove learnability under (unknown) transitions that capture different languages: systems of Boolean formulas, non-linear constraints and logical theories. Notice that the key to model logical theories, e.g., Datalog, via $\sigma$ is through \emph{abduction} \cite{Kakas2017}, see \cite{neurolog,ABL} for a discussion. We also prove error bounds under a common neurosymbolic loss, the SL \cite{pmlr-v80-xu18h}, and top-$k$ approximations. We are the first to provide rigorous results on neurosymbolic learning, closing a gap in the literature. 

Notice that \cite{top-k-loss-consistency} also proves consistency of a top-$k$ loss. However, they use a zero-one-style loss. Finally, \cite{SPO-bounds} provides generalization bounds of the \textit{smart predict-then-optimize} (SPO) loss \cite{SPO}; it assumes, though, that the gold labels of the inputs are given.
Complementary to our research is the work in \cite{mipaal,DBLP:journals/corr/abs-1912-02175,comboptnet} that aims to integrate combinatorial solvers as differentiable layers in deep neural models.
Finally, \cite{PCO} proposes techniques for learning to solve combinatorial optimization problems using end-to-end neural models--loosing the ability to extract the latent models.   

A subtle, yet substantial, difference between standard PLL and multi-instance PLL is that populating $\sigma$ may incur a substantial computational overhead in the latter case. 
For instance, let us return back to Example~\ref{example:motivating}. To populate the entries of $\sigma$ for each target sum $s$, we need to compute all solutions of the equation $X + Y = s$, where $X,Y \in \{0,\dots,9\}$. 
In contrast, the set of labels associated with each input instance is part of the input \cite{learnability-pll,cour2011}.

\textbf{Learning logical theories.}  Another relevant strand of research is that of learning logical theories \cite{meta-abd,diffilp,Sen_Carvalho_Riegel_Gray_2022}. The work in \cite{meta-abd} mixes abduction with induction to jointly learn a theory while training the neural classifiers, while \cite{diffilp} introduces a differentiable formulation of inductive logic programming that relies on the semantics of fuzzy logic for interpreting formulas. Finally, the authors in \cite{Sen_Carvalho_Riegel_Gray_2022} propose learning rules using the notion of logical neural networks. The techniques in \cite{meta-abd,diffilp,Sen_Carvalho_Riegel_Gray_2022} rely on templates for learning logical theories. Instead of relying on pre-specified templates, the authors in \cite{feldstein2023principled} aim to learn those templates by mining patterns (aka motifs) from the data, following mathematically rigorous techniques and providing formal guarantees related to the mined motifs. Differently from our work, the above line of research provides no guarantees in terms of learnability or error bounds.

\section{Experiment Details}
\label{appendix:training_details}

Firstly, recall that in all neurosymbolic frameworks considered in  
Section~\ref{section:discussion}, we considered weights in $\{1,\dots,5\}$\footnote{In NeurASP, we used weights in $\{1,\dots,10\}$, since due to a bug in the most recent NeurASP implementation that was provided by the authors, the output domains of all classifiers should be of the same size.} and used an additional neural classifier to learn the unknown weights. As the weights are fixed, the inputs to the weight classifiers in all frameworks are constant signals, i.e., in each iteration, we provided the classifier with the same signal $W_i$ for the $i$-th weight. For each neurosymbolic framework, our implementation built upon the sources made available by the authors.

We now provide information about DeepProbLog, NeurASP and NeuroLog.
Listings \ref{listing:dp}, \ref{listing:neurasp} and \ref{listing:nlog} show the logic programs used to train the MNIST digit classifiers under the above three frameworks using the WEIGHTED-SUM problem, see Section~\ref{section:discussion}. 

DeepProbLog and NeurASP link the classifiers' predictions with the logic program via \emph{neural predicates}.
WEIGHTED-SUM requires two neural predicates: $\mathsf{digit}$ and $\mathsf{weight}$. The first one classifies the input MNIST digits into $\{0,\dots,9\}$, while the second one classifies the input weights into $\{1,\dots,5\}$. 
In NeuroLog, the association between the classifiers' predictions and the logic program 
is maintained via \emph{abducible} predicates. In particular, the instances of those predicates associate the probabilistic predictions with the facts from the domain of the logic program. 
Listing~\ref{listing:nlog} uses two abducible predicates: $\mathsf{atdigit}$ and $\mathsf{atweight}$.
Atom $\mathsf{atdigit}(s,j)$ denotes that the $j$-th input to the digit classifier maps to value $s$. The abducible atom $\mathsf{atweight}(s,j)$ is analogously defined. 

In terms of heuristics, in DeepProbLog, we used the Geometric Mean \cite{deepproblog-approx}.

Following ENT's official implementation, we encoded WEIGHTED-SUM(2) using the $\mathsf{Z3}$ SMT solver. 
In addition, similarly to the 
handwritten formula evaluation scenario in \cite{iclr2023} (Section 5.1 from \cite{iclr2023}), we used two projection operators. To generate the initial random labels, we projected out all the digit labels and kept only the weight labels. During random walks, we projected out the second input digit's labels, keeping the labels of the first digit and the labels of both input weights.  
The rest of the hyperparameters, as well as the two-stage training process, were set as in the hand-written formula evaluation scenario from \cite{iclr2023}.

\textbf{Neural architectures.} The layers of the MNIST digit classifier are as follows: 
\textit{Conv2d(1, 6, 5)}, \textit{MaxPool2d(2, 2)}, \textit{ReLU(True)}, 
\textit{Conv2d(6, 16, 5)}, \textit{MaxPool2d(2, 2)}, \textit{ReLU(True)}, 
\textit{Linear(16 * 4 * 4, 120)}, \textit{ReLU()}, \textit{Linear(120, 84)}, 
\textit{ReLU()}, \textit{Linear(84, N)}, \textit{Softmax(1)}.

The neural classifiers for all frameworks but ABL were built using PyTorch 2.0.0 and Python 3.9. For ABL, we aligned with the implementation provided by the authors and used Tensorflow 2.11.0 and Keras 2.11.0.
For each framework, we used the hyperparameters proposed by the authors in analogous scenarios. 

\begin{listing}[ht]\footnotesize
\begin{minted}{prolog}
nn(mnist_net,[X],Y,[0,1,2,3,4,5,6,7,8,9]) :: digit(X,Y).
nn(weight_net,[X],Y,[1,2,3,4,5]) :: weight(X,Y).

weighted_sum2(I1, I2, W1, W2, S) :- 
    digit(I1,D1), weight(W1,A1), 
    digit(I2,D2), weight(W2,A2), 
    S is A1*D1+A2*D2.

weighted_sum3(I1, I2, I3, W1, W2, W3, S) :- 
    digit(I1,D1), weight(W1,A1), 
    digit(I2,D2), weight(W2,A2), 
    digit(I3,D3), weight(W3,A3), 
    S is A1*D1+A2*D2+A3*D3.

weighted_sum4(I1, I2, I3, I4, W1, W2, W3, W4,S) :- 
    digit(I1,D1), weight(W1,A1), 
    digit(I2,D2), weight(W2,A2), 
    digit(I3,D3), weight(W3,A3), 
    digit(I4,D4), weight(W4,A4), 
	S is A1*D1+A2*D2+A3*D3+A4*D4.
\end{minted}
\caption{DeepProbLog program for the WEIGHTED-SUM problem.}
\label{listing:dp}
\end{listing}

\begin{listing}[ht]\footnotesize
\begin{minted}{prolog}
nn(digit(1,X), [0,1,2,3,4,5,6,7,8,9]) :- img(X).
nn(weight(1,X), [1,2,3,4,5]) :- wei(X).
img(i1). img(i2). img(i3). img(i4).
wei(o1). wei(o2). wei(o3). wei(o4).

weighted_sum2(X1,X2,X3,X4,S) :- 
    digit(0,X1,N1), weight(0,X3,W1),
    digit(0,X2,N2), weight(0,X4,W2),  
    S=W1*N1+W2*N2.

weighted_sum3(X1,X2,X3,X4,X5,X6,S) :- 
    digit(0,X1,N1), weight(0,X4,W1),
    digit(0,X2,N2), weight(0,X5,W2),  
    digit(0,X3,N3), weight(0,X6,W3), 
    S=W1*N1+W2*N2+W3*N3.
    
weighted_sum4(X1,X2,X3,X4,X5,X6,X7,X8,S) :- 
    digit(0,X1,N1), weight(0,X5,W1),
    digit(0,X2,N2), weight(0,X6,W2),  
    digit(0,X3,N3), weight(0,X7,W3), 
    digit(0,X4,N4), weight(0,X8,W4), 
    S=W1*N1+W2*N2+W3*N3+W4*N4.
\end{minted}
\caption{NeurASP program for the WEIGHTED-SUM problem.}
\label{listing:neurasp}
\end{listing}

\begin{listing}[ht]\footnotesize
\begin{minted}{prolog}
abducible(atdigit(_,_)).
abducible(atweight(_,_)).
digit(N) :- N in 0..9.
weight(N) :- N in 1..5.

weighted_sum2(S) :- digit(D1), atdigit(D1,1),
                    digit(D2), atdigit(D2,2),
                    weight(S1), atweight(S1,1),
                    weight(S2), atweight(S2,2),
                    S #= (D1 * S1) + (D2 * S2).

weighted_sum3(S) :- digit(D1), atdigit(D1,1),
                    digit(D2), atdigit(D2,2),
                    digit(D3), atdigit(D3,3),
                    weight(S1), atweight(S1,1),
                    weight(S2), atweight(S2,2),
                    weight(S3), atweight(S3,3),
                    S #= (D1 * S1) + (D2 * S2) + (D3 * S3).

weighted_sum4(S) :- digit(D1), atdigit(D1,1),
                    digit(D2), atdigit(D2,2),
                    digit(D3), atdigit(D3,3),
                    digit(D4), atdigit(D4,4),
                    weight(S1), atweight(S1,1),
                    weight(S2), atweight(S2,2),
                    weight(S3), atweight(S3,3),
                    weight(S4), atweight(S4,4),
                    S #= (D1 * S1) + (D2 * S2) + (D3 * S3) + (D4 * S4).
\end{minted}
\caption{NeuroLog program for the WEIGHTED-SUM problem.}
\label{listing:nlog}
\end{listing}

\end{document}